\documentclass{article}
\usepackage{iclr2027_conference,times}

\iclrfinalcopy 

\usepackage{amsmath,amsfonts,bm}

\def\eqref#1{equation~\ref{#1}}

\def\1{\bm{1}}

\DeclareMathAlphabet{\mathsfit}{\encodingdefault}{\sfdefault}{m}{sl}
\SetMathAlphabet{\mathsfit}{bold}{\encodingdefault}{\sfdefault}{bx}{n}

\newcommand{\E}{\mathbb{E}}

\newcommand{\R}{\mathbb{R}}

\newcommand{\Cov}{\mathrm{Cov}}

\usepackage[utf8]{inputenc}
\usepackage[T1]{fontenc}
\usepackage{xcolor}
\usepackage{url}
\usepackage{graphicx}

\usepackage[normalem]{ulem}
\usepackage{amsthm}
\usepackage{amssymb}
\usepackage{booktabs}
\usepackage{array}
\usepackage{colortbl}
\usepackage{tabularx}
\usepackage{algorithm,algorithmic}
\usepackage{hyperref}
\hypersetup{hidelinks}
\newtheorem{proposition}{Proposition}

\newtheorem{corollary}{Corollary}
\newtheorem{assumption}{Assumption}
\theoremstyle{remark}
\newtheorem{remark}{Remark}
\newtheorem{innerprop}{Proposition}
\newenvironment{restateprop}[1]{\renewcommand{\theinnerprop}{#1}\innerprop}{\endinnerprop}
\newcommand{\tr}{\operatorname{tr}}
\newcommand{\T}{\mathsf{T}}

\definecolor{PALOriginalColor}{HTML}{A34854}
\definecolor{PALReplacementColor}{HTML}{00796B}
\newcommand{\PALoriginal}[1]{{\color{PALOriginalColor}#1}}
\newcommand{\PALalt}[1]{\item[\textcolor{PALReplacementColor}{\scriptsize\bfseries UAL}]\begingroup\normalfont\color{PALReplacementColor}#1\par\endgroup}
\newcommand{\PALlegend}{\par\noindent{\normalfont\footnotesize
\textcolor{PALOriginalColor}{Original operation}\enspace|\enspace
\textcolor{PALReplacementColor}{UAL replacement}\enspace|\enspace
\textcolor{black}{Shared steps}}\par}

\title{Diffusion Posterior Sampling with Uncertain Forward Models}

\author{Wentao Shangguan \& Vivek K Goyal  \\
Department of Electrical and Computer Engineering\\
Boston University\\
Boston, MA 02215, USA \\
\texttt{wentaos@bu.edu, v.goyal@ieee.org}
}

\begin{document}

\maketitle

\lhead{} 
\renewcommand{\headrulewidth}{0pt}

\begin{abstract}
Diffusion posterior samplers reconstruct signals by combining a pretrained diffusion prior with a measurement likelihood specified by a forward model and a noise model.
Calibration and reference data can characterize uncertainty in the forward model's parameters, such as blur kernels or detector responses.
Fixing these parameters at estimated values leaves their uncertainty out of the likelihood, and parameter errors are thus absorbed into the reconstruction.
We introduce an uncertainty-aware likelihood (UAL), a Gaussian approximation to the marginal likelihood obtained by integrating over the uncertain parameters.
For each candidate signal, UAL propagates parameter uncertainty into a structured measurement covariance, reducing the quadratic penalty  along residual directions that parameter variation can explain.
Existing samplers can use UAL in their data updates while
retaining their pretrained diffusion priors and sampling schedules.
Under a fixed Gaussian nuisance model, we bound likelihood error from linearization and average divergence between the exact marginal and UAL posteriors.
We compare samplers with and without UAL across six inverse problems spanning Gaussian and motion deblurring, CT, MRI, ultrasound, and fluorescence microscopy, and find improvements in reconstruction and ensemble prediction.
UAL reduces geometric mean MSE by 9\% to 20\% across samplers in the main Gaussian-deblurring setting and by 51\% to 92\% in motion deblurring when non-blind samplers use the population-mean kernel.
In motion deblurring, DAPS and PnP-DM with UAL also give 6.2 to 6.4 dB higher ensemble-mean PSNR than published BlindDPS and Fast Diffusion EM.
On the ablation set, UAL outperforms trace-matched isotropic inflation for PnP-DM in both deblurring tasks; in motion deblurring, it also exceeds the best-tested fixed scalar data weighting by 3.3 dB for PnP-DM and 4.2 dB for DAPS in average PSNR.
\end{abstract}

\edef\UALsavedTextFloatSep{\the\textfloatsep}
\setlength{\textfloatsep}{12pt plus 2pt minus 2pt}
\begin{figure}[t]
\setlength{\abovecaptionskip}{4pt}
\centering
\includegraphics[width=\linewidth]{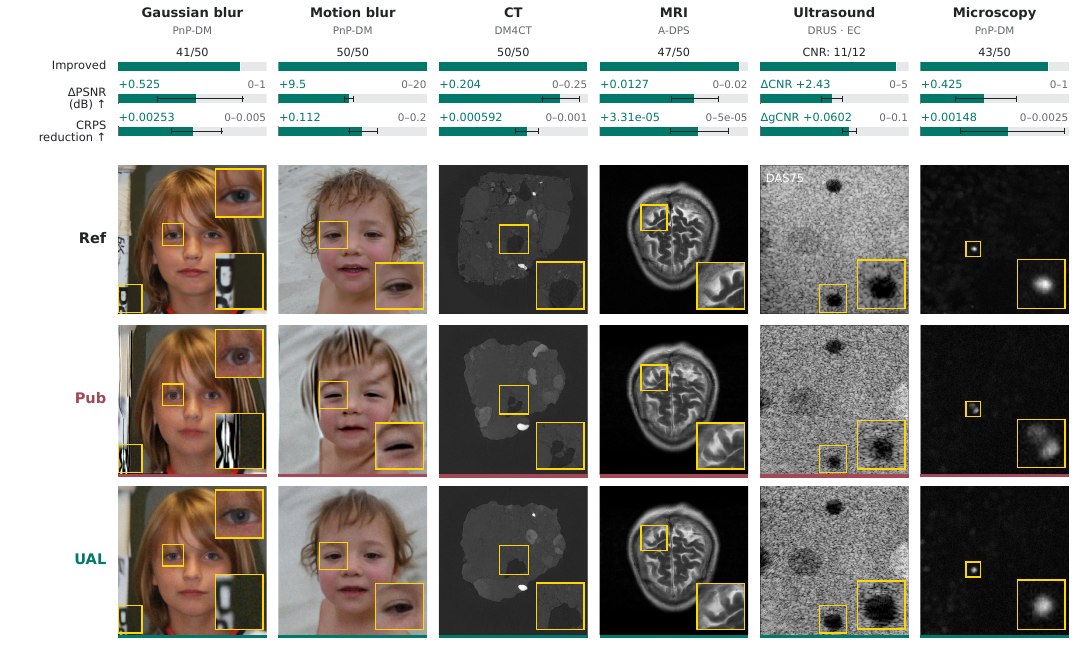}
\caption{Reconstruction examples and quantitative effects of UAL across six inverse problems. Reconstruction rows show selected individual trajectories; bars summarize each evaluation set on task-specific scales.
 Ultrasound reports CNR and gCNR computed within each reconstruction on the contrast phantom, with one SE across its 12 paired trajectories; the delay-and-sum reference (DAS75) is shown. Other tasks report PSNR and CRPS with paired-bootstrap whiskers.}
\label{fig:teaser} 
\end{figure}

\section{Introduction}
Diffusion posterior samplers reconstruct signals by combining a pretrained diffusion prior~\citep{ho2020ddpm,song2021sde} with a measurement likelihood specified by a forward model and an observation noise model~\citep{chung2023dps,wu2024pnpdm}.
 Parameters in the forward model, such as a blur kernel or
detector response, can remain uncertain after calibration.
We treat these quantities as nuisance parameters:
they affect the measurements but estimating them is not an explicit goal. Calibration and reference data provide nuisance parameter estimates and can also describe their uncertainty. Fixing these quantities at one estimate, however, leaves that uncertainty out of the reconstruction likelihood.
A reconstruction process can then distort the signal to fit components of residuals caused by nuisance parameter error.

In motion deblurring, for example, a kernel bank describes variation in blur direction and extent across
exposures. For a candidate image and a fixed kernel, the residual between the observed and predicted
measurements combines image error, kernel error, and observation noise.
A point likelihood, evaluated at one fixed kernel, penalizes this residual using only the
observation noise model. For each candidate image, however, the kernel statistics and forward model predict correlated measurement changes caused by kernel variation.
Residuals along these predicted directions provide weaker evidence of image error than residuals with the same observation-noise penalty that kernel variation cannot explain, so they should receive smaller quadratic penalties.
The same effect arises in CT, where uncertainty in the flat and dark reference fields used for calibration changes the corrected projections even for fixed acquired data~\citep{shi2026dm4ct}.

Non-blind diffusion posterior samplers condition on a fixed measurement
model~\citep{chung2023dps,kawar2022ddrm,zhang2025daps}, whereas blind methods sample or estimate the nuisance parameters together with the signal~\citep{chung2023blinddps,murata2023gibbsddrm,wang2024dmplug}. Some blind samplers, such as BlindDPS, train an additional diffusion prior for the nuisance parameters~\citep{chung2023blinddps}.

We instead marginalize forward-model parameters using an uncertainty model specified by calibration and reference information. Let $x$ denote the
unknown signal, $y$ the measurements, $\psi$ the nuisance parameters, and $q$ a fixed uncertainty
distribution for $\psi$.
The marginal likelihood is
\[
  p_{\mathrm{marg}}(y\mid x)=\E_{\psi\sim q}\!\left[p(y\mid x,\psi)\right].
\]
We introduce an uncertainty-aware likelihood (UAL), a Gaussian approximation to the nuisance-marginal likelihood that allows this direction-dependent weighting to be incorporated into existing diffusion posterior samplers.
At a candidate signal, UAL linearizes the forward model in the nuisance parameters and uses their mean and covariance to construct the measurement likelihood. UAL is exact for affine nuisance dependence with Gaussian nuisance parameters and independent Gaussian observation noise; for non-Gaussian $q$, UAL uses a Gaussian with the mean and covariance of the linearized measurement model (Section~\ref{sec:marginal}).

For a fixed Gaussian nuisance model, we bound likelihood error from linearization and the resulting posterior divergence averaged over measurements
from the model (Corollary~\ref{cor:posterior-error}).
In a fixed-sensitivity linear model with known signal covariance and matched nuisance moments, the linear estimator using UAL
minimizes mean squared reconstruction error among linear estimators, averaged over signals,
nuisance parameters, and observation noise.
 At a fixed nuisance realization,
the point-likelihood estimator can instead have lower error.
For a single nuisance parameter and distinct estimators, we give the crossover distance
from the nuisance mean in closed form (Appendix~\ref{app:linear-risk}).

UAL incorporates calibration and reference statistics into existing samplers\textquoteright{} data updates while retaining their pretrained priors and sampling schedules.
 We evaluate it with non-blind and blind diffusion samplers on synthetic and acquired measurements: Gaussian and motion deblurring, CT, MRI, ultrasound, and fluorescence microscopy.
A simpler alternative inflates the observation-noise variance, which in effect weakens the data term.
For PnP-DM, the best tested scalar inflation gives mean per-trajectory PSNR of 27.03 dB in Gaussian deblurring and 23.37 dB in motion deblurring. UAL adds variance only along directions that parameter variation can explain and gives 27.48 and 26.64 dB, respectively (Section~\ref{sec:exp-ablations}).
UAL also improves ensemble prediction: it lowers the mean CRPS, energy, and local variogram scores for every sampler and application in Table~\ref{tab:exp-deblurring}.  In Gaussian deblurring, UAL more closely matches the numerically marginalized likelihood and image score than the point and isotropic alternatives; covariance controls show the reconstruction benefit of structured, candidate-dependent weighting (Section~\ref{sec:exp-ablations}).
Applying UAL requires nuisance means and covariances and a forward model for propagating their variation; no additional diffusion-model training is needed.
Covariance factorization reduces evaluation to forming and solving a system in nuisance coordinates.
In the configurations of Table~\ref{tab:runtime}, UAL takes at most three times the published sampler's wall time per trajectory, except DAPS in motion deblurring at 7.1 times (Appendix~\ref{app:runtime}).

\section{Background and Related Work}
\label{sec:background}

\paragraph{Diffusion priors for inverse problems.}

For fixed nuisance parameters $\psi$ and a signal prior $p_0(x)$ that does not depend on $\psi$, Bayes' rule
gives $p(x\mid y,\psi)\propto p(y\mid x,\psi)p_0(x)$, where $p(y\mid x,\psi)$ is the measurement
likelihood. A pretrained diffusion model provides the prior through its denoisers or
scores~\citep{ho2020ddpm,song2021sde}.
 The likelihood enters diffusion reconstruction through three common data-update forms: gradient guidance, data-fidelity optimization, and sampling-based updates. The likelihood at a noisy diffusion state $x_t$,
$p(y\mid x_t,\psi)$, averages over possible clean signals and is
generally intractable.
DPS-style gradient guidance approximates it using a
denoised estimate $\hat x_0(x_t)$ and differentiates through that estimate~\citep{chung2023dps}.
$\Pi$GDM propagates Gaussian approximations of clean-signal uncertainty through a fixed measurement operator to approximate the noisy-state likelihood~\citep{song2023pseudoinverse}.
Data-fidelity optimization solves a subproblem between denoising steps~\citep{zhu2023diffpir}.
Sampling-based updates use the measurement in spectral coordinates~\citep{kawar2022ddrm}, in a split Gibbs
sampler~\citep{wu2024pnpdm}, or in annealed conditionals~\citep{zhang2025daps}. Diffusion priors with fixed-operator measurement models are also used in
MRI~\citep{aali2025adps}, ultrasound~\citep{10.1007/978-3-031-53767-7_19},
and CT~\citep{shi2026dm4ct}.
In these fixed-operator formulations, the likelihood conditions on a single nuisance-parameter value. For plug-and-play unadjusted Langevin sampling, \citet{renaud2024mismatched} bound posterior error under mismatched measurement models and denoisers.
UAL instead accounts for uncertainty in the forward-model parameters and enters each sampler through its existing data update (Section~\ref{sec:seam}).

\paragraph{Blind inverse problems.}

Blind reconstruction treats the nuisance parameters as additional inference variables. Joint
samplers place a prior on them and sample them with the signal: BlindDPS couples an image diffusion
model and an operator diffusion model through a shared measurement
likelihood~\citep{chung2023blinddps}; GibbsDDRM alternates approximate posterior updates under a
generic operator prior~\citep{murata2023gibbsddrm}; and PRISM pairs a measurement-conditioned kernel
prior with an unconditional image prior in a split Gibbs sampler~\citep{hu2026prism}. Estimation
methods fit them instead: Fast Diffusion EM updates the operator in a maximization step from image
samples, with denoiser-based kernel regularization~\citep{laroche2024fastem}; and DMPlug optimizes the
input to a pretrained diffusion generator jointly with the operator
parameters~\citep{wang2024dmplug}. Kernel Diffusion samples a conditional kernel distribution and
uses a non-blind method to reconstruct the image~\citep{sanghvi2024kerneldiff}. Under the same joint
model, exact joint inference and exact marginalization give the same signal posterior; the methods
differ in how nuisance uncertainty reaches the signal update. UAL approximates the marginal
measurement likelihood from a specified uncertainty model, without training an additional diffusion
model for the nuisance parameters.
 Appendix~\ref{app:seams} gives the image and operator updates for the evaluated samplers.

\paragraph{\texorpdfstring{Calibration and reference information.}{Calibration and reference information.}}

We use calibration data and reference samples to specify the means and covariances of uncertain forward-model parameters.
Repeated CT flat and dark frames characterize variation in calibration fields~\citep{shi2026dm4ct}, and principal-component representations of flat fields support dynamic projection normalization~\citep{vannieuwenhove2015flatfields}.
A motion-kernel bank describes variation in blur across exposures.
ESPIRiT derives coil sensitivity maps from calibration data and permits multiple maps when a single-map model is insufficient~\citep{uecker2014espirit}.
Differences between the sound speeds provided with CUBDL acquisitions and their optimized values set the uncertainty scale for our PICMUS phantom reconstructions~\citep{hyun2021cubdl,liebgott2016picmus}.
We represent the parameter variation by a distribution $q$ and use its mean and covariance to construct UAL (Section~\ref{sec:setting}).

\paragraph{Marginalization and UAL.}

Errors-in-variables models account for measurement error in the model's input variables~\citep{fuller2009measurement}.
\citet{wiesel2008gaussian} derive maximum-likelihood estimators for linear regression with Gaussian uncertainty in the model matrix, and \citet{beck2010structured} extend them to structured perturbations.
\citet{kumar2013location} linearize sampling-location errors to obtain a Gaussian likelihood
with signal-dependent variance, then update the variance weights in an iterative linear estimator.
Using Bayesian approximation-error modeling~\citep{kaipio2007statistical,kaipio2013approximate,nissinen2008approximation}, \citet{mozumder2016} approximately marginalize
uncertain optical coefficients with an enhanced Gaussian error model whose moments are estimated
from prior samples and held fixed during reconstruction.
For partially known random blurs, covariance weighting gives the classical LMMSE restoration filter~\citep{galatsanos2000hierarchical}. UAL adapts this Gaussian treatment of forward-model uncertainty to diffusion posterior sampling through existing samplers' data updates, and we test it against numerical marginalization and scalar variance inflation.

\section{Method}

\subsection{\texorpdfstring{Forward model and UAL construction}{Forward model and UAL construction}}
\label{sec:setting}
\label{sec:marginal}

Let $x\in\R^n$ denote the unknown signal (an image in our applications),
$y\in\R^m$ the measurement, and
$\psi\in\R^P$ the nuisance parameters. We write the measurement model as
\begin{equation}
  y=\mathcal A(x,\psi)+e,\qquad e\sim\mathcal N(0,{\Sigma_e}),\qquad {\Sigma_e}\succ0,
  \label{eq:model}
\end{equation}
where $\mathcal A$ is the forward model and $e$ is zero-mean Gaussian observation noise with covariance $\Sigma_e\in\R^{m\times m}$.
We represent complex measurements by stacking their real and imaginary parts.

Let $q$ be the nuisance uncertainty model used for reconstruction, with mean $\mu=\E_q\psi$ and covariance $\Sigma_\psi=\Cov_q\psi\succeq0$.
This distribution can describe variation in a reference population or uncertainty in calibrated parameters; in controlled experiments, it can be specified directly.
We hold the nuisance model fixed and model $x$ and $\psi$ as independent; the measurement noise is independent of both.

The marginal likelihood and its signal posterior are
\begin{equation}
  p_{\mathrm{marg}}(y\mid x)
  =\E_{\psi\sim q}\!\left[\mathcal N\!\left(y;\mathcal A(x,\psi),{\Sigma_e}\right)\right],
  \qquad
  p_{\mathrm{marg}}(x\mid y)\propto p_0(x)\,p_{\mathrm{marg}}(y\mid x),
  \label{eq:physical-target}
\end{equation}
where $p_0$ is the signal prior. The expectation also covers a discrete
reference population. By comparison, a point likelihood evaluates
$\mathcal N(y;\mathcal A(x,\psi_0),{\Sigma_e})$ at a single parameter value $\psi_0$.
UAL approximates the expectation in (\ref{eq:physical-target}), while keeping
the signal prior fixed.

For each candidate signal $x$, we choose a reference $\psi_0$ and linearize the forward model in its nuisance parameters. With sensitivity $J_\psi(x)$ and remainder $r_x(\psi)$, we write
\begin{equation}
  \mathcal A(x,\psi)
  =\mathcal A(x,\psi_0)+J_\psi(x)(\psi-\psi_0)+r_x(\psi),
  \qquad
  J_\psi(x)=D_\psi\mathcal A(x,\psi_0).
  \label{eq:expansion}
\end{equation}
Dropping $r_x(\psi)$ and taking the mean and covariance over $\psi\sim q$ and the independent noise $e$ gives
\begin{equation}
  \bar y(x)=\mathcal A(x,\psi_0)+J_\psi(x)(\mu-\psi_0),
  \qquad
  S(x)={\Sigma_e}+J_\psi(x){\Sigma_\psi} J_\psi(x)^\T .
  \label{eq:S}
\end{equation}
Nuisance uncertainty thus induces structured measurement uncertainty that may depend on $x$.

UAL is the Gaussian density
$p_{\mathrm{UAL}}(y\mid x;\psi_0,q)=\mathcal N(y;\bar y(x),S(x))$
for this fixed local model.
The mean correction $J_\psi(x)(\mu-\psi_0)$ accounts for the difference
between the reference and the nuisance mean (Remark~\ref{rem:offset}).
It vanishes at $\psi_0=\mu$.

\begin{proposition}[Exact affine-Gaussian marginal]
\label{prop:marginal}
At fixed $x$, suppose $\mathcal A(x,\psi)$ is affine in $\psi$, $q=\mathcal N(\mu,\Sigma_\psi)$, and $e$ is independent of $\psi$. Then $p_{\rm marg}(y\mid x)=\mathcal N(y;\bar y(x),S(x))$, with moments given by (\ref{eq:S}).
For nonlinear dependence on $\psi$, the same moments are exact for the linearized forward model in (\ref{eq:expansion}).
\end{proposition}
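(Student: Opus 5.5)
The plan is to represent $y$, at fixed $x$, as an affine image of the jointly Gaussian vector $(\psi,e)$, and then read off its mean and covariance.

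Affineness of $\mathcal A(x,\cdot)$ means the expansion (\ref{eq:expansion}) holds with $r_x\equiv 0$. Thus for every $\psi$,
\[
\mathcal A(x,\psi)=\mathcal A(x,\psi_0)+J_\psi(x)(\psi-\psi_0),
\]
where $J_\psi(x)=D_\psi\mathcal A(x,\psi_0)$ is the constant linear part, independent of $\psi_0$. Under the joint model, $y=\mathcal A(x,\psi_0)+J_\psi(x)(\psi-\psi_0)+e$ with $\psi\sim\mathcal N(\mu,\Sigma_\psi)$ and $e\sim\mathcal N(0,\Sigma_e)$ independent. The pair $(\psi,e)$ is therefore jointly Gaussian, with block-diagonal covariance $\mathrm{diag}(\Sigma_\psi,\Sigma_e)$. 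Because $y$ is an affine map of $(\psi,e)$, it is Gaussian; this is the standard closure of Gaussians under affine maps, which also covers singular $\Sigma_\psi\succeq0$ when argued via characteristic functions.

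Next I compute the moments. Linearity of expectation and $\E e=0$ give
\[
\E y=\mathcal A(x,\psi_0)+J_\psi(x)(\mu-\psi_0)=\bar y(x).
\]
Independence kills the cross terms, so
\[
\Cov(y)=J_\psi(x)\Sigma_\psi J_\psi(x)^\T+\Sigma_e=S(x).
\]
Since $\Sigma_e\succ0$, we have $S(x)\succ0$, so the law of $y$ has the Lebesgue density $\mathcal N(y;\bar y(x),S(x))$ even when $\Sigma_\psi$ is singular.

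The final step identifies this density with $p_{\rm marg}$. The function $\E_{\psi\sim q}[\mathcal N(y;\mathcal A(x,\psi),\Sigma_e)]$ is exactly the density of the compound draw ``sample $\psi\sim q$, then $y\mid\psi\sim\mathcal N(\mathcal A(x,\psi),\Sigma_e)$''. This is the same construction as above, so the two densities coincide by Fubini. If one prefers to avoid measure-theoretic phrasing, an alternative is to complete the square in $\psi$ inside the Gaussian integral, which yields the same $S(x)$ via the Woodbury identity and the matrix determinant lemma. That route is messier and fails literally when $\Sigma_\psi$ is singular, so I would use the characteristic-function/pushforward argument instead.

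For the nonlinear statement, I replace $\mathcal A$ by its linearization $\tilde{\mathcal A}(x,\psi)=\mathcal A(x,\psi_0)+J_\psi(x)(\psi-\psi_0)$, which is affine in $\psi$. The same moment computation then applies verbatim. It requires only first and second moments of $q$, not Gaussianity, so $\bar y(x)$ and $S(x)$ are the exact mean and covariance of $\tilde{\mathcal A}(x,\psi)+e$. When $q$ is Gaussian, they are moreover the exact marginal for the linearized model.

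The only mildly delicate point is handling a possibly degenerate $\Sigma_\psi$. I resolve it through the characteristic-function argument together with $\Sigma_e\succ0$.
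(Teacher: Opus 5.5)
Your proposal is correct and follows essentially the same route as the paper's proof: write $y$ as an affine image of the independent Gaussian pair $(\psi,e)$, read off the mean $\bar y(x)$ and covariance $S(x)$, and rerun the same moment computation on the affine surrogate obtained by dropping $r_x$ for the nonlinear case. Your additional care goes beyond the paper's brief argument: you use $\Sigma_e\succ0$ to guarantee a Lebesgue density when $\Sigma_\psi$ is singular, identify the compound-draw density with $p_{\rm marg}$ via Fubini, and note that the nonlinear moment claim needs only the first two moments of $q$.
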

For non-Gaussian $q$, UAL approximates the measurement likelihood by a Gaussian with mean $\bar y(x)$ and covariance $S(x)$ from (\ref{eq:S}).
Appendix~\ref{app:proofs} gives the proofs and bounds on moment errors
from linearization and on covariance truncation.

\subsection{\texorpdfstring{UAL evaluation and sampler integration}{UAL evaluation and sampler integration}}
\label{sec:pal-evaluation}
\label{sec:seam}

\paragraph{\texorpdfstring{Energy and score.}{Energy and score.}}

Define the measurement residual $d(x)=y-\bar y(x)$, the precision matrix $S(x)^{-1}$, and the precision-weighted residual $w(x)=S(x)^{-1}d(x)$.
The normalized negative log likelihood is
\begin{equation}
  \ell_{\mathrm{UAL}}(x;y)
  =\frac12 d(x)^\T S(x)^{-1}d(x)
   +\frac12\log\det S(x)+\frac m2\log(2\pi).
  \label{eq:pal-energy}
\end{equation}
Increasing measurement variance along a residual direction lowers its quadratic penalty, while the log determinant grows with the corresponding increase in covariance volume.

For the whitened residual $\Sigma_e^{-1/2}d$, the quadratic weight along a left singular direction of $\Sigma_e^{-1/2}J_\psi F$ is $1/(1+s_i^2)$, where $FF^\T=\Sigma_\psi$ and $s_i$ is the corresponding singular value. Here $s_i^2$ is the added variance along that whitened direction.
Directions orthogonal to the whitened columns keep unit weight.
Residuals with the same observation-noise penalty can therefore receive different
UAL quadratic penalties, according to their compatibility with the predicted nuisance
variation. Scalar covariance inflation, by contrast, assigns equal weights in noise-whitened coordinates; with $x$ and $\bar y(x)$ fixed, it has a 
positive minimum KL divergence from UAL unless the nuisance-induced covariance is
proportional to the observation-noise covariance
(Remark~\ref{rem:scalar-inflation}).

For differentiable $\bar y(x)$ and $S(x)$, with the nuisance model and reference fixed, the likelihood score is given componentwise by
\begin{equation}
  \partial_{x_i}\log p_{\mathrm{UAL}}(y\mid x)
  =(\partial_{x_i}\bar y)^\T w
   +\frac12 w^\T(\partial_{x_i}S)w
   -\frac12\tr\!\left(S^{-1}\partial_{x_i}S\right).
  \label{eq:pal-score}
\end{equation}
The first term differentiates the predicted mean, and the remaining terms account for covariance weighting and normalization.

\paragraph{\texorpdfstring{Efficient evaluation.}{Efficient evaluation.}}
Factoring the nuisance covariance reduces UAL evaluation to an $r\times r$ system, avoiding an explicit inverse of the $m\times m$ measurement covariance.
Write ${\Sigma_\psi}=FF^\T$, where $r$ is the number of columns of
$F\in\R^{P\times r}$ and $\Sigma_\psi$ may be singular, and set $U(x)=J_\psi(x)F\in\R^{m\times r}$.
Define the $r\times r$ matrix $G(x)=I_r+U(x)^\T\Sigma_e^{-1}U(x)$.
The reduced solve gives
\[
 Ga_{\rm red}=U^\T\Sigma_e^{-1}d,\qquad
 w=\Sigma_e^{-1}(d-Ua_{\rm red}).
\]
Under the local Gaussian model at fixed $x$, $Ua_{\rm red}$ is the mean nuisance contribution conditional on the residual $d$.
The remaining residual is then weighted by the observation-noise precision $\Sigma_e^{-1}$.
The normalization follows from $\log\det S=\log\det\Sigma_e+\log\det G$.
Appendix~\ref{app:schur} derives the factorized energy and gives Algorithm~\ref{alg:pal}.
The cost also includes evaluating the forward model and its parameter sensitivities and forming $G$.  
Keeping only leading covariance modes reduces the solve dimension $r$. With the predicted mean $\bar y(x)$ unchanged, Corollary~\ref{cor:covariance-truncation} bounds the resulting likelihood error.
Appendix~\ref{app:runtime} reports recorded reconstruction times.

\paragraph{Use in diffusion samplers.}
Across applications, UAL treats forward-model uncertainty in the same way:
it propagates parameter variation into a measurement covariance.
We incorporate UAL through each sampler\textquoteright s published data update~\citep{zhang2025daps,wu2024pnpdm}, retaining its diffusion prior and sampling schedule.
Appendix Table~\ref{tab:point-pal} compares the point likelihood and UAL\@.
Appendix~\ref{app:seams} specifies the parameter models, reference values, and sampler updates.

\subsection{\texorpdfstring{Approximation error and reconstruction risk}{Approximation error and reconstruction risk}}
\label{sec:approximation-scope}

We bound the likelihood error introduced by nuisance linearization and the resulting average divergence between the exact marginal and UAL posteriors.
For nonlinear nuisance dependence, likelihood error is controlled by the remainder \(r_x(\psi)\) in (\ref{eq:expansion}), measured in observation-noise units.

\begin{proposition}[Likelihood error from linearization]
\label{prop:kl}
Fix $x$ and $\psi_0$, let $q=\mathcal N(\mu,{\Sigma_\psi})$, and assume
$\E_q[r_x(\psi)^\T{\Sigma_e^{-1}}r_x(\psi)]<\infty$. Then
\begin{equation}
  D_{\mathrm{KL}}\big(p_{\mathrm{marg}}(\cdot\mid x)\,\big\Vert\,p_{\mathrm{UAL}}(\cdot\mid x;\psi_0,q)\big)
  \le\tfrac12\E_q\!\left[r_x^\T{\Sigma_e^{-1}}r_x\right].
  \label{eq:kl}
\end{equation}
\end{proposition}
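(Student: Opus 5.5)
The plan is to write both likelihoods as mixtures over the same nuisance variable and then apply joint convexity of the KL divergence. By Proposition~\ref{prop:marginal}, applied to the linearized forward model $\tilde{\mathcal A}(x,\psi)=\mathcal A(x,\psi_0)+J_\psi(x)(\psi-\psi_0)$, which is affine in $\psi$, the UAL density is exactly a mixture:
\[
p_{\mathrm{UAL}}(y\mid x;\psi_0,q)=\E_{\psi\sim q}\big[\mathcal N(y;\tilde{\mathcal A}(x,\psi),\Sigma_e)\big].
\]
This uses the Gaussianity of $q=\mathcal N(\mu,\Sigma_\psi)$. The exact marginal is, by (\ref{eq:physical-target}),
\[
p_{\mathrm{marg}}(y\mid x)=\E_{\psi\sim q}\big[\mathcal N(y;\mathcal A(x,\psi),\Sigma_e)\big].
\]
So the two densities are mixtures with the same mixing law $q$. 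Their components are Gaussians with common covariance $\Sigma_e$, and their means differ by $\mathcal A(x,\psi)-\tilde{\mathcal A}(x,\psi)=r_x(\psi)$ from (\ref{eq:expansion}).

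The key step is joint convexity of KL, or equivalently the data-processing inequality. Consider the joint laws of $(\psi,y)$ given by $q(\psi)\mathcal N(y;\mathcal A(x,\psi),\Sigma_e)$ and $q(\psi)\mathcal N(y;\tilde{\mathcal A}(x,\psi),\Sigma_e)$. Marginalizing out $\psi$ cannot increase KL, and the chain rule together with the shared $\psi$-marginal gives
\[
D_{\mathrm{KL}}(p_{\mathrm{marg}}\Vert p_{\mathrm{UAL}})\le \E_q\big[D_{\mathrm{KL}}(\mathcal N(\mathcal A(x,\psi),\Sigma_e)\Vert\mathcal N(\tilde{\mathcal A}(x,\psi),\Sigma_e))\big].
\]
The standard formula for KL between Gaussians with equal covariance gives the integrand as $\tfrac12 r_x(\psi)^\T\Sigma_e^{-1}r_x(\psi)$. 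Taking the expectation yields (\ref{eq:kl}).

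The main obstacle is measure-theoretic care rather than algebra. I need $\psi\mapsto\mathcal A(x,\psi)$ to be measurable so that the joint law is well defined; continuity of $\mathcal A$ in $\psi$ is implicit in the differentiability assumed by (\ref{eq:expansion}). I also need the chain rule for KL to apply with a possibly infinite left side. The finiteness assumption $\E_q[r_x^\T\Sigma_e^{-1}r_x]<\infty$ makes the right-hand side finite, which justifies the inequality. A singular $\Sigma_\psi$ causes no difficulty, because the mixture representation never requires a density for $\psi$, only the law $q$. The condition $\Sigma_e\succ0$ ensures every component has a density and that the KL formula applies.
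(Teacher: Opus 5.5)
Your proposal is correct and follows the paper's proof: you couple the exact and linearized models as joint laws of $(\psi,y)$ with common marginal $q$, evaluate their KL as $\tfrac12\E_q[r_x^\T\Sigma_e^{-1}r_x]$ by the equal-covariance Gaussian formula, note that marginalizing out $\psi$ cannot increase KL, and use Proposition~\ref{prop:marginal} with Gaussian $q$ to identify the affine mixture with $p_{\mathrm{UAL}}$. One small quibble: differentiability at $\psi_0$ does not give continuity of $\mathcal A(x,\cdot)$ elsewhere, but measurability, which is all you need, is already implicit in the hypothesis that $\E_q[r_x^\T\Sigma_e^{-1}r_x]$ is defined.
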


Appendix~\ref{app:kl-proof} gives the proof.
Under the conditions of Corollary~\ref{cor:posterior-error}, the prior-averaged bound in (\ref{eq:kl}) also bounds posterior divergence averaged over marginal-model measurements. These bounds cover the likelihood approximation and its induced posterior, excluding the sampler updates.

\paragraph{\texorpdfstring{Reconstruction-risk analysis.}{Reconstruction-risk analysis.}}
In the linear model of Appendix~\ref{app:linear-risk}, with nuisance sensitivity held fixed as the signal varies, the linear estimator based on UAL minimizes mean squared reconstruction error among linear estimators for any nuisance distribution whose mean and covariance match the assumed ones (Proposition~\ref{prop:risk}).
For a fixed parameter realization near the mean, the point estimator can have lower error; Appendix~\ref{app:linear-risk} characterizes the crossover and the conditions for positive average gain when these moments are inaccurate.
\section{Experiments}
\label{sec:experiments}

\subsection{Experimental protocol}

\label{sec:evaluated}
We evaluate UAL at two levels: reconstruction and ensemble prediction in paired sampler comparisons (Section~\ref{sec:exp-deblurring}), and, at shared candidate images, likelihood and image-score accuracy against numerical nuisance marginalization (Section~\ref{sec:exp-ablations}).
Additional controls examine covariance structure, recomputation and differentiation, uncertainty level, and retained rank.

Published denotes the released sampler and UAL its uncertainty-aware version; each pair uses the same measurement data, diffusion checkpoint, sampling schedule, step sizes, and reconstruction seeds.
Deblurring, CT, MRI, and microscopy use 50 cases, and ultrasound uses the acquired PICMUS contrast and resolution phantoms provided with DRUS; each case has 12 reconstruction seeds. The fixed deblurring ablation set contains 16 images, with 12 paired reconstruction seeds per image.
Appendix~\ref{app:seams} gives forward models, nuisance models, and sampler updates.

We report PSNR and SSIM both as the mean over trajectories and for the ensemble-mean reconstruction.
 CRPS, energy, and local variogram scores (lower is better) use the fair finite-ensemble estimators of Appendix~\ref{app:metrics}, which also gives the scoring coordinates and interval construction; ultrasound uses native contrast and resolution metrics.

\subsection{\texorpdfstring{Reconstruction across samplers and applications}{Reconstruction across samplers and applications}}
\label{sec:exp-deblurring}
\label{sec:exp-applications}
\label{sec:alternatives}
We compare the Published and UAL versions of DAPS, PnP-DM, DiffPIR, and BlindDPS for Gaussian and motion deblurring on FFHQ images~\citep{karras2019ffhq} (Table~\ref{tab:exp-deblurring}).
The Gaussian setting uses kernel widths with mean $\bar s=3$ pixels and standard deviation $\tau=0.5$ pixels. 
In motion deblurring, each non-blind pair uses the population-mean kernel, giving the two likelihoods the same mean response at a common candidate image.
Motion deblurring also includes Fast Diffusion EM~\citep{laroche2024fastem}, a blind sampler that estimates a kernel for each image. Its UAL image update uses the population-mean kernel instead, with covariance accounting for image and kernel uncertainty and their interaction.
We estimate the kernel covariance and retain its leading 256 eigenmodes (\(P=256\)), with this rank fixed before the covariance-rank sweep (Section~\ref{sec:cov-struc}).

\begin{table}[!t]
\setlength{\abovecaptionskip}{4pt}
\centering
\small
\color{black}
\setlength{\tabcolsep}{7pt}
\renewcommand{\arraystretch}{0.96}
\caption[Reconstruction and ensemble prediction]{Reconstruction and ensemble prediction. T/M: mean per-trajectory score / score of the ensemble-mean reconstruction. PSNR is in dB; CRPS, energy, and LVS are fair ensemble scores, with LVS scaled by $10^3$. Bold marks better UAL values.}
\label{tab:exp-deblurring}
\label{tab:exp-applications}
\label{tab:ensemble-scores}
\begin{tabular}{llrrrrr}
\toprule
Sampler / task & Version & PSNR T/M $\uparrow$ & SSIM T/M $\uparrow$
& CRPS $\downarrow$ & Energy $\downarrow$ & LVS $\downarrow$ \\
\midrule
\multicolumn{7}{l}{Gaussian blur} \\
 & Published & 29.49 / 30.60 & .7416 / .8143 & .02871 & 20.128 & 11.240 \\
\rowcolor{black!6}
\cellcolor{white}\raisebox{0.48\normalbaselineskip}[0pt][0pt]{DAPS} & UAL & \textbf{29.88} / \textbf{30.94} & \textbf{.7468} / \textbf{.8171} & \textbf{.02796} & \textbf{19.449} & \textbf{11.021} \\
\addlinespace[1pt]
 & Published & 26.97 / 29.53 & .5603 / .7616 & .03209 & 20.874 & 15.286 \\
\rowcolor{black!6}
\cellcolor{white}\raisebox{0.48\normalbaselineskip}[0pt][0pt]{PnP-DM} & UAL & \textbf{27.50} / \textbf{30.35} & \textbf{.5721} / \textbf{.7776} & \textbf{.02956} & \textbf{18.803} & \textbf{14.252} \\
\addlinespace[1pt]
 & Published & 29.13 / 30.28 & .7687 / .8180 & .02948 & 21.263 & 10.788 \\
\rowcolor{black!6}
\cellcolor{white}\raisebox{0.48\normalbaselineskip}[0pt][0pt]{DiffPIR} & UAL & \textbf{30.12} / \textbf{31.30} & \textbf{.7860} / \textbf{.8331} & \textbf{.02583} & \textbf{18.325} & \textbf{9.353} \\
\addlinespace[1pt]
 & Published & 28.31 / 30.30 & .7343 / .8002 & .02698 & 19.322 & 9.739 \\
\rowcolor{black!6}
\cellcolor{white}\raisebox{0.48\normalbaselineskip}[0pt][0pt]{BlindDPS} & UAL & \textbf{29.21} / \textbf{31.08} & \textbf{.7708} / \textbf{.8278} & \textbf{.02476} & \textbf{17.778} & \textbf{8.895} \\
\midrule
\multicolumn{7}{l}{Motion blur} \\
 & Published & 17.68 / 18.10 & .3303 / .3779 & .15043 & 96.013 & 57.076 \\
\rowcolor{black!6}
\cellcolor{white}\raisebox{0.48\normalbaselineskip}[0pt][0pt]{DAPS} & UAL & \textbf{28.84} / \textbf{29.58} & \textbf{.7103} / \textbf{.7726} & \textbf{.03502} & \textbf{23.388} & \textbf{12.774} \\
\addlinespace[1pt]
 & Published & 17.04 / 17.90 & .2766 / .3897 & .14485 & 91.587 & 61.032 \\
\rowcolor{black!6}
\cellcolor{white}\raisebox{0.48\normalbaselineskip}[0pt][0pt]{PnP-DM} & UAL & \textbf{26.54} / \textbf{29.43} & \textbf{.5552} / \textbf{.7628} & \textbf{.03236} & \textbf{20.943} & \textbf{14.959} \\
\addlinespace[1pt]
 & Published & 11.44 / 12.17 & .0701 / .0877 & .34406 & 172.270 & 179.800 \\
\rowcolor{black!6}
\cellcolor{white}\raisebox{0.48\normalbaselineskip}[0pt][0pt]{DiffPIR} & UAL & \textbf{14.55} / \textbf{15.19} & \textbf{.1440} / \textbf{.1666} & \textbf{.24688} & \textbf{130.857} & \textbf{100.688} \\
\addlinespace[1pt]
 & Published & 22.12 / 23.21 & .5559 / .6263 & .06392 & 46.633 & 18.291 \\
\rowcolor{black!6}
\cellcolor{white}\raisebox{0.48\normalbaselineskip}[0pt][0pt]{BlindDPS} & UAL & 21.95 / \textbf{23.33} & .5449 / \textbf{.6278} & \textbf{.06218} & \textbf{44.589} & \textbf{17.339} \\
\addlinespace[1pt]
 & Published & 22.76 / 23.27 & .5236 / .5525 & .07789 & 51.865 & 24.271 \\
\rowcolor{black!6}
\cellcolor{white}\raisebox{0.48\normalbaselineskip}[0pt][0pt]{Fast Diffusion EM} & UAL & \textbf{29.10} / \textbf{30.72} & \textbf{.7635} / \textbf{.8108} & \textbf{.02708} & \textbf{18.332} & \textbf{9.633} \\
\midrule
\multicolumn{7}{l}{Imaging applications} \\
 & Published & 25.94 / 27.33 & .4248 / .5195 & .02194 & 23.464 & 5.361 \\
\rowcolor{black!6}
\cellcolor{white}\raisebox{0.48\normalbaselineskip}[0pt][0pt]{CT} & UAL & \textbf{26.15} / \textbf{27.54} & \textbf{.4321} / \textbf{.5299} & \textbf{.02135} & \textbf{22.898} & \textbf{5.355} \\
\addlinespace[1pt]
 & Published & 30.77 / 31.88 & .8435 / .8179 & .02007 & 11.562 & 5.817 \\
\rowcolor{black!6}
\cellcolor{white}\raisebox{0.48\normalbaselineskip}[0pt][0pt]{MRI} & UAL & \textbf{30.78} / \textbf{31.89} & \textbf{.8438} / \textbf{.8180} & \textbf{.02004} & \textbf{11.540} & \textbf{5.806} \\
\addlinespace[1pt]
 & Published & 29.33 / 33.67 & .5352 / .7912 & .02265 & 8.383 & 10.621 \\
\rowcolor{black!6}
\cellcolor{white}\raisebox{0.48\normalbaselineskip}[0pt][0pt]{Microscopy} & UAL & \textbf{29.76} / \textbf{34.22} & \textbf{.5505} / \textbf{.7971} & \textbf{.02117} & \textbf{7.482} & \textbf{9.740} \\
\bottomrule
\end{tabular}
\end{table}

In Table~\ref{tab:exp-deblurring}, UAL improves ensemble-mean PSNR and SSIM and lowers the mean CRPS, energy, and local variogram scores in every deblurring comparison.
In motion deblurring, DAPS and PnP-DM with UAL exceed the published blind samplers BlindDPS and Fast Diffusion EM, which sample or estimate a kernel for each image, by 6.2 to 6.4 dB in ensemble-mean PSNR, with paired 95\% intervals excluding zero (Table~\ref{tab:ual-vs-blind}). For mean per-trajectory PSNR, UAL gives 28.84 dB for DAPS and 26.54 dB for PnP-DM, compared with 30.65 and 27.65 dB when the published samplers use the true kernel (Table~\ref{tab:pnpdm-motion-oracle}).

UAL also improves reconstruction and ensemble prediction in CT, MRI, and fluorescence microscopy (Figure~\ref{fig:teaser}). Mean per-trajectory PSNR increases on all 50 CT detector rows, 47 of 50 MRI scans, and 43 of 50 microscopy sections.
For CT, the gain persists when both versions use the same mean flat and dark reference fields (Appendix~\ref{app:additional-results}).
For MRI under the Gaussian coil-sensitivity model, UAL gives the exact marginal measurement likelihood.
The MRI reconstruction gains are small but consistent: all seven reported metrics improve, with pointwise 95\% paired-bootstrap intervals excluding zero.
On the acquired PICMUS ultrasound contrast phantom, UAL increases CNR from 9.929 to 12.363 dB and gCNR from 0.9141 to 0.9743\@.

\subsection{\texorpdfstring{Likelihood validation and controls}{Likelihood validation and controls}}
\label{sec:exp-ablations}
\edef\UALsavedInTextSep{\the\intextsep}
\setlength{\textfloatsep}{12pt}
\setlength{\intextsep}{8pt}

\paragraph{Likelihood approximation.}
For Gaussian deblurring, we evaluate the published PnP-DM point likelihood, UAL, and isotropic inflation at common final reconstructions produced by the published sampler. Numerical nuisance marginalization provides the reference.
 The isotropic control retains the UAL mean and scales the observation-noise covariance to $v_\star(x)\Sigma_e$, where $v_\star(x)=\tr(\Sigma_e^{-1}S(x))/m$ is the average measurement variance under UAL after noise whitening.
 The isotropic and UAL covariances therefore have the same whitened trace.
  Among Gaussian likelihoods with the same mean and covariance $v\Sigma_e$, this trace match minimizes the KL divergence from UAL (Remark~\ref{rem:scalar-inflation}).
At final reconstructions, UAL's mean relative image-score error is below 1\% at every tested uncertainty level, compared with 6\% to 15\% for the point likelihood. Its mean absolute negative-log-likelihood error is also the lowest of the three likelihoods (Figure~\ref{fig:likelihood-approximation}a,b).
During sampling, UAL also has the lowest mean relative image-score error at each of the five saved states and each uncertainty level. 
At later states, its mean error is at most 2\%, compared with approximately 6\% to 19\% for the point likelihood (Figure~\ref{fig:likelihood-approximation}c).
Across these common candidates, UAL more closely approximates both the marginalized log likelihood and its image score than the two alternatives.
\begin{figure}[!t]
\setlength{\abovecaptionskip}{4pt}
\centering
\includegraphics[width=\linewidth,trim=0 9bp 0 4bp,clip]{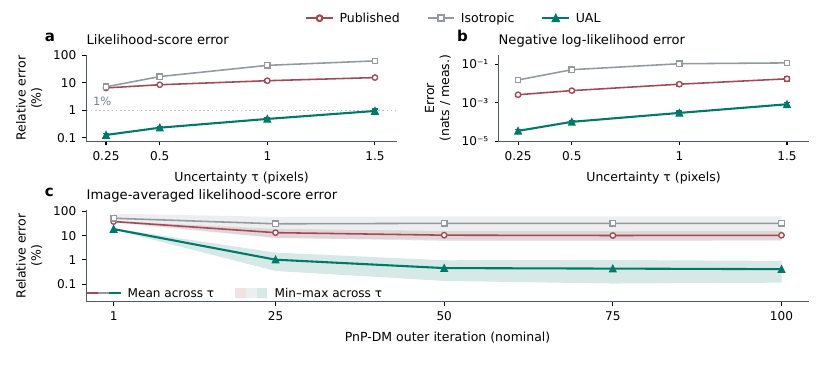}
\caption[Likelihood accuracy]{Likelihood accuracy in Gaussian deblurring against numerical marginalization. (a,b) Final-state relative image-score error and absolute negative-log-likelihood error per measurement: means $\pm$ one SE across the ablation set. (c) Score error at five saved states, using one trajectory per image. Errors are averaged over images within each uncertainty level. Lines and bands show the mean and range across $\tau=0.25,0.5,1,1.5$ pixels. Generating and assumed width uncertainty vary together. Appendix~\ref{app:numerical-reference} gives the stored indices corresponding to panel (c)'s nominal iterations.}
\label{fig:likelihood-approximation} 
\end{figure}

\paragraph{Covariance structure.}
\label{sec:cov-struc}
We next test whether covariance structure improves reconstruction and ensemble prediction beyond increasing total variance, using the three likelihoods in deblurring with PnP-DM\@.
 UAL and the isotropic control both include Gaussian normalization and differentiate their signal-dependent covariance.
 UAL's structured covariance increases per-trajectory PSNR and reduces CRPS relative to trace-matched isotropic inflation in both tasks (Table~\ref{tab:exp-ablations}).
 We also test fixed noise-variance multipliers and, for each task, report the one with the highest mean per-trajectory PSNR on the same ablation set used for comparison.
The best tested values give 27.029 dB ($v=3$) in Gaussian and 23.369 dB ($v=48$) in motion deblurring, compared with 27.475 and 26.641 dB for UAL (Appendix~\ref{app:covariance-control}). For DAPS in motion deblurring, the best tested multiplier of its data-term variance ($v=48$) gives 24.413 dB mean per-trajectory PSNR, compared with 28.631 dB for UAL on the ablation set.
At the final UAL reconstructions in Gaussian deblurring with PnP-DM, nuisance variation adds about 20\% to the average whitened measurement variance, carried by the two width-sensitivity responses (Appendix~\ref{app:covariance-diagnostics}). Trace-matched isotropic inflation distributes the same added variance uniformly in noise-whitened coordinates.

\begin{table}[!ht]
\centering
\small
\setlength{\tabcolsep}{5pt}
\setlength{\abovecaptionskip}{4pt}
\setlength{\belowcaptionskip}{4pt}
\caption{Covariance structure at matched whitened trace on the ablation set. Appendix~\ref{app:covariance-control} reports paired differences and standard errors.
}
\label{tab:exp-ablations}
\begin{tabular*}{\linewidth}{@{\extracolsep{\fill}}lrrrrrr@{}}
\toprule
 & \multicolumn{3}{c}{Gaussian blur} & \multicolumn{3}{c}{Motion blur} \\
\cmidrule(lr){2-4}\cmidrule(lr){5-7}
Metric & Published & Isotropic & UAL & Published & Isotropic & UAL \\
\midrule
PSNR T (dB) $\uparrow$ & 26.944 & 27.020 & \textbf{27.475} & 17.872 & 24.084 & \textbf{26.641} \\
CRPS ($\times10^3$) $\downarrow$ & 32.53 & 32.42 & \textbf{29.81} & 131.53 & 58.55 & \textbf{31.49} \\
\bottomrule
\end{tabular*}
\end{table}
\paragraph{Covariance updates and derivatives.}
In Gaussian deblurring with PnP-DM, two further controls separate recomputing the covariance from differentiating it (Table~\ref{tab:covariance-updates}). Fixing the covariance at the published sampler's ensemble mean removes nearly all of the PSNR gain and raises CRPS above the published sampler's.
Recomputing it at each candidate keeps the gain whether or not its derivatives are propagated. By contrast, the motion-deblurring controls for Fast Diffusion EM keep the population-mean kernel fixed: differentiating the covariance raises mean per-trajectory PSNR from 23.80 to 28.49 dB, and including the log determinant gives 29.10 dB (Appendix~\ref{app:covariance-control}).

\begin{table}[!ht]
\centering
\small
\caption{Gaussian PnP-DM covariance-update controls on the ablation set.
The two recomputed treatments agree within two standard errors in per-trajectory PSNR.}
\label{tab:covariance-updates}
\begin{tabular}{@{}lrrrr@{}}
\toprule
Metric & Published & Fixed covariance & Derivatives stopped & UAL \\
\midrule
PSNR T (dB) $\uparrow$ & 26.944 & 26.964 & 27.506 & 27.475 \\
CRPS ($\times10^3$) $\downarrow$ & 32.53 & 34.84 & 29.76 & 29.81 \\
\bottomrule
\end{tabular}
\end{table}

\paragraph{Matched uncertainty.}
In Gaussian deblurring with PnP-DM, UAL's reconstruction advantage grows as the generating and modeled width uncertainty increase together (Figure~\ref{fig:matched-uncertainty}).
At $\tau=1.5$ pixels, UAL exceeds Published by 1.75 dB and
Isotropic by 0.47 dB in mean per-trajectory PSNR
(Table~\ref{tab:matched-uncertainty}).
In a separate comparison of three combinations of mean width and width uncertainty, UAL gives higher mean per-trajectory PSNR and lower CRPS for all four samplers (Table~\ref{tab:gaussian-widths}).

\paragraph{Covariance rank.}
\label{sec:cov_rank}
The rank sweep with PnP-DM shows that a compact covariance representation preserves reconstruction quality in motion deblurring (Figure~\ref{fig:rank-tradeoff}).
The 256-mode representation retains 96\% of the kernel variance. On the ablation set, it gives 26.641 dB mean per-trajectory PSNR, compared with 26.251 dB for 2048 modes. The corresponding mean wall times are 89.5 and 2171.3 seconds per trajectory, respectively.
\begin{figure}[!t]
\setlength{\abovecaptionskip}{4pt}
\begin{minipage}[t]{0.485\linewidth}
\vspace{0pt}
\centering
\includegraphics[width=\linewidth,trim=0 12bp 0 12bp,clip]{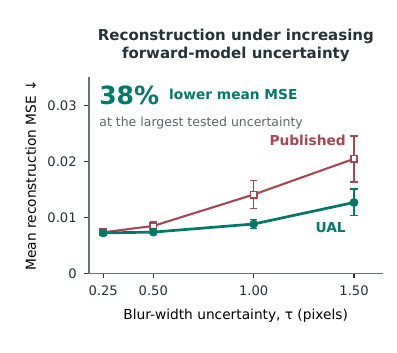}
\caption[Matched uncertainty]{Reconstruction error under matched Gaussian-width uncertainty with PnP-DM at mean width 3 pixels. Generating and modeled width standard deviations increase together. Mean per-trajectory MSE $\pm$ one SE across the ablation set.}
\label{fig:matched-uncertainty}
\end{minipage}\hfill
\begin{minipage}[t]{0.485\linewidth}
\vspace{0pt}
\centering
\includegraphics[width=\linewidth,trim=0 12bp 0 12bp,clip]{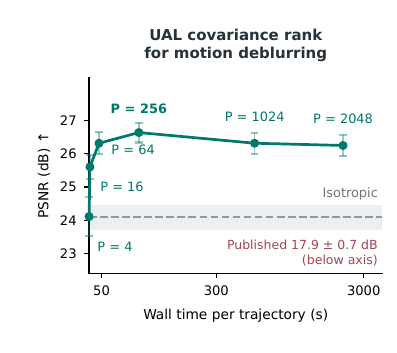}
\caption[Covariance rank and runtime]{Covariance rank and runtime in motion deblurring with PnP-DM, labeled by the number $P$ of retained kernel-covariance modes. Mean per-trajectory PSNR $\pm$ one SE across the ablation set; Isotropic is trace-matched at $P=256$.}
\label{fig:rank-tradeoff}
\end{minipage}
\end{figure}
\paragraph{\texorpdfstring{Sensitivity to nuisance moments.}{Sensitivity to nuisance moments.}}
With Gaussian-deblurring observations fixed, PnP-DM with UAL gives mean per-trajectory PSNR between 27.408 and 27.475 dB across the tested changes to either the parameter mean or covariance scale, up to approximately 20\%. The published sampler with the correct mean width gives 26.944 dB, below every tested UAL setting (Figure~\ref{fig:covariance-scale}).
Underestimating the mean width increases CRPS more than any of the tested covariance rescalings.
Motion deblurring is similarly insensitive to 20\% changes in covariance scale (Table~\ref{tab:covariance-scale}).

\section{Conclusion and limitations}

UAL uses forward-model uncertainty to weight measurement-residual directions according to how well parameter variation can explain them. It enters existing samplers through their data updates without additional diffusion-model training, and it improves reconstruction in CT, MRI, and fluorescence microscopy and contrast in ultrasound.
In controlled Gaussian and motion deblurring comparisons with PnP-DM, retaining covariance structure improves reconstruction beyond trace-matched isotropic inflation.
Accounting for forward-model uncertainty improves ensemble prediction: UAL lowers the mean CRPS, energy, and local variogram scores in all 12 comparisons in Table~\ref{tab:exp-deblurring}.
For PnP-DM in Gaussian deblurring, UAL has lower mean relative image-score error than the published and isotropic likelihoods at the final reconstructions and every evaluated sampling state, using numerical marginalization as the reference.
 UAL's accuracy and reconstruction benefit depend on the nuisance model and, for nonlinear nuisance dependence, on the local forward-model approximation. UAL adds computation that grows with the covariance rank and depends on the sampler (Appendix~\ref{app:runtime}). In deblurring, UAL keeps its gain under tested mean or covariance-scale errors of up to about 20\% (Figure~\ref{fig:covariance-scale}, Table~\ref{tab:covariance-scale}); a different motion-kernel population was not tested.

\setlength{\textfloatsep}{\UALsavedTextFloatSep}
\setlength{\intextsep}{\UALsavedInTextSep}
\bibliography{iclr2027_conference}
\bibliographystyle{iclr2027_conference}

\clearpage
\appendix
\makeatletter
\setlength{\@fptop}{0pt}
\setlength{\@fpsep}{14pt}
\setlength{\@fpbot}{0pt plus 1fil}
\makeatother
\section{UAL likelihood: derivations and approximation theory}
\label{app:proofs}

UAL is exact for an affine nuisance response with Gaussian nuisance
parameters and independent Gaussian observation noise.
We derive the likelihood score and factorized evaluation. We then bound likelihood, averaged posterior, and moment errors from nuisance linearization, followed by the likelihood error from covariance truncation.

\begin{table}[H]
\setlength{\abovecaptionskip}{4pt}
\centering
\color{black}
\small
\setlength{\tabcolsep}{4pt}
\caption[Point Gaussian likelihood and UAL]{Point likelihood and UAL at a fixed reference $\psi_0$.
Here the residuals are $d_0=y-\mathcal A(x,\psi_0)$ and $d=y-\bar y(x)$.
Both energies omit $m\log(2\pi)/2$; $\log\det\Sigma_e$ is constant in $x$.}
\label{tab:point-pal}
\renewcommand{\arraystretch}{1.15}
\begin{tabular*}{\linewidth}{@{\extracolsep{\fill}}lll@{}}
\toprule
Quantity & Point Gaussian likelihood & UAL \\
\midrule
Mean &
$\mathcal A(x,\psi_0)$ &
$\bar y(x)=\mathcal A(x,\psi_0)+J_\psi(x)(\mu-\psi_0)$ \\
\addlinespace[4pt]
Covariance &
$\Sigma_e$ &
$S(x)=\Sigma_e+J_\psi(x)\Sigma_\psi J_\psi(x)^\T$ \\
\addlinespace[4pt]
Negative log likelihood &
$\tfrac12 d_0^\T\Sigma_e^{-1}d_0+\tfrac12\log\det\Sigma_e$ &
$\tfrac12 d^\T S(x)^{-1}d+\tfrac12\log\det S(x)$ \\
\bottomrule
\end{tabular*}
\end{table}

\subsection{Affine Gaussian marginal and likelihood score}

\begin{restateprop}{\ref{prop:marginal}}[exact affine-Gaussian marginal, restated]
At fixed $x$, suppose $\mathcal A(x,\psi)$ is affine in $\psi$, $q=\mathcal N(\mu,\Sigma_\psi)$, and $e\sim\mathcal N(0,\Sigma_e)$ is independent of $\psi$. Then $p_{\rm marg}(y\mid x)=\mathcal N(y;\bar y(x),S(x))$, with moments given by (\ref{eq:S}). For nonlinear dependence on $\psi$, (\ref{eq:S}) gives the exact mean and covariance of the linearized forward model obtained by dropping $r_x$ in (\ref{eq:expansion}). 
\end{restateprop}
\begin{proof}
At fixed $x$, the measurement is an affine function of the independent Gaussian parameters and noise, hence Gaussian. Its mean and covariance are $\bar y(x)$ and $S(x)$ in (\ref{eq:S}), which gives the stated marginal.

For nonlinear dependence on $\psi$, dropping the remainder in
(\ref{eq:expansion}) gives
\begin{equation}
 y=\mathcal A(x,\psi_0)+J_\psi(x)(\psi-\psi_0)+e.
 \label{eq:affine-surrogate}
\end{equation}
This response is affine in $\psi$, with offset
$\mathcal A(x,\psi_0)-J_\psi(x)\psi_0$.
The same moment calculation gives (\ref{eq:S}).
At $\psi_0=\mu$, Proposition~\ref{prop:linearization} bounds the mean
and covariance omitted by dropping the remainder.
\end{proof}

\paragraph{\texorpdfstring{Likelihood score.}{Likelihood score.}}
Differentiating the normalized UAL energy in
(\ref{eq:pal-energy}) and negating its derivative gives the likelihood score, with $q$ and $\psi_0$ held fixed.
Suppose that $\bar y(x)$ and $S(x)$ are differentiable. Write $\partial_i=\partial/\partial x_i$ and
$w=S^{-1}d$, with $d=y-\bar y$. The required identities are
\begin{equation}
 \begin{aligned}
 \partial_i d&=-\partial_i\bar y,\\
 \partial_i S^{-1}&=-S^{-1}(\partial_i S)S^{-1},\\
 \partial_i\log\det S&=\tr(S^{-1}\partial_i S).
 \end{aligned}
 \label{eq:score-identities}
\end{equation}
Applying them to (\ref{eq:pal-energy}) gives
\begin{equation}
 \partial_i\ell_{\mathrm{UAL}}
 =-(\partial_i\bar y)^\T w
   -\frac12w^\T(\partial_iS)w
   +\frac12\tr(S^{-1}\partial_iS).
 \label{eq:pal-energy-derivative}
\end{equation}
Negating this expression gives (\ref{eq:pal-score}).
When the covariance depends on the signal, the score accounts for changes
in both residual weighting and Gaussian normalization. Holding the covariance
fixed leaves the mean-response term.

\subsection{Factorized evaluation of UAL}
\label{app:schur}
The UAL covariance is $S={\Sigma_e}+UU^\T$, with $U$ and $G$
defined in Section~\ref{sec:pal-evaluation}.
A single factorization of the $r\times r$ matrix $G$ provides both $S^{-1}d$ and the determinant contribution $\log\det G$.

The Woodbury identity and matrix determinant lemma give
\begin{align}
  S^{-1}
  &={\Sigma_e^{-1}}-{\Sigma_e^{-1}}UG^{-1}U^\T{\Sigma_e^{-1}},
  \label{eq:woodbury}\\
  \log\det S
  &=\log\det{\Sigma_e}+\log\det G.
  \label{eq:pal-logdet}
\end{align}
For $d=y-\bar y$, solve the reduced system
\begin{equation}
 Ga_{\rm red}=U^\T{\Sigma_e^{-1}}d,\qquad
 w={\Sigma_e^{-1}}(d-Ua_{\rm red})=S^{-1}d.
 \label{eq:pal-reduced-solve}
\end{equation}
In the equivalent local Gaussian model $d=Ua+e$, with $a\sim\mathcal N(0,I_r)$ independent of $e$, the solution $a_{\rm red}$ is the conditional mean of $a$ given $x$ and $y$.
Thus $Ua_{\rm red}$ is the conditional mean of the nuisance contribution to the residual.

With $\|v\|_{{\Sigma_e^{-1}}}^2=v^\T{\Sigma_e^{-1}}v$, the quadratic can
be evaluated as a sum of nonnegative terms, avoiding cancellation between large quadratic terms:
\begin{equation}
\begin{aligned}
 \|d-Ua_{\rm red}\|_{{\Sigma_e^{-1}}}^2+\|a_{\rm red}\|^2
 &=d^\T{\Sigma_e^{-1}}d-2a_{\rm red}^\T U^\T{\Sigma_e^{-1}}d+a_{\rm red}^\T Ga_{\rm red}\\
 &=d^\T{\Sigma_e^{-1}}d-a_{\rm red}^\T Ga_{\rm red}
 =d^\T S^{-1}d.
\end{aligned}
 \label{eq:schur-square}
\end{equation}
The second equality uses (\ref{eq:pal-reduced-solve}); the last uses
the Woodbury identity. Substitution into (\ref{eq:pal-energy}) gives
\begin{equation}
\begin{aligned}
 \ell_{\mathrm{UAL}}
 &=\tfrac12\|d-Ua_{\rm red}\|_{{\Sigma_e^{-1}}}^2+\tfrac12\|a_{\rm red}\|^2\\
 &\quad+\tfrac12\log\det{\Sigma_e}+\tfrac12\log\det G
       +\tfrac m2\log(2\pi).
\end{aligned}
 \label{eq:pal-factorized-energy}
\end{equation}
For fixed $x$, $a_{\rm red}$ minimizes $\tfrac12\|d-Ua\|_{\Sigma_e^{-1}}^2+\tfrac12\|a\|^2$, the sum of the observation-noise penalty and the standard Gaussian prior penalty. Equation~(\ref{eq:pal-factorized-energy}) therefore expresses the marginal energy as this penalized profile energy plus $\tfrac12\log\det G$ and terms independent of $x$. The log determinant distinguishes marginalizing the local nuisance coordinates from optimizing them. Table~\ref{tab:normalization-motion} compares this penalized profile energy with UAL under the same sampler and nuisance model.

\paragraph{Computational implementation.}
Since ${\Sigma_e}\succ0$, $G\succ0$ even when ${\Sigma_\psi}$ is singular.
A Cholesky factorization ${G=L_GL_G^\T}$ evaluates
(\ref{eq:pal-reduced-solve}) by triangular solves and gives
${\log\det G=2\sum_{i=1}^r\log (L_G)_{ii}}$.
For $r=1$, the solve is scalar division.
In our experiments, $\Sigma_e$ is a scaled identity, so applying $\Sigma_e^{-1}$ is a scalar division. For a general $\Sigma_e$, solve $\Sigma_e z=v$. For motion deblurring, DAPS and PnP-DM compute $G$ and $S^{-1}d$ in Fourier coordinates without storing $U$.
Explicit columnwise construction is a general alternative requiring $r$
applications of $J_\psi$.

Algorithm~\ref{alg:pal} evaluates the same UAL Gaussian likelihood through its factorized energy. Covariance truncation introduces a separate approximation; Corollary~\ref{cor:covariance-truncation} bounds its likelihood error when the mean is unchanged and the discarded covariance is positive semidefinite.

\begin{algorithm}[htbp]
\color{black}
\caption[UAL data term]{UAL data term}
\label{alg:pal}

\begin{algorithmic}[1]
\REQUIRE signal argument $x$, measurement $y$, reference $\psi_0$, mean $\mu$ of $q$,
factor $F$ with $\Sigma_\psi=FF^\T$, noise covariance $\Sigma_e$, forward model $\mathcal A$
and its parameter Jacobian $J_\psi$ at $\psi_0$
\STATE $\bar y \leftarrow \mathcal A(x,\psi_0)+J_\psi(x)(\mu-\psi_0)$; \quad $d \leftarrow y-\bar y$; \quad $U \leftarrow J_\psi(x)F$
\STATE Form $G\leftarrow I_r+U^\T\Sigma_e^{-1}U$ and compute its Cholesky factor $G=L_GL_G^\T$.
\STATE Solve for $a_{\rm red}$ in $L_GL_G^\T a_{\rm red}=U^\T\Sigma_e^{-1}d$ by forward and backward substitution.

\STATE $w \leftarrow \Sigma_e^{-1}(d-Ua_{\rm red})$ \COMMENT{$w=S^{-1}d$}
\STATE $\ell_{\mathrm{UAL}} \leftarrow \tfrac12\|d-Ua_{\rm red}\|_{\Sigma_e^{-1}}^2+\tfrac12\|a_{\rm red}\|^2
+\tfrac12\log\det\Sigma_e+\tfrac12\log\det G+\tfrac m2\log(2\pi)$
\ENSURE energy $\ell_{\mathrm{UAL}}$ and likelihood score by (\ref{eq:pal-score}),
differentiating the mean and covariance with $\mu$, $F$, and $\psi_0$ fixed
\end{algorithmic}

\end{algorithm}

\begin{remark}[Scalar covariance inflation]
\label{rem:scalar-inflation}

Fix $x$ and compare Gaussian likelihoods with the same mean $\bar y(x)$. For $v>0$, the 
equal-mean Gaussian KL divergence is
\begin{equation}
\begin{aligned}
 K(v)&=D_{\mathrm{KL}}\!\left(
 \mathcal N(\bar y,S)\,\Vert\,\mathcal N(\bar y,v\Sigma_e)\right)\\
 &=\tfrac12\left[
 \frac{m+\tr G-r}{v}-m+m\log v-\log\det G\right],
\end{aligned}
 \label{eq:scalar-covariance-kl}
\end{equation}
where $S=\Sigma_e+UU^\T$ and $G=I_r+U^\T\Sigma_e^{-1}U$.
Differentiation gives the unique minimizer and its remaining divergence:
\begin{equation}
\begin{aligned}
 v_\star&=1+\frac{\tr G-r}{m},\\
 K(v_\star)&=\tfrac12\left[m\log v_\star-\log\det G\right]\ge0.
\end{aligned}
 \label{eq:optimal-scalar-inflation}
\end{equation}
The optimal scalar matches the average variance after whitening by the observation-noise
covariance. At fixed whitened covariance trace and unchanged $\bar y(x)$, anisotropy in the whitened covariance leaves a positive minimum divergence; scalar inflation cannot reproduce its directional weighting.
Concavity of the logarithm, applied to the eigenvalues of
$I_m+\Sigma_e^{-1/2}UU^\T\Sigma_e^{-1/2}$, gives the inequality,
with equality exactly when $UU^\T=\beta\Sigma_e$ for some $\beta\ge0$.
Under the affine-Gaussian assumptions of Proposition~\ref{prop:marginal},
UAL is the exact marginalized likelihood in this comparison.
\end{remark}

\subsection{Likelihood and posterior approximation error}
\label{app:kl-proof}

The error from nuisance linearization is controlled by the response
it omits, measured in observation-noise units.
\begin{proof}[Proof of Proposition~\ref{prop:kl}]
Fix $x$, and denote the affine response in (\ref{eq:expansion}) by
$\widetilde{\mathcal A}_x(\psi)$.
Let $P_x$ and $Q_x$ denote the joint distributions of $(\psi,y)$
under the original and linearized forward models, respectively: 
\begin{equation}
 \begin{aligned}
 P_x(d\psi,dy)&=q(d\psi)\,\mathcal N(dy;\mathcal A(x,\psi),{\Sigma_e}),\\
 Q_x(d\psi,dy)&=q(d\psi)\,\mathcal N(dy;\widetilde{\mathcal A}_x(\psi),{\Sigma_e}).
 \end{aligned}
 \label{eq:coupled-models}
\end{equation}

They share the nuisance distribution $q$ and the conditional noise covariance.
The equal-covariance Gaussian KL formula gives
\begin{equation}
 D_{\mathrm{KL}}(P_x\Vert Q_x)
 =\frac12\E_q\!\left[r_x(\psi)^\T{\Sigma_e^{-1}}r_x(\psi)\right].
 \label{eq:coupled-kl}
\end{equation}
Let $p_{\mathrm{aff}}$ be the measurement marginal of $Q_x$.
Marginalizing $\psi$ cannot increase divergence, so
\begin{equation}
 D_{\mathrm{KL}}\!\left(p_{\mathrm{marg}}(\cdot\mid x)
                      \Vert p_{\mathrm{aff}}(\cdot\mid x)\right)
 \le D_{\mathrm{KL}}(P_x\Vert Q_x).
 \label{eq:kl-marginalization}
\end{equation}
For Gaussian $q$, Proposition~\ref{prop:marginal} identifies
$p_{\mathrm{aff}}$ with $p_{\mathrm{UAL}}$.
For non-Gaussian $q$, this argument bounds the divergence to the
affine mixture, before Gaussian moment matching.
\end{proof}
A small expected squared remainder in observation-noise units
therefore gives a small likelihood divergence.

Proposition~\ref{prop:kl} compares measurement likelihoods at a fixed
signal. The next result connects that approximation to the inference
target: with the same signal prior, the prior-averaged likelihood error
also bounds the posterior error averaged over measurements.

\begin{corollary}[Averaged posterior error]
\label{cor:posterior-error}
Use the same normalized signal prior $p_0$ in the exact marginal and UAL models.
Fix the nuisance model $q=\mathcal N(\mu,\Sigma_\psi)$ and reference $\psi_0$ independently of the evaluated observation.
 The UAL posterior is
$p_{\mathrm{UAL}}(x\mid y)\propto
p_0(x)p_{\mathrm{UAL}}(y\mid x;\psi_0,q)$.
If $\E_{x\sim p_0,\psi\sim q}
[r_x(\psi)^\T{\Sigma_e^{-1}}r_x(\psi)]<\infty$, then
\begin{equation}
\begin{aligned}
 &\E_{y\sim p_{\mathrm{marg}}(y)}
 D_{\mathrm{KL}}\!\left(p_{\mathrm{marg}}(\cdot\mid y)
 \,\Vert\,p_{\mathrm{UAL}}(\cdot\mid y)\right)\\
 &\qquad\le \tfrac12\E_{x\sim p_0,\psi\sim q}
 \!\left[r_x(\psi)^\T{\Sigma_e^{-1}}r_x(\psi)\right].
\end{aligned}
 \label{eq:posterior-kl}
\end{equation}
Here $p_{\mathrm{marg}}(y)$ is the measurement distribution obtained
by drawing $x\sim p_0$ and then $y\sim p_{\mathrm{marg}}(\cdot\mid x)$.

\end{corollary}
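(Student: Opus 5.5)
We will reduce the posterior statement to the likelihood bound of Proposition~\ref{prop:kl} through the chain rule for KL divergence on the joint distribution of $(x,y)$. Consider two joint models sharing the prior: $\Pi(dx,dy)=p_0(dx)\,p_{\mathrm{marg}}(dy\mid x)$ and $\Lambda(dx,dy)=p_0(dx)\,p_{\mathrm{UAL}}(dy\mid x;\psi_0,q)$. Because $q$ and $\psi_0$ are fixed independently of the observation, $p_{\mathrm{UAL}}(\cdot\mid x)$ is a genuine conditional density in $y$ for each $x$, so $\Lambda$ is a well-defined joint law. Its $y$-marginal is $p_{\mathrm{UAL}}(y)=\int p_0(x)p_{\mathrm{UAL}}(y\mid x)\,dx$, and its conditional of $x$ given $y$ is exactly the UAL posterior $p_{\mathrm{UAL}}(x\mid y)$. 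Likewise, $\Pi$ has $y$-marginal $p_{\mathrm{marg}}(y)$ and conditional $p_{\mathrm{marg}}(x\mid y)$.

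Next, we decompose $D_{\mathrm{KL}}(\Pi\Vert\Lambda)$ in two ways.

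\emph{Conditioning on $x$.} Since the two $x$-marginals coincide, the chain rule gives
\[
D_{\mathrm{KL}}(\Pi\Vert\Lambda)=\E_{x\sim p_0}D_{\mathrm{KL}}\big(p_{\mathrm{marg}}(\cdot\mid x)\Vert p_{\mathrm{UAL}}(\cdot\mid x)\big).
\]

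\emph{Conditioning on $y$.} Here
\[
D_{\mathrm{KL}}(\Pi\Vert\Lambda)=D_{\mathrm{KL}}\big(p_{\mathrm{marg}}(y)\Vert p_{\mathrm{UAL}}(y)\big)+\E_{y\sim p_{\mathrm{marg}}(y)}D_{\mathrm{KL}}\big(p_{\mathrm{marg}}(\cdot\mid y)\Vert p_{\mathrm{UAL}}(\cdot\mid y)\big).
\]

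The first term on the right is nonnegative, so the averaged posterior divergence is at most $D_{\mathrm{KL}}(\Pi\Vert\Lambda)$. Applying Proposition~\ref{prop:kl} pointwise in $x$ and averaging over $p_0$ then yields
\[
\tfrac12\E_{x\sim p_0,\psi\sim q}\big[r_x(\psi)^\T\Sigma_e^{-1}r_x(\psi)\big],
\]
which is the claim. The finiteness hypothesis guarantees that all the quantities involved are finite and that the decomposition is not of the form $\infty-\infty$.

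The main obstacle is measure-theoretic rigor in the chain-rule step. We must justify measurability of $x\mapsto p_{\mathrm{UAL}}(y\mid x)$ (through $\bar y(x)$ and $S(x)$, which are continuous given the differentiability of $\mathcal A$) and the existence of regular conditional distributions. We must also check absolute continuity. This is straightforward here: $\Pi\ll\Lambda$ because every $p_{\mathrm{UAL}}(\cdot\mid x)$ is a nondegenerate Gaussian with $S(x)\succeq\Sigma_e\succ0$, hence has full support. For the chain rule, we will invoke the standard disintegration form of KL on Polish spaces, such as $\R^n\times\R^m$, which holds with values in $[0,\infty]$ and therefore requires no integrability beyond the stated hypothesis. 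A further point to verify is that the prior-averaged bound applies even though Proposition~\ref{prop:kl} assumed pointwise finiteness. The averaged hypothesis implies finiteness for $p_0$-almost every $x$, and on the null set where the pointwise bound could fail, the contribution does not affect the expectation.
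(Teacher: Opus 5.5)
Your proposal is correct and is essentially the paper's proof: the same two joint laws sharing $p_0$, the chain rule in both orders, dropping the nonnegative term $D_{\mathrm{KL}}(p_{\mathrm{marg}}(y)\Vert p_{\mathrm{UAL}}(y))$, and averaging Proposition~\ref{prop:kl} over $p_0$. Your measure-theoretic remarks go beyond what the paper writes and are sound, with one small correction: you need only measurability, not continuity, of $x\mapsto(\bar y(x),S(x))$, and this follows from difference quotients of $\mathcal A$ in $\psi$ rather than from any differentiability in $x$.
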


\begin{proof}
Let $P(x,y)=p_0(x)p_{\mathrm{marg}}(y\mid x)$ and
$Q(x,y)=p_0(x)p_{\mathrm{UAL}}(y\mid x)$.
Because the prior is common, the KL chain rule first gives
\begin{equation}
 D_{\mathrm{KL}}(P\Vert Q)
 =\E_{x\sim p_0}D_{\mathrm{KL}}\!\left(
 p_{\mathrm{marg}}(\cdot\mid x)\Vert p_{\mathrm{UAL}}(\cdot\mid x)\right).
 \label{eq:joint-likelihood-kl}
\end{equation}
Let $p_{\rm UAL}(y)$ be the measurement marginal of $Q$. Factoring the same joint laws in the other order gives
\begin{equation}
\begin{aligned}
 D_{\mathrm{KL}}(P\Vert Q)
 &=D_{\mathrm{KL}}\!\left(p_{\mathrm{marg}}(y)\Vert p_{\mathrm{UAL}}(y)\right)\\
 &\quad+\E_{y\sim p_{\mathrm{marg}}(y)}
 D_{\mathrm{KL}}\!\left(p_{\mathrm{marg}}(\cdot\mid y)
                    \Vert p_{\mathrm{UAL}}(\cdot\mid y)\right).
\end{aligned}
 \label{eq:joint-posterior-kl}
\end{equation}
The first term is nonnegative. Thus average posterior divergence is
at most joint divergence, which (\ref{eq:joint-likelihood-kl}) and
Proposition~\ref{prop:kl} bound by the right-hand side of
(\ref{eq:posterior-kl}).

\end{proof}

\paragraph{A quadratic remainder envelope.}

Under the hypotheses of Proposition~\ref{prop:kl}, suppose $L\ge0$ and $\|r_x(\psi)\|\le L\|\psi-\psi_0\|^2/2$ for $q$-almost every $\psi$. This curvature bound controls the likelihood error through the fourth nuisance moment:
\begin{equation}
 \frac12\E_q[r_x^\T{\Sigma_e^{-1}}r_x]
 \le \frac{L^2}{8\lambda_{\min}({\Sigma_e})}
       \E_q\|\psi-\psi_0\|^4.
 \label{eq:kl-envelope}
\end{equation}
This follows from $r_x(\psi)^\T{\Sigma_e^{-1}}r_x(\psi)\le\|r_x(\psi)\|^2/\lambda_{\min}({\Sigma_e})$
and the remainder envelope.
The fourth moment is finite for Gaussian $q$.
At $\psi_0=\mu$ with scalar ${\Sigma_\psi}=c$ and ${\Sigma_e}=\sigma^2I$,
$\E_q(\psi-\mu)^4=3c^2$ gives the bound $3L^2c^2/(8\sigma^2)$.
At fixed curvature bound and observation-noise variance, this bound grows
quadratically with the nuisance variance.

For the corresponding averaged posterior bound, allow the curvature bound to depend on the signal. Suppose additionally that
$\|r_x(\psi)\|\le L(x)\|\psi-\psi_0\|^2/2$
for $p_0\otimes q$-almost every $(x,\psi)$, where $L(x)\ge0$ is
measurable and
$\E_{x\sim p_0}[L(x)^2\E_q\|\psi-\psi_0\|^4]<\infty$.
Then the right-hand side of (\ref{eq:posterior-kl}) is at most
\begin{equation}
 \frac{\E_{x\sim p_0}\!\left[L(x)^2\E_q\|\psi-\psi_0\|^4\right]}
      {8\lambda_{\min}({\Sigma_e})}.
 \label{eq:posterior-envelope}
\end{equation}
Applying (\ref{eq:kl-envelope}) inside the prior expectation gives (\ref{eq:posterior-envelope}).

\subsection{Moment errors from nuisance linearization}
\label{sec:linearization}
Nuisance curvature can change both the marginal mean and covariance,
even when the first-order mean correction vanishes.
\begin{proposition}[Moment errors from linearization]
\label{prop:linearization}
At fixed $x$ and $\psi_0=\mu$, let $J=J_\psi(x)$ and suppose
$L\ge0$ and $\|r_x(\psi)\|\le L\|\psi-\mu\|^2/2$ on the support of $q$,
and the moments below are finite.
Writing $\delta=\psi-\mu$, the mean omitted by (\ref{eq:S}) is
$b_r=\E r_x$ with $\|b_r\|\le L\tr {\Sigma_\psi}/2$.
Let $\Delta S=\Cov(y\mid x)-S(x)$, where the conditional covariance retains the nonlinear remainder. Then
\begin{equation}
  \max\{|\tr\Delta S|,\|\Delta S\|_{\mathrm{op}}\}
  \le\frac{L^2}{4}\E\|\delta\|^4
       +L\,\E\!\left[\|\delta\|^2\|J\delta\|\right].
  \label{eq:linearization}
\end{equation}
For scalar Gaussian $\delta$ with variance $c$, writing $J=j$, the right-hand side is
$3L^2c^2/4+\sqrt{8/\pi}\,L\|j\|c^{3/2}$.
\end{proposition}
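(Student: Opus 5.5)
At $\psi_0=\mu$ we have $\mathcal A(x,\psi)=\mathcal A(x,\mu)+J\delta+r$ with $r=r_x(\psi)$, $\delta=\psi-\mu$, $\E\delta=0$ and $\Cov\delta=\Sigma_\psi$. The plan is to compute the exact conditional moments of $y=\mathcal A(x,\psi)+e$ and compare them with (\ref{eq:S}).

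\textbf{Mean.} The conditional mean is $\mathcal A(x,\mu)+\E r$, so the omitted mean is $b_r=\E r$. Jensen's inequality for the norm and the remainder envelope give
\[
\|b_r\|\le\E\|r\|\le \tfrac L2\E\|\delta\|^2=\tfrac L2\tr\Sigma_\psi .
\]

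\textbf{Covariance.} By independence of $e$ from $\psi$,
\[
\Cov(y\mid x)=\Sigma_e+\Cov(J\delta+r).
\]
Expanding,
\[
\Cov(J\delta+r)=J\Sigma_\psi J^\T+\Cov(r)+\E[J\delta\,(r-b_r)^\T]+\E[(r-b_r)\,\delta^\T J^\T].
\]
Since $\E J\delta=0$, the cross term equals $C=\E[J\delta\,r^\T]$. Therefore
\[
\Delta S=\Cov(r)+C+C^\T .
\]

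\textbf{Bounding each piece.} Both $|\tr M|$ and $\|M\|_{\rm op}$ are controlled by simple expectations.
\begin{itemize}
\item Since $\Cov(r)\preceq\E[rr^\T]$ and the matrix is PSD, $\max\{\tr\Cov(r),\|\Cov(r)\|_{\rm op}\}\le\E\|r\|^2\le \tfrac{L^2}{4}\E\|\delta\|^4$.
\item $|\tr C|=|\E[r^\T J\delta]|\le\E[\|r\|\|J\delta\|]$, and $\|C\|_{\rm op}\le\E\|J\delta\|\|r\|$ because the operator norm of the rank-one matrix $ab^\T$ is $\|a\|\|b\|$.
\item The same bounds hold for $C^\T$.
\end{itemize}
Inserting the envelope $\|r\|\le L\|\delta\|^2/2$ bounds $C+C^\T$ by $2\cdot\tfrac L2\E[\|\delta\|^2\|J\delta\|]=L\,\E[\|\delta\|^2\|J\delta\|]$. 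The triangle inequality then gives (\ref{eq:linearization}).

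Finiteness of the moments justifies all of these expectations. I would also note that the trace of the PSD term is nonnegative, so the absolute value only matters for the cross terms.

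\textbf{Scalar Gaussian case.} With $\delta\sim\mathcal N(0,c)$ we have $\E\delta^4=3c^2$, which yields $3L^2c^2/4$. For the second term, $J\delta=j\delta$, so $\E[\delta^2|\delta|]\,\|j\|=\E|\delta|^3\,\|j\|$. Using $\E|\delta|^3=2\sqrt{2/\pi}\,c^{3/2}=\sqrt{8/\pi}\,c^{3/2}$ gives the stated expression.

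\textbf{Main obstacle.} The step that needs the most care is the covariance bookkeeping. One must confirm that the cross term reduces to $\E[J\delta\,r^\T]$ without a $b_r$ correction, which holds because $\E\delta=0$. One must also confirm that the trace and operator norms of the non-symmetric pieces are bounded by the same scalar expectation. Everything else is routine.
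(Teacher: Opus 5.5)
Your proposal is correct and follows essentially the same route as the paper's proof. Both write $\Delta S=\Cov(r_x)+\E[J\delta\, r_x^\T]+\E[r_x\delta^\T J^\T]$ using $\E\delta=0$, bound the PSD term by $\|\Cov(r_x)\|_{\mathrm{op}}\le\tr\Cov(r_x)\le\E\|r_x\|^2$, bound the trace and operator norm of each cross term by $\E[\|J\delta\|\|r_x\|]$, substitute the remainder envelope, and use the Gaussian moments $\E\delta^4=3c^2$ and $\E|\delta|^3=\sqrt{8/\pi}\,c^{3/2}$.
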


\begin{proof}
Write $\delta=\psi-\mu$. 
 The exact response is
$y=\mathcal A(x,\mu)+J\delta+r_x(\psi)+e$.
Its omitted mean is $b_r=\E r_x(\psi)$, so
$\|b_r\|\le\E\|r_x(\psi)\|\le L\E\|\delta\|^2/2=L\tr {\Sigma_\psi}/2$.
Since $\E\delta=0$ and the observation noise is independent,
\begin{equation}
 \Delta S=\Cov(r_x(\psi))+\E[J\delta r_x(\psi)^\T]+\E[r_x(\psi)\delta^\T J^\T].
 \label{eq:omitted-covariance}
\end{equation}
The covariance term is positive semidefinite, with
$\|\Cov(r_x(\psi))\|_{\mathrm{op}}\le\tr\Cov(r_x(\psi))\le\E\|r_x(\psi)\|^2$.
Each cross term has both absolute trace and operator norm at most
$\E[\|J\delta\|\|r_x(\psi)\|]$. Thus $|\tr\Delta S|$ and
$\|\Delta S\|_{\mathrm{op}}$ are each bounded by
$\E\|r_x(\psi)\|^2+2\E[\|J\delta\|\|r_x(\psi)\|]$.
Substituting the remainder bound gives (\ref{eq:linearization}).
For scalar Gaussian $\delta$, $\E\delta^4=3c^2$ and
$\E|\delta|^3=\sqrt{8/\pi}\,c^{3/2}$ give the stated constants.
\end{proof}
For scalar Gaussian variance $c$, the omitted mean is $O(c)$ and the covariance-error bound is $O(c^{3/2}+c^2)$ at fixed curvature and sensitivity. The nonlinear remainder can increase or decrease variance along a measurement direction through its cross terms with the linear response.

\begin{remark}[Phase responses: pulse-echo ultrasound]
\label{rem:ultrasound}
A phase response can have a nonlinear mean shift even when the first-order mean correction vanishes. For a complex sample $s$ perturbed to $se^{i\theta}$ with $\theta\sim\mathcal N(0,\sigma_\theta^2)$,
\begin{equation}
 \E[se^{i\theta}]=s e^{-\sigma_\theta^2/2}.
 \label{eq:phase-mean}
\end{equation}
Zero-mean Gaussian phase uncertainty attenuates the mean response by $e^{-\sigma_\theta^2/2}$; the first-order mean $s$ omits this attenuation.
\end{remark}

\subsection{Likelihood error from covariance truncation}

We next keep the UAL mean fixed and truncate only its covariance. For a positive-semidefinite discarded covariance, the following corollary bounds the divergence from UAL to the Gaussian likelihood with retained covariance $S_K$. The error is small when the discarded covariance is small after whitening by $S_K$.

\begin{corollary}[Covariance-truncation divergence]
\label{cor:covariance-truncation}
Fix $x$ and retain the UAL mean $\bar y(x)$.
Let $C_{y,K}\in\R^{m\times m}$ be the retained nuisance-induced
measurement covariance, satisfying
$0\preceq C_{y,K}\preceq J_\psi(x)\Sigma_\psi J_\psi(x)^\T$.
The truncated likelihood is
$p_{\mathrm{UAL},K}=\mathcal N(\bar y,S_K)$, with
$S_K=\Sigma_e+C_{y,K}$.
Its discarded covariance is $S-S_K$.
Writing this discarded covariance in whitened coordinates gives
$E_K=S_K^{-1/2}(S-S_K)S_K^{-1/2}$. Then

\begin{equation}
 D_{\mathrm{KL}}(p_{\mathrm{UAL}}\Vert p_{\mathrm{UAL},K})
 =\frac12[\tr E_K-\log\det(I+E_K)]
 \le\frac14\|E_K\|_F^2 .
 \label{eq:covariance-truncation-kl}
\end{equation}
This includes truncating the covariance factor, provided the discarded
covariance is positive semidefinite and the mean is unchanged.
\end{corollary}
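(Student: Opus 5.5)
The plan is to use the standard equal-mean Gaussian KL formula and then a scalar inequality on the eigenvalues of the whitened discarded covariance.

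\textbf{Step 1 (KL formula).} For two Gaussians with the same mean $\bar y$, where $S\succ0$ and $S_K\succ0$ (both hold because $\Sigma_e\succ0$),
\[
D_{\mathrm{KL}}(\mathcal N(\bar y,S)\Vert\mathcal N(\bar y,S_K))=\tfrac12\big[\tr(S_K^{-1}S)-m-\log\det(S_K^{-1}S)\big].
\]
Write $S=S_K+(S-S_K)$. By cyclicity of trace and multiplicativity of determinant,
\[
\tr(S_K^{-1}S)=m+\tr E_K,\qquad \det(S_K^{-1}S)=\det\!\big(S_K^{-1/2}SS_K^{-1/2}\big)=\det(I+E_K).
\]
Substituting these gives the stated identity.

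\textbf{Step 2 (positive semidefiniteness).} The hypothesis $C_{y,K}\preceq J_\psi\Sigma_\psi J_\psi^\T$ gives
\[
S-S_K=J_\psi\Sigma_\psi J_\psi^\T-C_{y,K}\succeq0,
\]
so $E_K$ is symmetric positive semidefinite. Let its eigenvalues be $\lambda_i\ge0$. Then the divergence equals
\[
\tfrac12\sum_i\big(\lambda_i-\log(1+\lambda_i)\big),\qquad \|E_K\|_F^2=\sum_i\lambda_i^2 .
\]

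\textbf{Step 3 (scalar inequality).} It remains to show $\lambda-\log(1+\lambda)\le\lambda^2/2$ for $\lambda\ge0$. Set $h(\lambda)=\lambda^2/2-\lambda+\log(1+\lambda)$. Then $h(0)=0$ and
\[
h'(\lambda)=\lambda-1+\frac{1}{1+\lambda}=\frac{\lambda^2}{1+\lambda}\ge0,
\]
so $h\ge0$ on $[0,\infty)$. Summing over eigenvalues and multiplying by $\tfrac12$ gives the bound $\tfrac14\|E_K\|_F^2$.

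This last step is where the positive-semidefiniteness assumption matters. For negative $\lambda$ near $-1$, $h$ becomes negative and the bound fails, so the hypothesis $C_{y,K}\preceq J_\psi\Sigma_\psi J_\psi^\T$ is what makes the argument go through.

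\textbf{Factor truncation.} The final sentence of the statement is covered by Steps 1–3. Suppose $F=[F_K\ F_\perp]$ and $C_{y,K}=J_\psi F_KF_K^\T J_\psi^\T$. Then the discarded covariance is $J_\psi F_\perp F_\perp^\T J_\psi^\T\succeq0$, so the hypotheses hold.
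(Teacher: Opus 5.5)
Your proposal is correct and follows the paper's proof essentially step for step. Both use the equal-mean Gaussian KL formula, rewritten through $S_K^{-1/2}SS_K^{-1/2}=I+E_K$, followed by the scalar bound $\lambda-\log(1+\lambda)\le\lambda^2/2$ applied to the nonnegative eigenvalues of $E_K$; you additionally prove that inequality and state explicitly that $S-S_K\succeq0$, both of which the paper uses without comment.
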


\begin{proof}
The equal-mean Gaussian KL formula gives
$\tfrac12[\tr(S_K^{-1}S)-m-\log\det(S_K^{-1}S)]$.
The matrix $S_K^{-1/2}SS_K^{-1/2}=I+E_K$ has the same eigenvalues
as $S_K^{-1}S$, which gives the identity.
Let $\epsilon_i\ge0$ be the eigenvalues of $E_K$.
Applying $0\le a-\log(1+a)\le a^2/2$ to each eigenvalue gives
\begin{equation}
 \frac12\sum_i[\epsilon_i-\log(1+\epsilon_i)]
 \le\frac14\sum_i\epsilon_i^2
 =\frac14\|E_K\|_F^2.
 \label{eq:truncation-eigenvalues}
\end{equation}

\end{proof}
For an unchanged mean and positive-semidefinite covariance discard, the retained covariance determines which discarded directions contribute most to likelihood divergence.

\section{Fixed-sensitivity reconstruction analysis}
\label{app:linear-risk}
\label{sec:guarantees}

Under matched nuisance moments, the UAL linear estimator minimizes average
squared error among linear estimators; at a fixed nuisance realization its error
can exceed the point estimator's. We quantify this tradeoff and the reconstruction
cost of covariance truncation.

\subsection{Fixed-sensitivity model and estimator definitions}
\label{app:setting}

A fixed-sensitivity linear model lets us compute both filters' risks exactly and identify when covariance weighting helps. Holding the forward sensitivity and signal moments fixed isolates the reconstruction effect of changing the measurement-error covariance.

\begin{assumption}[Fixed-sensitivity linear model]
\label{as:frozen}
Let $A_0\in\R^{m\times n}$ be a fixed reference operator and
$J\in\R^{m\times P}$ a fixed nuisance sensitivity.
Set $\psi_0=\mu$ and consider
\[
 y=A_0x+J(\psi-\mu)+e.
\]
The signal is centered, with $\E x=0$ and $\Cov x=\Sigma_x$, and is
independent of $\psi$ and $e$. The uncertainty model $q$ has mean $\mu$
and covariance ${\Sigma_\psi}$. The noise is centered, independent of $\psi$, and
has covariance ${\Sigma_e}$.
Thus the effective noise $\varepsilon=J(\psi-\mu)+e$ has mean zero and covariance
$S={\Sigma_e}+J{\Sigma_\psi}J^\T$. For a response linear in the signal, $\mathcal A(x,\psi)=A(\psi)x$, take $A_0=A(\mu)$ and hold the local nuisance sensitivity $J$ fixed as $x$ varies.

\end{assumption}

The two estimators are $\widehat x=W_0y$ and $\widehat x=W_Sy$, with
\begin{equation}
  M_0=A_0\Sigma_x A_0^\T+{\Sigma_e},\quad M_S=M_0+J{\Sigma_\psi}J^\T,\qquad
  W_0=\Sigma_x A_0^\T M_0^{-1},\quad W_S=\Sigma_x A_0^\T M_S^{-1}.
  \label{eq:arms}
\end{equation}
The matrices $M_0$ and $M_S$ are the measurement covariances under the point likelihood and UAL, respectively; both estimators use the same signal covariance $\Sigma_x$. At $\psi_0=\mu$, the mean correction vanishes.
Define $\mathcal R(W)=\E\|Wy-x\|^2$, averaging over the signal,
nuisance parameters, and observation noise.

These moment assumptions suffice for the linear-risk comparisons.
Gaussianity is needed only where the linear estimator is identified
with the posterior mean. Conditional risk fixes the realized nuisance value $\psi^\star$ and averages over $x$ and $e$.

\begin{remark}[Nonzero prior mean or reference offset]
\label{rem:offset}
Let $m_x=\E x$, $\delta_\mu=\mu-\psi_0$, and consider the fixed-matrix model $y=A_{\rm ref}x+J(\psi-\psi_0)+e$, where $A_{\rm ref}=A(\psi_0)$. The estimator $Wy$ has mean error $a_W=(WA_{\rm ref}-I)m_x+WJ\delta_\mu$. Relative to the centered model with the same matrices, the average risk increases by $\|a_W\|^2$. Conditional on $\delta=\psi^\star-\mu$, the risk also includes the cross term $2a_W^\T WJ\delta$. The centered affine estimator
\[
 \widehat x=m_x+W[y-A_{\rm ref}m_x-J\delta_\mu]
\]
removes these terms. The shift $J\delta_\mu$ is the mean correction in (\ref{eq:S}). The results below use fixed matrices with $m_x=0$ and $\delta_\mu=0$.
\end{remark}

\subsection{Average risk and the optimal linear estimator}

Under the matched-moment model, the UAL filter uniquely minimizes average squared
error among linear filters. The squared-norm term in (\ref{eq:square}) gives the excess
risk of using a different filter.

\begin{proposition}[Average reconstruction error (LMMSE optimality)]
\label{prop:risk}
Under Assumption~\ref{as:frozen}, $W_S$ minimizes $\mathcal R(W)$ among
linear estimators, so $\mathcal R(W_S)\le\mathcal R(W_0)$.
When $x$, $\psi$, and $e$ are Gaussian, $W_Sy$ is also the posterior mean.
\end{proposition}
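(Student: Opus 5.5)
The plan is to expand the risk as a quadratic in $W$ and complete the square, which yields the excess-risk identity referred to as (\ref{eq:square}). Under Assumption~\ref{as:frozen}, write $y=A_0x+\varepsilon$ with $\varepsilon=J(\psi-\mu)+e$. The effective noise $\varepsilon$ has mean zero and covariance $S$, and it is uncorrelated with $x$: $x$ is independent of $\psi$ and $e$, and $\E x=0$.

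First compute the second moments:
\[
\E[yy^\T]=A_0\Sigma_xA_0^\T+S=M_S,
\qquad
\E[xy^\T]=\Sigma_xA_0^\T.
\]
Then
\[
\mathcal R(W)=\tr\!\big(WM_SW^\T\big)-2\tr\!\big(W A_0\Sigma_x\big)+\tr\Sigma_x .
\]
Because $\Sigma_e\succ0$, we have $M_S\succ0$, so $M_S^{-1}$ exists. Completing the square with $W_S=\Sigma_xA_0^\T M_S^{-1}$ gives
\[
\mathcal R(W)=\mathcal R(W_S)+\tr\!\big((W-W_S)M_S(W-W_S)^\T\big),
\qquad
\mathcal R(W_S)=\tr\!\big(\Sigma_x-\Sigma_xA_0^\T M_S^{-1}A_0\Sigma_x\big).
\]
The added term is $\|(W-W_S)M_S^{1/2}\|_F^2\ge0$, and it is zero only when $W=W_S$. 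So $W_S$ is the unique minimizer among linear estimators. Taking $W=W_0$ in particular gives $\mathcal R(W_S)\le\mathcal R(W_0)$.

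The only delicate point is that the risk depends on the distributions only through these first and second moments. That is why the result holds for any $q$ with the matched mean and covariance, and why the model-dependent quantity $M_S$, not $M_0$, is the correct weighting. This is where I would spend the most care, checking that every cross term vanishes: $\E[x(\psi-\mu)^\T]=0$ and $\E[xe^\T]=0$ follow from independence together with $\E x=0$, and $\E[(\psi-\mu)e^\T]=0$ follows from independence of $\psi$ and $e$.

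For the Gaussian claim, when $x$, $\psi$, and $e$ are jointly Gaussian and independent, $(x,y)$ is jointly Gaussian with zero mean. The Gaussian conditioning formula then gives
\[
\E[x\mid y]=\Cov(x,y)\Cov(y)^{-1}y=\Sigma_xA_0^\T M_S^{-1}y=W_Sy .
\]
Equivalently, by Proposition~\ref{prop:marginal} with $\psi_0=\mu$, the likelihood is $\mathcal N(A_0x,S)$. Combined with the prior $\mathcal N(0,\Sigma_x)$, the standard Gaussian posterior mean, rewritten with the Woodbury identity, is again $W_Sy$.
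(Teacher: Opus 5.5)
Your proposal is correct and follows the paper's proof essentially step for step. It uses the same expansion $\mathcal R(W)=\tr\Sigma_x-2\tr(WA_0\Sigma_x)+\tr(WM_SW^\T)$, the same completion of the square via $W_SM_S=\Sigma_xA_0^\T$ giving excess risk $\|(W-W_S)M_S^{1/2}\|_F^2$ (unique since $M_S\succ0$), and the same Gaussian conditioning identification $\E[x\mid y]=\Sigma_xA_0^\T M_S^{-1}y$. Only your optional second route, through Proposition~\ref{prop:marginal} and the information-form posterior rewritten by Woodbury, implicitly needs $\Sigma_x\succ0$; the conditioning argument does not, and the paper explicitly notes uniqueness even for singular $\Sigma_x$.
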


\begin{proof}
Under Assumption~\ref{as:frozen}, write $Wy-x=(WA_0-I)x+W\varepsilon$. Independence and zero means eliminate the cross term. Expanding the remaining traces gives, for any linear filter $W$,
\begin{equation}
  \mathcal R(W) = \tr\Sigma_x - 2\tr(WA_0\Sigma_x) + \tr\big(WM_SW^\T\big).
  \label{eq:risk}
\end{equation}
Since $W_SM_S=\Sigma_xA_0^\T$, completing the square in (\ref{eq:risk}) gives
\begin{equation}
 \begin{aligned}
 \mathcal R(W)&=\mathcal R(W_S)+\|(W-W_S)M_S^{1/2}\|_F^2,\\
 \mathcal R(W_S)&=\tr\Sigma_x
   -\tr(\Sigma_xA_0^\T M_S^{-1}A_0\Sigma_x).
 \end{aligned}
 \label{eq:square}
\end{equation}
The squared norm is the excess risk above $W_S$.
Because $M_S\succ0$, it vanishes only when $W=W_S$.
The minimizer is therefore unique even if $\Sigma_x$ is singular.

When $x$, $\psi$, and $e$ are Gaussian, $(x,y)$ is jointly Gaussian
with $\Cov(x,y)=\Sigma_xA_0^\T$ and $\Cov(y)=M_S$. Hence
\[
 \E[x\mid y]=\Sigma_xA_0^\T M_S^{-1}y=W_Sy.
\]
\end{proof}
The results below characterize the gain of $W_S$ and the crossover at a fixed nuisance realization.

\subsection{Average gain and its normalization}

The gain is the point estimator's nuisance-error contribution
multiplied by the fraction removed by covariance weighting.
\begin{proposition}[Average gain and its normalization]
\label{prop:excess}

Under Assumption~\ref{as:frozen}, use ${\Sigma_\psi}=FF^\T$ and $U=JF$
as in Section~\ref{sec:pal-evaluation}, with $J$ now fixed.
Define the dimensionless Gram matrix $\Lambda=U^\T M_0^{-1}U$.
The matrix $\Lambda$ uses the predictive covariance $M_0$; the likelihood-evaluation matrix $G$ uses the observation-noise covariance $\Sigma_e$. Then
\begin{equation}
  \mathcal R(W_0)-\mathcal R(W_S)
  =\tr\!\left(W_0{U}\Lambda(I+\Lambda)^{-1}{U^\T} W_0^\T\right)\ge0.
  \label{eq:excess}
\end{equation}
Let $H_0=\tr(W_0J{\Sigma_\psi}J^\T W_0^\T)$ be the point estimator\textquoteright s nuisance-error contribution. When $H_0>0$, define
$\kappa_0=[\mathcal R(W_0)-\mathcal R(W_S)]/H_0$.
For one nuisance parameter, ${\Sigma_\psi}=c$ and $J=j$,
\begin{equation}
  \kappa_0=\frac{\lambda}{1+\lambda},\qquad
  \lambda=c\,j^\T M_0^{-1}j.
  \label{eq:capture}
\end{equation}
The net average gain recovers a small fraction of $H_0$ at small $\lambda$
and nearly all of $H_0$ at large $\lambda$.
For multiple parameters, $\kappa_0$ is a nonnegative weighted average of
$\lambda_i/(1+\lambda_i)$ over the eigenvalues of $\Lambda$.
\end{proposition}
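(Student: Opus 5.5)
The plan is to start from the completed-square identity (\ref{eq:square}) in Proposition~\ref{prop:risk}. Taking $W=W_0$ there gives
\[
\mathcal R(W_0)-\mathcal R(W_S)=\|(W_0-W_S)M_S^{1/2}\|_F^2=\tr\big((W_0-W_S)M_S(W_0-W_S)^\T\big)\ge0 ,
\]
so nonnegativity is immediate. What remains is to rewrite $W_0-W_S$ in the reduced $r$-dimensional coordinates.

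\textbf{Step 1: Woodbury on $M_S$.} Since $M_S=M_0+UU^\T$,
\[
M_S^{-1}=M_0^{-1}-M_0^{-1}U(I+\Lambda)^{-1}U^\T M_0^{-1}, \qquad \Lambda=U^\T M_0^{-1}U .
\]
Therefore
\[
W_0-W_S=\Sigma_xA_0^\T(M_0^{-1}-M_S^{-1})=W_0U(I+\Lambda)^{-1}U^\T M_0^{-1}.
\]

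\textbf{Step 2: collapse the sandwich.} Compute
\[
U^\T M_0^{-1}M_SM_0^{-1}U=\Lambda+\Lambda^2=\Lambda(I+\Lambda).
\]
Substituting,
\[
(W_0-W_S)M_S(W_0-W_S)^\T=W_0U(I+\Lambda)^{-1}\Lambda(I+\Lambda)(I+\Lambda)^{-1}U^\T W_0^\T .
\]
Because $\Lambda$ commutes with $(I+\Lambda)^{-1}$, this equals $W_0U\Lambda(I+\Lambda)^{-1}U^\T W_0^\T$, which gives (\ref{eq:excess}). The matrix $\Lambda(I+\Lambda)^{-1}$ is positive semidefinite, which confirms nonnegativity a second way.

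\textbf{Step 3: scalar and multiparameter normalization.} Note that $H_0=\tr(W_0UU^\T W_0^\T)$.

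\emph{Scalar case.} With $r=1$, $F=\sqrt c$ and $U=\sqrt c\,j$, so $\Lambda=\lambda=c\,j^\T M_0^{-1}j$ is a scalar. Then the gain is $\tfrac{\lambda}{1+\lambda}\,\tr(W_0UU^\T W_0^\T)=\tfrac{\lambda}{1+\lambda}H_0$, which gives (\ref{eq:capture}).

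\emph{General case.} Diagonalize $\Lambda=V\,\mathrm{diag}(\lambda_i)V^\T$ and set $b_i=\|W_0Uv_i\|^2\ge0$. Then
\[
\mathcal R(W_0)-\mathcal R(W_S)=\sum_i b_i\,\frac{\lambda_i}{1+\lambda_i},\qquad H_0=\sum_i b_i ,
\]
so $\kappa_0$ is a weighted average with nonnegative weights $b_i/H_0$.

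\textbf{Main obstacle.} The only step needing care is Step 2, where it is easy to drop the $\Lambda^2$ term. I would verify it by direct expansion. I would also note that $F$ need not be square, since $\Sigma_\psi$ may be singular; the Woodbury step still holds because $I+\Lambda\succ0$.
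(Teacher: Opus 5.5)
Your proof is correct; it differs from the paper's only in how the risk gap is decomposed. The paper never forms $W_0-W_S$. It writes both risks around a common baseline $\mathcal R_0(W_0)=\tr\Sigma_x-\tr(\Sigma_xA_0^\T M_0^{-1}A_0\Sigma_x)$, the point filter's risk at zero nuisance displacement. Splitting $M_S=M_0+UU^\T$ in (\ref{eq:risk}) gives $\mathcal R(W_0)=\mathcal R_0(W_0)+H_0$. Substituting Woodbury into the closed form in (\ref{eq:square}) gives $\mathcal R(W_S)=\mathcal R_0(W_0)+\tr(W_0U(I+\Lambda)^{-1}U^\T W_0^\T)$. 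Subtracting and using $I-(I+\Lambda)^{-1}=\Lambda(I+\Lambda)^{-1}$ then yields (\ref{eq:excess}).

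That route avoids your sandwich computation $U^\T M_0^{-1}M_SM_0^{-1}U=\Lambda(I+\Lambda)$. It also makes the normalization transparent. $H_0$ is the nuisance term of $W_0$, and $\tr(W_0U(I+\Lambda)^{-1}U^\T W_0^\T)$ is the nuisance cost that survives covariance weighting, so $\kappa_0$ is read off directly as the fraction removed.

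Your route uses the excess-risk form of the completed square instead. This gives nonnegativity immediately and produces the explicit filter difference $W_0-W_S=W_0U(I+\Lambda)^{-1}U^\T M_0^{-1}$. By the push-through identity $U^\T M_S^{-1}=(I+\Lambda)^{-1}U^\T M_0^{-1}$, this equals the resolvent form $W_0UU^\T M_S^{-1}$. The paper derives that form separately in the proof of Proposition~\ref{prop:perdraw-general}, so your computation would serve both results. Your scalar and eigenbasis steps coincide with the paper's.
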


Scaling up the nuisance covariance by a common positive factor increases the fraction of the point estimator's nuisance error removed by covariance weighting. The absolute gain is $H_0\kappa_0$. 

If $H_0=0$, the estimators coincide and $\kappa_0$ is undefined.

\paragraph{Eigenvalue form.}

Let $\Lambda v_i=\lambda_i v_i$ with orthonormal $v_i$, and define
${\omega_i}=\|W_0{U}v_i\|^2$. For $H_0>0$,
\[
 \kappa_0=\frac{\sum_i{\omega_i}\,\lambda_i/(1+\lambda_i)}{\sum_i{\omega_i}},
 \qquad
 \frac{\lambda_{\min}}{1+\lambda_{\min}}
 \le\kappa_0\le
 \frac{\lambda_{\max}}{1+\lambda_{\max}}.
\]
The weights measure how much each factor direction contributes to
the point estimator's nuisance error.

\begin{proof}
Write $\mathcal R_0(W)=\E_{x,e}\|W(A_0x+e)-x\|^2$ for the
risk when the nuisance displacement is zero. For the estimator $W_0$,
the average risk separates into this zero-displacement risk and its
nuisance contribution:
\[
 {\mathcal R_0(W_0)}
 =\tr\Sigma_x-\tr(\Sigma_xA_0^\T M_0^{-1}A_0\Sigma_x),
 \qquad
 \mathcal R(W_0)={\mathcal R_0(W_0)}+H_0,
\]
where $H_0=\|W_0{U}\|_F^2$.
The Woodbury identity, expressed through $F$ and valid for singular ${\Sigma_\psi}$, gives
\[
 M_S^{-1}=M_0^{-1}
 -M_0^{-1}{U}(I+\Lambda)^{-1}{U^\T} M_0^{-1}.
\]
Substitution into (\ref{eq:square}) yields
\[
 \mathcal R(W_S)={\mathcal R_0(W_0)}
 +\tr\!\left(W_0{U}(I+\Lambda)^{-1}{U^\T} W_0^\T\right).
\]
Subtracting this expression from $\mathcal R(W_0)$ and using
$I-(I+\Lambda)^{-1}=\Lambda(I+\Lambda)^{-1}\succeq0$
proves (\ref{eq:excess}).

For one parameter, divide by $H_0=c\|W_0j\|^2>0$ to obtain
(\ref{eq:capture}). For multiple parameters, express both traces in
the eigenbasis of $\Lambda$: their terms are
${\omega_i}\lambda_i/(1+\lambda_i)$ and ${\omega_i}$.
If $H_0=0$, then $W_0{U}=0$ and the Woodbury formula gives $W_S=W_0$.

\end{proof}

\subsection{Conditional reconstruction error and the crossover}

At a fixed nuisance displacement, covariance weighting changes the nuisance-error contribution and incurs a filter-change cost.
\begin{proposition}[Multivariate conditional reconstruction error]
\label{prop:perdraw-general}
Under Assumption~\ref{as:frozen}, fix $\psi^\star=\mu+\delta$ and write
$b=J\delta$ for the realized nuisance term.
Let $\mathcal R_\delta(W)$ be the mean squared error averaged over $x$
and $e$ with $\delta$ fixed. For any linear $W$,
\begin{equation}
\begin{aligned}
  \mathcal R_\delta(W)
  &=\tr\!\left((WA_0-I)\Sigma_x(WA_0-I)^\T\right)
    +\tr(W{\Sigma_e} W^\T)+\|Wb\|^2,\\
  \mathcal R_\delta(W_0)-\mathcal R_\delta(W_S)
  &=\underbrace{\|W_0J\delta\|^2-\|W_SJ\delta\|^2}_{\text{difference in nuisance-error contributions}}-\underbrace{\tau_0({\Sigma_\psi})}_{\text{filter-change cost}},\\
  \tau_0({\Sigma_\psi})
  &=\tr\!\left((W_S-W_0)M_0(W_S-W_0)^\T\right)\ge0.
\end{aligned}
  \label{eq:perdraw}
\end{equation}
Moreover, $\tau_0({\Sigma_\psi})>0$ if and only if $W_0J{\Sigma_\psi}\ne0$.
\end{proposition}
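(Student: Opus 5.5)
We will prove the three displayed claims in order: the conditional risk formula, then the decomposition of the risk difference, then the positivity criterion for $\tau_0$.

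\emph{Conditional risk formula.} With $\delta$ fixed, the error is
\[
Wy-x=(WA_0-I)x+We+Wb .
\]
Because $x$ and $e$ are centered and independent, and $b$ is deterministic, all cross terms vanish in expectation. The three squared terms contribute $\tr((WA_0-I)\Sigma_x(WA_0-I)^\T)$, $\tr(W\Sigma_eW^\T)$ and $\|Wb\|^2$, which is the first line of (\ref{eq:perdraw}).

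\emph{Risk difference.} Write $\mathcal R_\delta(W)=\mathcal Q(W)+\|WJ\delta\|^2$, where $\mathcal Q(W)$ collects the first two traces. Expanding, $\mathcal Q(W)=\tr\Sigma_x-2\tr(WA_0\Sigma_x)+\tr(WM_0W^\T)$, which is the point-likelihood analogue of (\ref{eq:risk}). Since $W_0M_0=\Sigma_xA_0^\T$, completing the square exactly as in the proof of Proposition~\ref{prop:risk} gives
\[
\mathcal Q(W)=\mathcal Q(W_0)+\tr\!\big((W-W_0)M_0(W-W_0)^\T\big).
\]
Setting $W=W_S$ yields $\mathcal Q(W_0)-\mathcal Q(W_S)=-\tau_0(\Sigma_\psi)$. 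Adding the nuisance terms $\|W_0J\delta\|^2-\|W_SJ\delta\|^2$ gives the second line of (\ref{eq:perdraw}). The bound $\tau_0\ge0$ follows from $M_0\succ0$.

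\emph{Positivity criterion.} Since $M_0\succ0$, we have $\tau_0>0$ if and only if $W_S\neq W_0$. Using the Woodbury form from the proof of Proposition~\ref{prop:excess},
\[
W_S-W_0=-W_0U(I+\Lambda)^{-1}U^\T M_0^{-1}.
\]
If $W_0J\Sigma_\psi=0$, then $W_0UU^\T=0$, so $W_0U=0$ (by comparing with $\|W_0U\|_F^2$), and hence $W_S=W_0$. For the converse, suppose $W_S=W_0$. Right-multiplying by $M_0U$ gives $W_0U(I+\Lambda)^{-1}\Lambda=0$. This step is the main obstacle, because $\Lambda$ may be singular, so we cannot simply invert it.

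To handle it, note that $\Lambda=U^\T M_0^{-1}U$ and $U$ have the same null space. Write $\Lambda=V\,\mathrm{diag}(\lambda_i)V^\T$. The identity above forces $W_0Uv_i=0$ whenever $\lambda_i>0$. When $\lambda_i=0$, the vector $v_i$ lies in the null space of $U$, so $Uv_i=0$ anyway. Therefore $W_0U=0$, and then $W_0J\Sigma_\psi=W_0UF^\T=0$.

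Both directions together show that $\tau_0>0$ exactly when $W_0J\Sigma_\psi\ne0$, completing the proof.
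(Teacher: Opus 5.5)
Your proof is correct, with one slip in the converse direction. The conditional-risk formula and the decomposition $\mathcal R_\delta(W_0)-\mathcal R_\delta(W_S)=\|W_0J\delta\|^2-\|W_SJ\delta\|^2-\tau_0(\Sigma_\psi)$ follow the paper's argument exactly: you separate the deterministic term $Wb$ and complete the square around $W_0$ using $W_0M_0=\Sigma_xA_0^\T$.

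The route differs only in the positivity criterion. The paper does not use the Woodbury form there. It uses the resolvent identity $M_S^{-1}-M_0^{-1}=-M_0^{-1}UU^\T M_S^{-1}$, which gives $W_S-W_0=-W_0UU^\T M_S^{-1}$. Because $M_S^{-1}$ is invertible, $W_S=W_0$ is equivalent to $W_0UU^\T=0$. That in turn is equivalent to $W_0U=0$, since $\tr(W_0UU^\T W_0^\T)=\|W_0U\|_F^2$. Both directions are then one line. The sandwiched factor $(I+\Lambda)^{-1}$ and the possible singularity of $\Lambda$, which you call the main obstacle, never appear.

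Your spectral argument does get around that obstacle: on $\ker\Lambda=\ker U$ you have $Uv_i=0$, and the factor $\lambda_i/(1+\lambda_i)$ kills the other directions. The slip is that right-multiplying $W_0U(I+\Lambda)^{-1}U^\T M_0^{-1}=0$ by $M_0U$ gives $W_0U(I+\Lambda)^{-1}U^\T U=0$, not $W_0U(I+\Lambda)^{-1}\Lambda=0$. To obtain $\Lambda$ you must right-multiply by $U$.

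A shorter fix is to right-multiply by $M_0W_0^\T$ instead. This gives $(W_0U)(I+\Lambda)^{-1}(W_0U)^\T=0$, and positive definiteness of $(I+\Lambda)^{-1}$ forces $W_0U=0$ with no eigendecomposition. With either correction, your argument is complete.
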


\begin{proof}
For a fixed $b=J\delta$, write
\[
 Wy-x=(WA_0-I)x+We+Wb.
\]
The first two terms are centered and independent, while $Wb$ is
deterministic. Their squared norms add in expectation, giving the first
line of (\ref{eq:perdraw}). Completing the square in the part independent
of $b$ gives
\[
 \mathcal R_\delta(W)={\mathcal R_0(W_0)}
 +\|(W-W_0)M_0^{1/2}\|_F^2+\|Wb\|^2.
\]
Subtract the expressions for $W_0$ and $W_S$ to obtain the conditional
gain. The term $\tau_0({\Sigma_\psi})$ is the cost of changing the filter when the
nuisance contribution is absent. Since $M_0\succ0$, it vanishes exactly
when $W_S=W_0$.

Using $J{\Sigma_\psi}J^\T=UU^\T$, the resolvent identity gives
\[
 W_S-W_0=-W_0{U}{U}^\T M_S^{-1}.
\]
Thus $W_S=W_0$ exactly when $W_0{U}{U}^\T=0$, equivalently $W_0{U}=0$,
because $W_0{U}{U}^\T W_0^\T=(W_0{U})(W_0{U})^\T$.
The condition $W_0{U}=0$ implies $W_0J{\Sigma_\psi}=W_0{U} F^\T=0$;
conversely, $W_0J{\Sigma_\psi}=0$ implies $W_0{U}{U}^\T=0$.
This proves the equality condition even for singular ${\Sigma_\psi}$.

\end{proof}

For one nuisance parameter, write $\Sigma_\psi=c$ and $J=j$,
with $\lambda$ as defined in (\ref{eq:capture}).
\begin{corollary}[Error for a fixed nuisance parameter value]
\label{prop:perdraw}
Under Assumption~\ref{as:frozen} with one nuisance parameter,
fix $\psi^\star=\mu+\delta$.
Let $\mathcal R_\delta(W)$ be the mean squared error averaged over $x$
and $e$ with $\delta$ held fixed.
The scalar identities
$\|W_Sj\|^2=\|W_0j\|^2/(1+\lambda)^2$ and
$\tau_0(c)=c\lambda\|W_0j\|^2/(1+\lambda)^2$
give the conditional difference
\begin{equation}
 \mathcal R_\delta(W_0)-\mathcal R_\delta(W_S)
 =\frac{\lambda\|W_0j\|^2}{(1+\lambda)^2}
   \big[(\lambda+2)\delta^2-c\big].
 \label{eq:scalar-sign}
\end{equation}
For distinct estimators the prefactor is positive. Their conditional
errors agree at the threshold
\begin{equation}
  |\delta^\star|^2=\frac{c}{\lambda+2}.  
  \label{eq:crossover}
\end{equation}
\end{corollary}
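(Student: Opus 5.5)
The plan is to specialize Proposition~\ref{prop:perdraw-general} to $P=1$, with $\Sigma_\psi=c$ and $J=j$, so that $U=\sqrt{c}\,j$ and $\Lambda=\lambda=c\,j^\T M_0^{-1}j$ is a scalar. That proposition already writes the conditional difference as
\[
\|W_0j\|^2\delta^2-\|W_Sj\|^2\delta^2-\tau_0(c).
\]
It therefore remains to evaluate $W_Sj$ and $\tau_0(c)$ in closed form. This should follow from a rank-one Sherman--Morrison computation.

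First, Sherman--Morrison applied to $M_S=M_0+c\,jj^\T$ gives
\[
M_S^{-1}j=M_0^{-1}j-\frac{c\,M_0^{-1}j\,(j^\T M_0^{-1}j)}{1+\lambda}=\frac{M_0^{-1}j}{1+\lambda}.
\]
Multiplying by $\Sigma_xA_0^\T$ gives $W_Sj=W_0j/(1+\lambda)$, which yields the first scalar identity.

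Second, the resolvent identity from the proof of Proposition~\ref{prop:perdraw-general} gives
\[
W_S-W_0=-c\,W_0j\,j^\T M_S^{-1}=-\frac{c}{1+\lambda}\,W_0j\,j^\T M_0^{-1}.
\]
Substituting into $\tau_0=\tr((W_S-W_0)M_0(W_S-W_0)^\T)$, the middle factor collapses as
\[
j^\T M_0^{-1}M_0M_0^{-1}j=j^\T M_0^{-1}j=\lambda/c.
\]
Hence $\tau_0(c)=\frac{c^2}{(1+\lambda)^2}\cdot\frac{\lambda}{c}\,\|W_0j\|^2=\frac{c\lambda\|W_0j\|^2}{(1+\lambda)^2}$, which is the second identity.

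Combining the two identities, the nuisance-error difference is
\[
\|W_0j\|^2\delta^2\Bigl[1-\frac{1}{(1+\lambda)^2}\Bigr]=\frac{\|W_0j\|^2\lambda(\lambda+2)\delta^2}{(1+\lambda)^2}.
\]
Subtracting $\tau_0(c)$ and factoring out $\lambda\|W_0j\|^2/(1+\lambda)^2$ gives (\ref{eq:scalar-sign}).

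For the sign claim, distinct estimators means $W_S\ne W_0$. By Proposition~\ref{prop:perdraw-general} this holds iff $cW_0j\ne0$, that is, $c>0$ and $W_0j\ne0$. Since $M_0\succ0$, it then also holds that $\lambda>0$ whenever $j\ne0$, and $W_0j\ne0$ forces $j\ne0$. So the prefactor is strictly positive, and the difference vanishes exactly when $(\lambda+2)\delta^2=c$, which is (\ref{eq:crossover}).

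None of these steps is a serious obstacle; the proof is bookkeeping. The one place to be careful is the algebra for $\tau_0$. The cost evaluates to $c\lambda/(1+\lambda)^2$ times $\|W_0j\|^2$, not $\lambda^2/(1+\lambda)^2$ times it, so the factors of $c$ must be tracked through $\lambda/c$. A second small point is the equivalence ``distinct $\iff$ positive prefactor''. It uses the equality condition already proven in Proposition~\ref{prop:perdraw-general} together with $M_0\succ0$, which gives $\lambda>0$ for $j\ne0$.
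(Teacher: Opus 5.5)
Your proposal is correct and follows the paper's proof. Both specialize Proposition~\ref{prop:perdraw-general} to one parameter, use Sherman--Morrison to get $W_Sj=W_0j/(1+\lambda)$ and $W_S-W_0=-\tfrac{c}{1+\lambda}W_0j\,j^\T M_0^{-1}$, evaluate $\tau_0(c)$ and the nuisance-error difference, and subtract; your positivity argument for the prefactor spells out what the paper states in one line. One cosmetic point: writing $j^\T M_0^{-1}j=\lambda/c$ presumes $c>0$, whereas writing $c^2\,j^\T M_0^{-1}j=c\lambda$ also covers the trivial case $c=0$.
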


\begin{proof}
For one parameter, Sherman--Morrison gives
\[
 W_S-W_0=-\frac{c}{1+\lambda}W_0j\,j^\T M_0^{-1},
 \qquad W_Sj=\frac{W_0j}{1+\lambda}.
\]
Consequently,
\[
 \tau_0(c)=\frac{c\lambda}{(1+\lambda)^2}\|W_0j\|^2,\qquad
 \|W_0j\delta\|^2-\|W_Sj\delta\|^2
 =\frac{\lambda(\lambda+2)\delta^2}{(1+\lambda)^2}\|W_0j\|^2.
\]
Subtracting these terms proves (\ref{eq:scalar-sign}).
For distinct estimators the common prefactor is positive; setting the
gain to zero gives $|\delta^\star|^2=c/(\lambda+2)$.
\end{proof}
For one nuisance parameter and distinct estimators, the point-likelihood estimator has lower conditional error when $|\delta|<|\delta^\star|$, and UAL when $|\delta|>|\delta^\star|$. Relative to the nuisance standard deviation $\sqrt c$, the crossover distance is $(\lambda+2)^{-1/2}$, which decreases as $\lambda$ increases.

\begin{remark}[Multivariate conditional gain]
\label{rem:perdraw}
For a fixed direction $v$, let
$a_v=\|W_0Jv\|^2-\|W_SJv\|^2$. Along $\delta=sv$, the gain is
\[
 \mathcal R_{sv}(W_0)-\mathcal R_{sv}(W_S)=s^2a_v-\tau_0({\Sigma_\psi}).
\]
If $a_v>0$, the crossover occurs at $|s|=\sqrt{\tau_0({\Sigma_\psi})/a_v}$.
If $a_v\le0$, increasing $|s|$ does not produce a positive gain.
The direction of the nuisance error therefore determines whether increasing its
magnitude can favor UAL\@.
\end{remark}

\paragraph{Mismatch between assumed and generating moments.}

The scalar comparison also quantifies how inaccurate nuisance moments affect average reconstruction error.
Keep $A_0,J,\Sigma_x,\Sigma_e$ fixed, and draw $\psi^\star$ independently of $x$ and $e$ from a distribution that differs from $q$.
For one nuisance parameter, averaging (\ref{eq:scalar-sign}) replaces
$\delta^2$ by its second moment about the mean $\mu$,
\[
 \E[(\psi^\star-\mu)^2]
 =\operatorname{Var}(\psi^\star)+(\E\psi^\star-\mu)^2.
\]
For assumed variance $c>0$ and distinct estimators, average gain is
positive exactly when
\begin{equation}
 \operatorname{Var}(\psi^\star)+(\E\psi^\star-\mu)^2
 >\frac{c}{\lambda+2}.
 \label{eq:misspecified-moments}
\end{equation}
Below this threshold, the filter-change cost exceeds the reduction in
nuisance-induced error, so the point estimator has lower average risk. The risks
agree at the threshold.
A sufficient condition for positive average gain is $0<c\le2\E[(\psi^\star-\mu)^2]$: distinct filters imply $\lambda>0$, so $c/(\lambda+2)<c/2$. Thus the assumed variance $c$ can be as large as twice the generating second moment about the mean $\mu$ while preserving the average advantage in this scalar fixed-sensitivity model.

Define $\lambda^\star=\E[(\psi^\star-\mu)^2]j^\T M_0^{-1}j$. Compared with $\lambda$, $\lambda^\star$ uses the generating second moment about $\mu$. The average gain is negative exactly when
\begin{equation}
 \lambda^\star<1
 \quad\text{and}\quad
 c>\frac{2\E[(\psi^\star-\mu)^2]}{1-\lambda^\star}.
 \label{eq:variance-overstatement}
\end{equation}
For one nuisance parameter and distinct filters, using too large a variance $c$ can reverse the average gain only when $\lambda^\star<1$.
 For $\lambda^\star\ge1$, the gain remains positive for every $c>0$. 

\subsection{Reconstruction error from covariance truncation}

Corollary~\ref{cor:covariance-truncation} bounds Gaussian-likelihood divergence after covariance truncation. The next result bounds its reconstruction cost in the fixed-sensitivity linear model, using the discarded measurement-covariance modes.

\begin{proposition}[Covariance truncation]
\label{prop:rank}
Under Assumption~\ref{as:frozen}, let $\Sigma_e=\sigma^2I$ and
choose a retained covariance $\sigma^2I\preceq S_K\preceq S$.
Write the discarded covariance as
$D=S-S_K=\sum_i\nu_i u_i u_i^\T$, with orthonormal $u_i$ and
$\nu_i\ge0$, and set
$W_{S_K}=\Sigma_x A_0^\T(A_0\Sigma_x A_0^\T+S_K)^{-1}$.
Evaluate both filters under the full covariance $S$. Then
\begin{equation}
  0\le\mathcal R(W_{S_K})-\mathcal R(W_S)
  \le\|W_{S_K}\|_{\mathrm{op}}^2\sum_i\frac{\nu_i^2}{\sigma^2+\nu_i}
  \le\frac{\|\Sigma_x A_0^\T\|_{\mathrm{op}}^2}{\sigma^4}\sum_i\nu_i.
  \label{eq:rank}
\end{equation}
When $S_K=S$, both sides are zero.
\end{proposition}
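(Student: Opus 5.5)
The plan is to reduce everything to the excess-risk identity in (\ref{eq:square}) from Proposition~\ref{prop:risk}. That identity still applies here, because both filters are evaluated under the full covariance $S$, so $M_S=A_0\Sigma_xA_0^\T+S$ is the true measurement covariance. For any linear $W$,
\[
\mathcal R(W)-\mathcal R(W_S)=\|(W-W_S)M_S^{1/2}\|_F^2 .
\]
This immediately gives the lower bound $0$, and equality when $S_K=S$, since then $W_{S_K}=W_S$.

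Next I would write the filter difference in closed form. Set $M_K=A_0\Sigma_xA_0^\T+S_K$, so that $M_S=M_K+D$. The resolvent identity $M_K^{-1}-M_S^{-1}=M_K^{-1}DM_S^{-1}$ gives
\[
W_{S_K}-W_S=\Sigma_xA_0^\T M_K^{-1}DM_S^{-1}=W_{S_K}DM_S^{-1}.
\]
Hence the excess risk is $\|W_{S_K}DM_S^{-1/2}\|_F^2$. Using $\|AB\|_F\le\|A\|_{\mathrm{op}}\|B\|_F$, this is at most
\[
\|W_{S_K}\|_{\mathrm{op}}^2\,\tr\!\left(DM_S^{-1}D\right).
\]

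The key step, and the only non-mechanical one, is bounding $\tr(DM_S^{-1}D)$ by $\sum_i\nu_i^2/(\sigma^2+\nu_i)$. Since $A_0\Sigma_xA_0^\T\succeq0$ and $S_K\succeq\sigma^2I$, we have $M_S\succeq\sigma^2I+D\succ0$. Operator monotonicity of inversion then gives $M_S^{-1}\preceq(\sigma^2I+D)^{-1}$. Congruence by $D$ preserves the ordering, so $DM_S^{-1}D\preceq D(\sigma^2I+D)^{-1}D$. Taking traces and diagonalizing $D$ in the basis $\{u_i\}$ gives exactly $\sum_i\nu_i^2/(\sigma^2+\nu_i)$. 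This is the middle inequality of (\ref{eq:rank}). I expect the care here to be in justifying $S_K\succeq\sigma^2 I$, which is the assumption $\Sigma_e=\sigma^2I\preceq S_K$, and in checking that congruence preserves the Loewner order.

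For the last inequality, $\|W_{S_K}\|_{\mathrm{op}}\le\|\Sigma_xA_0^\T\|_{\mathrm{op}}\|M_K^{-1}\|_{\mathrm{op}}$. Because $M_K\succeq S_K\succeq\sigma^2I$, this is at most $\|\Sigma_xA_0^\T\|_{\mathrm{op}}/\sigma^2$. Finally, $\nu_i^2/(\sigma^2+\nu_i)\le\nu_i^2/\sigma^2$ would give $\sum_i\nu_i^2$, which is not the stated form. Instead I would use $\nu_i/(\sigma^2+\nu_i)\le1$, so that $\nu_i^2/(\sigma^2+\nu_i)\le\nu_i$. The stated bound $\|\Sigma_xA_0^\T\|_{\mathrm{op}}^2\sum_i\nu_i/\sigma^4$ then follows from the operator-norm factor $1/\sigma^4$ together with $\sum_i\nu_i$.
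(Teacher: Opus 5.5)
Your proposal is correct and follows essentially the same route as the paper: the completed-square excess-risk identity, the resolvent identity $W_{S_K}-W_S=W_{S_K}DM_S^{-1}$, the Loewner bound $DM_S^{-1}D\preceq D(\sigma^2I+D)^{-1}D$ from $M_S\succeq\sigma^2I+D$, and the final bounds $\|W_{S_K}\|_{\mathrm{op}}\le\|\Sigma_xA_0^\T\|_{\mathrm{op}}/\sigma^2$ and $\nu_i^2/(\sigma^2+\nu_i)\le\nu_i$. Your Frobenius--operator-norm step $\|AB\|_F\le\|A\|_{\mathrm{op}}\|B\|_F$ is equivalent to the paper's $\tr(WXW^\T)\le\|W\|_{\mathrm{op}}^2\tr X$.
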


In the upper bound, a discarded variance contributes approximately
$\nu_i^2/\sigma^2$ when $\nu_i\ll\sigma^2$ and approximately $\nu_i$ when
$\nu_i\gg\sigma^2$. The factor $\|W_{S_K}\|_{\mathrm{op}}^2$ accounts for
amplification by the reconstruction filter.
The bound also applies when the nuisance covariance is truncated:
$S_K=\sigma^2I+J\Sigma_{\psi,K}J^\T$ with
$0\preceq\Sigma_{\psi,K}\preceq\Sigma_\psi$.

\begin{proof}
Write $M_{S_K}=A_0\Sigma_xA_0^\T+S_K$, so
$M_S=M_{S_K}+D$.
The completed square (\ref{eq:square}) gives
\[
 \mathcal R(W_{S_K})-\mathcal R(W_S)
 =\|(W_{S_K}-W_S)M_S^{1/2}\|_F^2\ge0.
\]
By the resolvent identity,
\[
 W_{S_K}-W_S
 =\Sigma_xA_0^\T(M_{S_K}^{-1}-M_S^{-1})
 =W_{S_K}DM_S^{-1}.
\]
Hence the excess risk is
\[
 \tr\!\left(W_{S_K}DM_S^{-1}DW_{S_K}^\T\right).
\]
Since $M_{S_K}\succeq\sigma^2I$, we have
$M_S\succeq\sigma^2I+D$. Inversion reverses the ordering and
congruence by $D$ preserves it:
\[
 DM_S^{-1}D\preceq D(\sigma^2I+D)^{-1}D
 =\sum_i\frac{\nu_i^2}{\sigma^2+\nu_i}u_i u_i^\T
 \preceq D.
\]
For $X\succeq0$,
$\tr(WXW^\T)\le\|W\|_{\mathrm{op}}^2\tr X$.
Applying this inequality gives the middle bound in (\ref{eq:rank}).
Finally,
\[
 \|W_{S_K}\|_{\mathrm{op}}
 \le\frac{\|\Sigma_xA_0^\T\|_{\mathrm{op}}}{\sigma^2},
 \qquad \frac{\nu_i^2}{\sigma^2+\nu_i}\le\nu_i,
\]
which gives the last bound.
\end{proof}

\section{Task models and UAL implementations}
\label{app:seams}
\label{app:data}

We use the notation of Section~\ref{sec:setting}:
$\mathcal A$ is the forward model, ${\Sigma_e}$ is the observation-noise
covariance, and $\mu,{\Sigma_\psi}$ are the mean and covariance of the specified nuisance model $q$.
 For each application, we construct a measurement likelihood from the nuisance statistics and incorporate its data term into the published sampler update. Each UAL version retains the published data-fitting scales and step sizes. So that these step sizes apply unchanged, BlindDPS rescales the UAL image guidance to the published image-guidance norm before the denoiser Jacobian, and CT rescales the precision-weighted residual to the unweighted residual norm before backprojection. DiffPIR evaluates the covariance at the current prediction and holds it fixed within the data step, which keeps that step a quadratic subproblem.

For fixed $\Sigma_e\succ0$, let $d=y-\bar y$ and define
\begin{equation}
 Q(x)=d(x)^\T S(x)^{-1}d(x)+\log\det S(x)-\log\det\Sigma_e.
 \label{eq:pal-relative-energy}
\end{equation}
For measurement dimension $m$,
$Q=2\ell_{\rm UAL}-\log\det\Sigma_e-m\log(2\pi)$.
Removing the noise-only constant from $2\ell_{\rm UAL}$ leaves its gradient unchanged. The square-root fidelity used below puts this energy on the scale of the sampler's residual-norm data term. Since $S\succeq\Sigma_e$,
$Q\ge0$, with the square-root derivative defined for $Q>0$.
The determinant identity also gives $Q=d^\T S^{-1}d+\log\det G$, evaluated with the reduced factorization in Section~\ref{sec:pal-evaluation}.

\paragraph{DPS-style guidance.}
The UAL score acts on a candidate clean signal.
DPS-style samplers update the noisy diffusion state, so the denoiser Jacobian maps
that score into their guidance direction~\citep{chung2023dps,aali2025adps}:
\begin{equation}
 g_t=
 [D_{x_t}\widehat x_0(x_t)]^\T
 \left.\nabla_x\log p_{\mathrm{UAL}}(y\mid x)\right|_{x=\widehat x_0(x_t)}.
 \label{eq:pal-guidance}
\end{equation}
Here $\widehat x_0(x_t)$ is the denoised estimate from state $x_t$.

Algorithms pair the published operation in burgundy with its UAL replacement in teal; black steps are shared.
Only one of the two alternatives is used.
The listings retain the published notation and label code transcriptions separately.
Table~\ref{tab:task-nuisance-map} summarizes the nuisance statistics and their use in each task.

\begin{table}[htbp]
\centering
\small
\caption[Nuisance statistics and UAL constructions by task]{Nuisance
statistics and UAL constructions by task. The rows summarize the
specifications below; the final column states how each task forms the UAL mean and covariance.}
\label{tab:task-nuisance-map}
\begin{tabular}{@{}
 >{\raggedright\arraybackslash}p{0.14\linewidth}
 @{\hspace{0.02\linewidth}}
 >{\raggedright\arraybackslash}p{0.20\linewidth}
 @{\hspace{0.02\linewidth}}
 >{\raggedright\arraybackslash}p{0.27\linewidth}
 @{\hspace{0.02\linewidth}}
 >{\raggedright\arraybackslash}p{0.33\linewidth}@{}}
\toprule
Task & Nuisance coordinates & Nuisance statistics & UAL construction \\
\midrule
Gaussian blur &
Kernel widths $(s_x,s_y)$ &
Width mean and covariance $\mu$ and ${\Sigma_\psi}=\tau^2I_2$ &
Width-sensitivity covariance (Sections~\ref{app:deblur-models}--\ref{app:deblur-samplers}). \\
\addlinespace
Motion blur &
Coefficients in the retained kernel basis &
Mean and covariance from $131{,}072$ kernels&
Gaussian moment matching in retained coordinates (Section~\ref{app:deblur-models}). \\
\addlinespace
CT &
Flat and dark reference fields &
Mean and covariance across 20 flat and 20 dark frames &
Linearized flat--dark calibration map
(Section~\ref{app:ct}). \\
\addlinespace
MRI &
Real coil-map coefficients &
32 calibration replicates; calibrated maps for the mean prediction &
Exact Gaussian marginal for a forward model affine in Gaussian coil-map coefficients
(Section~\ref{app:mri}). \\
\addlinespace
Ultrasound &
Sound speed $c_s$ &
Gaussian scale from CUBDL reference spread; truncated quadrature &
Per-mode variances from nonlinear sound-speed perturbations
(Section~\ref{app:ultrasound}). \\
\addlinespace
Microscopy &
Five pupil-phase coefficients (Noll modes 5, 6, 7, 8, 11) &
Prescribed Gaussian phase law, also used to generate the synthetic observations &
Derivatives of the normalized PSF; energy with candidate-dependent covariance
(Section~\ref{app:microscopy}). \\
\bottomrule
\end{tabular}
\end{table}

\subsection{Deblurring forward models and nuisance statistics}
\label{app:deblur-models}

Gaussian deblurring uses two uncertain kernel widths; motion deblurring uses
coefficients in a retained kernel basis. The specified width moments or kernel-population
moments determine the uncertainty propagated into each sampler's data term.

\paragraph{Measurements.}
We use $256\times256$ FFHQ images~\citep{karras2019ffhq}.
The 50 candidate images are drawn randomly from the FFHQ validation partition, IDs 60000--69999\@.
The ablation set is a fixed 16-image subset of these 50 images.
The same image identifiers are retained across the deblurring samplers.
For deblurring, (\ref{eq:model}) specializes to
\begin{equation}
 y=\mathcal A(x,\psi)+e,\qquad
 \mathcal A(x,\psi)=A(\psi)x,\qquad
 e\sim\mathcal N(0,{\Sigma_e}),\qquad {\Sigma_e}=\sigma_y^2I.
 \label{eq:deblur-model}
\end{equation}
Here $A(\psi)$ applies the kernel $k(\psi)$ to each image channel.
We write $A_k$ for this same linear action when specifying the kernel
directly, so $A(\psi)=A_{k(\psi)}$.
Images use $[-1,1]$ intensity coordinates; observations use the same scale,
with $\sigma_y=0.02$.

Each paired comparison uses the same observation.
The kernel origin and boundary convention are specified below.
FFHQ's dataset metadata and accompanying materials are distributed under
CC BY-NC-SA 4.0; individual images retain the licenses and author information
recorded in their metadata.\footnote{\url{https://github.com/NVlabs/ffhq-dataset\#licenses}}

\paragraph{Gaussian blur.}
The nuisance vector $\psi=(s_x,s_y)$ contains the horizontal and vertical
kernel widths in pixels. On the $64\times64$ kernel grid, indexed by
$u,v\in\{-32,\ldots,31\}$, we use
\begin{equation}
 k_\psi(u,v)=\frac{1}{Z_\psi}
 \exp\!\left(-\frac{u^2}{2s_y^2}-\frac{v^2}{2s_x^2}\right)
 \mathbf 1_{\{|u|,|v|\le31\}},
 \label{eq:gaussian-kernel}
\end{equation}
where $Z_\psi$ normalizes the kernel to unit sum.
The width mean and covariance are $\mu=(\bar s,\bar s)$
and ${\Sigma_\psi}=\tau^2I_2$.
Table~\ref{tab:gaussian-widths} gives the three reconstruction settings.
The width generator samples the two coordinates independently
from normal distributions truncated to $(0,\infty)$, with $\bar s$ and
$\tau$ specifying the post-truncation mean and standard deviation.
The kernel sensitivities include the derivative of the unit-sum normalization. Let $\langle f\rangle_{k_\psi}=\sum_{u,v}k_\psi(u,v)f(u,v)$. Then
\[
 \kappa_x(u,v)=k_\psi(u,v)\frac{v^2-\langle v^2\rangle_{k_\psi}}{s_x^3},\qquad
 \kappa_y(u,v)=k_\psi(u,v)\frac{u^2-\langle u^2\rangle_{k_\psi}}{s_y^3}.
\]
Each derivative kernel sums to zero. Evaluating them at the reference widths gives
\[
 J_\psi(x)=[A_{\kappa_x}x,A_{\kappa_y}x],\qquad U(x)=\tau J_\psi(x).
\] 
These columns propagate the two width variances into the measurement covariance in (\ref{eq:S}).
We write $A_\mu=A_{k_\mu}$ for the kernel rendered at the mean widths.

BlindDPS estimates a kernel instead of two widths~\citep{chung2023blinddps}.
We read the widths directly from the normalized fitted kernel $\widehat k$:
\begin{equation}
 \begin{aligned}
 \widehat s_x&=\frac{1}{|\omega|}
 \sqrt{-2\log\left|\sum_{u,v}\widehat k(u,v)e^{-\mathrm i\omega v}\right|},\\
 \widehat s_y&=\frac{1}{|\omega|}
 \sqrt{-2\log\left|\sum_{u,v}\widehat k(u,v)e^{-\mathrm i\omega u}\right|},
 \qquad \omega=\frac{4\pi}{64}.
 \end{aligned}
 \label{eq:width-readout}
\end{equation}
The Fourier magnitude of a continuous Gaussian of width $s$ is
$\exp(-s^2\omega^2/2)$; solving this relation gives the readout.
We apply it to the fitted discrete kernel when the Fourier magnitudes
lie strictly between zero and one. 

This Gaussian frequency-matching rule estimates the current reference widths.
It leaves the population mean and covariance unchanged.
The non-blind implementations use their reference widths directly.

\paragraph{Motion blur.}
We generate $131{,}072$ reference kernels per population with the
\texttt{motionblur} generator included in the PnP-DM release.\footnote{\url{https://github.com/LeviBorodenko/motionblur}}
The kernel size is $64\times64$, the generator intensity is $0.5$, and the
reference seeds are $0,\ldots,131071$. Observation kernels use seeds $133120,\ldots,133169$, disjoint from the reference seeds.

For motion blur, the nuisance coordinates describe variation in the
generated kernels themselves. We estimate a kernel mean and covariance
from each reference population and retain its leading covariance modes.

Identify each unit-sum kernel with its vectorized form $k_j\in\R^{4096}$,
and let $M=131{,}072$. We compute 

\begin{equation}
 \bar k=\frac1M\sum_{j=1}^M k_j,\qquad
 C_k=\frac1{M-1}\sum_{j=1}^M(k_j-\bar k)(k_j-\bar k)^\T.
 \label{eq:motion-statistics}
\end{equation}
The full covariance $C_k\in\mathbb R^{4096\times4096}$ is estimated
before dimensionality reduction. Write its eigendecomposition as
$C_k=B\operatorname{diag}(\lambda_1,\ldots,\lambda_{4096})B^\T$,
with eigenvalues in decreasing order.
We retain the leading $P=256$ covariance eigenvectors in $B_P\in\mathbb R^{4096\times P}$ and their variances in $\Lambda_P\in\mathbb R^{P\times P}$, giving $C_k\approx B_P\Lambda_PB_P^\T$. We fixed $P=256$ before running the covariance-rank sweep, which varies $P$. The choice balances running time against covariance truncation error: the retained modes carry 96\% of the kernel variance.
Equivalently, let
$X=[k_1-\bar k,\ldots,k_M-\bar k]/\sqrt{M-1}$.
Since $C_k=XX^\T$, its eigenvectors are the left singular vectors
of $X$, and its eigenvalues are the squared singular values.
We accumulate and eigendecompose $C_k$ directly. Unit-sum kernels have zero-sum centered deviations,
so the covariance has at most $4095$ nonzero modes.
We remove numerical components in the all-ones direction. 
In the notation of Section~\ref{sec:setting}, the retained nuisance
coordinates and specified statistics are
\begin{equation}
 k(\psi)=\bar k+B_P\psi,\qquad
 \mu=0,\qquad {\Sigma_\psi}=\Lambda_P,\qquad
 \widehat\psi=B_P^\T(\widehat k-\bar k).
 \label{eq:motion-coordinates}
\end{equation}
Here $C_k$ is the full kernel covariance; ${\Sigma_\psi}$ is the covariance
in the retained nuisance coordinates. The kernel still has $4096$ entries;
its uncertain displacement is represented by $P$ coefficients.
The projection $\widehat\psi=B_P^\T(\widehat k-\bar k)$ extracts the
retained component of a fitted kernel. Components orthogonal to $B_P$
are outside this representation.

Let $b_j=\operatorname{unvec}_{64\times64}((B_P)_{:,j})$ be the
kernel-shaped $j$th covariance eigenvector. Each $b_j$ is a signed,
zero-sum perturbation direction, whereas $\bar k$ is the unit-sum
mean kernel. By linearity of convolution in the kernel,
\[
 A_{k(\psi)}x=A_{\bar k}x+\sum_{j=1}^{P}\psi_jA_{b_j}x.
\]
Thus the derivative with respect to $\psi_j$ is $A_{b_j}x$. 

For the retained model, the UAL mean and covariance are
\begin{equation}
 \bar y(x)=\mathcal A(x,\mu)=A_{\bar k}x,\qquad
 S(x)={\Sigma_e}+\sum_{j=1}^P
 \lambda_j(A_{b_j}x)(A_{b_j}x)^\T.
 \label{eq:motion-pal}
\end{equation}
Projecting each centered reference kernel gives $\psi^{(j)}=B_P^\T(k_j-\bar k)$. These coefficients have sample mean zero and sample covariance
\[
 \frac1{M-1}\sum_{j=1}^{M}\psi^{(j)}(\psi^{(j)})^\T=B_P^\T C_kB_P=\Lambda_P.
\]
UAL uses the Gaussian coefficient model $\psi\sim\mathcal N(0,\Lambda_P)$. With $F=\Lambda_P^{1/2}$ and $\psi=Fa$, where $a\sim\mathcal N(0,I_P)$, the observation model is 
\[
 y=A_{\bar k}x+U(x)a+e,\qquad U_{:\ell}(x)=\sqrt{\lambda_\ell}A_{b_\ell}x.
\]
Integrating over $a$ gives (\ref{eq:motion-pal}). Convolution is affine in the retained coefficients; the approximations are the Gaussian coefficient law and the truncation of the kernel covariance.

\paragraph{Forward conventions.}
In Gaussian deblurring, DAPS, PnP-DM, and BlindDPS use zero-padded cross-correlation,
with padding of 31 pixels before and 32 after each image dimension.
DiffPIR's released closed-form Fourier deblurring step requires
circular convolution, so we retain that convention and generate its
observations with it. 
Both members of each sampler
pair receive the same observation.
Motion observations use circular convolution.
DAPS, PnP-DM, and DiffPIR use circular mean and mode operators.

The DiffPIR likelihood step uses $x_+=(x_-+1)/2$ and $y_+=(y_-+1)/2$,
converting the stored $[-1,1]$ coordinates to $[0,1]$.
For unit-sum circular convolution, $A\mathbf1=\mathbf1$, so
$y_+=Ax_++e_-/2$. Consequently $\sigma_+=0.01$ and ${\Sigma_{e,+}}={\Sigma_{e,-}}/4$.
Its data-step output is converted back by $x_-=2x_+-1$ before the
diffusion update. 

\subsection{UAL in deblurring samplers}
\label{app:deblur-samplers}

Each sampler incorporates UAL's measurement mean and covariance through its existing data update while retaining its pretrained diffusion prior and sampling schedule.
Two PnP-DM controls in Gaussian deblurring separate covariance recomputation from differentiation.
 One recomputes covariance at each candidate while stopping its derivatives; the other fixes covariance at the ensemble mean of the published sampler's 12 reconstructions. The latter reference is shared across that image's reconstruction seeds; all controls use the same observations and evaluation images. A third control multiplies the point likelihood's noise variance by $v\in\{1,1.5,2,3,5\}$ on the ablation set. In motion deblurring, the noise-variance multipliers are $v\in\{1,3,6,12,24,48,96,192,384\}$ on the ablation set. We compare the multipliers by mean per-trajectory PSNR, keeping the Langevin step size unchanged. Appendix~\ref{app:covariance-control} reports the scalar-control results.

\subsubsection{DAPS}
\label{app:daps}
DAPS alternates reverse diffusion, conditional sampling, and
re-noising~\citep{zhang2025daps}. We retain this outer loop and differentiate the UAL energy in (\ref{eq:pal-energy}) in the inner measurement update for both Gaussian and motion deblurring.
 At noise level $\sigma_t$, reverse diffusion
provides $\widehat x_0(\mathbf x_t)$ in the published notation.
 Approximating the inner prior by $\mathcal N(\widehat x_0(\mathbf x_t),\sigma_t^2I)$ gives the drift $(\widehat x_0-z)/\sigma_t^2$ at candidate $z$.

Let $\tau_{\mathrm D}$ be DAPS's data-fitting scale. For Gaussian and motion deblurring, $\tau_{\mathrm D}=0.01$.
We use $A_\mu=A_{k_\mu}$ for Gaussian blur and
$A_\mu=A_{\bar k}$ for motion blur.
With $d(z)=y-A_\mu z$, the data drift is the negative gradient of the scaled UAL energy
\[
 E_{\rm D}(z)=\frac{\sigma_y^2}{\tau_{\rm D}^2}
 \left[d(z)^\T S(z)^{-1}d(z)+\log\det S(z)-\log\det\Sigma_e\right].
\]
Thus
\begin{equation}
 g_{\mathrm D}(z)=-\nabla_z E_{\rm D}(z)
 =-\frac{2\sigma_y^2}{\tau_{\mathrm D}^2}
 \nabla_z\ell_{\mathrm{UAL}}(z;y).
 \label{eq:daps-data-drift}
\end{equation}
Since $\Sigma_e=\sigma_y^2I$, setting $\Sigma_\psi=0$ gives $E_{\rm D}(z)=\|y-A_\mu z\|^2/\tau_{\rm D}^2$, recovering the published DAPS squared-residual data term. 
The coefficient in (\ref{eq:daps-data-drift}) therefore preserves the data-fitting scale $\tau_{\rm D}$, distinct from the observation-noise scale $\sigma_y$.

Gaussian blur uses two width-sensitivity columns; motion blur uses the
retained kernel-mode responses. We evaluate the energy in (\ref{eq:pal-factorized-energy}). Algorithm~\ref{alg:pal-daps} shows the evaluated branch of the published DAPS algorithm.
The paired configurations share their inner step sizes, prior, and
re-noising schedule. Our comparison uses the Langevin branch. DAPS uses $N_A$ outer steps and $N$ Langevin steps of size $\eta_t$; sampling starts at $T=t_{N_A}$.

\begin{algorithm}[htbp]
\small
\color{black}
\caption{Decoupled Annealing Posterior Sampling (DAPS): the evaluated branch of published Algorithm 1~\citep{zhang2025daps}, with UAL replacement lines.}
\label{alg:pal-daps}

\PALlegend
\begin{algorithmic}[1]
\REQUIRE Score model $\mathbf s_\theta$, measurement $\mathbf y$, noise schedule $\sigma_t$, $(t_i)_{i\in\{0,\ldots,N_A\}}$.
\STATE Sample $\mathbf x_T\sim\mathcal N(\mathbf0,\sigma_T^2I)$.
\FOR{$i=N_A,N_A-1,\ldots,1$}
\STATE Compute $\widehat{\mathbf x}_0^{(0)}=\widehat{\mathbf x}_0(\mathbf x_{t_i})$ by solving the probability flow ODE in Eq.~(48) of the cited supplement with $\mathbf s_\theta$
\FOR{$j=0,\ldots,N-1$}
\STATE \PALoriginal{$\begin{aligned}
\widehat{\mathbf x}_0^{(j+1)}\leftarrow{}&
\widehat{\mathbf x}_0^{(j)}
+\eta_t\bigl(\nabla_{\widehat{\mathbf x}_0}
\log p(\widehat{\mathbf x}_0^{(j)}\mid\mathbf x_{t_i})
+\nabla_{\widehat{\mathbf x}_0}\log p(\mathbf y\mid\widehat{\mathbf x}_0^{(j)})\bigr)\\
&+\sqrt{2\eta_t}\,\epsilon_j,\qquad \epsilon_j\sim\mathcal N(\mathbf0,I).
\end{aligned}$}
\PALalt{$\begin{aligned}
\widehat{\mathbf x}_0^{(j+1)}\leftarrow{}&
\widehat{\mathbf x}_0^{(j)}
+\eta_t\bigl(\nabla_{\widehat{\mathbf x}_0}
\log p(\widehat{\mathbf x}_0^{(j)}\mid\mathbf x_{t_i})
+g_{\mathrm D}(\widehat{\mathbf x}_0^{(j)})\bigr)\\
&+\sqrt{2\eta_t}\,\epsilon_j,\qquad \epsilon_j\sim\mathcal N(\mathbf0,I).
\end{aligned}$ (UAL drift; \ref{eq:daps-data-drift})}
\ENDFOR
\STATE Sample $\mathbf x_{t_{i-1}}\sim
\mathcal N(\widehat{\mathbf x}_0^{(N)},\sigma_{t_{i-1}}^2I)$.
\ENDFOR
\RETURN $\mathbf x_0$
\end{algorithmic}
\end{algorithm}

\subsubsection{PnP-DM}
\label{app:pnpdm}

PnP-DM alternates a likelihood conditional with a diffusion-prior
conditional~\citep{wu2024pnpdm}. At outer iteration $k$, the likelihood
step starts from $\mathbf x^{(k)}$ and targets the conditional distribution of an auxiliary image $\mathbf u$,
with potential
$f(\mathbf u;\mathbf y)+\|\mathbf u-\mathbf x^{(k)}\|^2/(2\rho_k^2)$.
Here $f$ is the negative log likelihood potential in the published notation.
For UAL, we set $f(\mathbf u;\mathbf y)=\ell_{\rm UAL}(\mathbf u;\mathbf y)$, giving
\begin{equation}
 \pi_k(\mathbf u)\propto
 \exp\!\left[-\ell_{\rm UAL}(\mathbf u;\mathbf y)
 -\frac{\|\mathbf u-\mathbf x^{(k)}\|^2}{2\rho_k^2}\right].
 \label{eq:pal-inner-target}
\end{equation}
For Gaussian and motion deblurring, both versions use the published general Langevin likelihood routine with 100 inner iterations and step size $10^{-4}$.
The coupling scale starts at $10$, decreases by a factor of $0.9$,
and is bounded below by $0.3$. 

Each column of $U(x)$ is linear in $x$: $u_\ell(x)=L_\ell x$. For Gaussian blur, $L_1=\tau A_{\kappa_x}$ and $L_2=\tau A_{\kappa_y}$; for motion blur, $L_\ell=\sqrt{\lambda_\ell}A_{b_\ell}$. With $d=y-A_\mu x$ and $w=S(x)^{-1}d$, the likelihood step uses
\[
 \nabla_x\ell_{\rm UAL}=-A_\mu^\T w+
 \sum_\ell L_\ell^\T\bigl[S(x)^{-1}u_\ell-w(u_\ell^\T w)\bigr].
\]
The first term differentiates the predicted measurement mean. The two terms in the sum differentiate the log determinant and the covariance-weighted residual penalty. PnP-DM evaluates this gradient at each inner Langevin candidate.

Algorithms~\ref{alg:pal-pnpdm} and~\ref{alg:pnp-likelihood} show the published outer sampler and its likelihood subroutine.
The prior update is unchanged from the released method. $D_\theta$ is the denoiser, and $p(\cdot;\sigma)$ is the Gaussian-smoothed prior at noise level $\sigma$.
\begin{algorithm}[htbp]
\small
\color{black}
\caption{Plug-and-Play Diffusion Models (PnP-DM): published Algorithm 1~\citep{wu2024pnpdm}, with UAL replacement lines.}
\label{alg:pal-pnpdm}

\PALlegend
\begin{algorithmic}[1]
\item[] \textbf{Input:} initialization $\mathbf x_0\in\R^n$, total number of iterations $K>0$, coupling strength schedule $\{\rho_k>0\}_{k=0}^{K-1}$, likelihood potential $f(\,\cdot\,;\mathbf y)$ with measurements $\mathbf y\in\R^m$, pretrained model $D_\theta(\,\cdot\,;\,\cdot\,)$ that approximates $\nabla\log p(\mathbf x;\sigma)$ with $(D_\theta(\mathbf x;\sigma)-\mathbf x)/\sigma^2$.
\FOR{$k=0,\ldots,K-1$}
\STATE \PALoriginal{$\mathbf z^{(k)}\leftarrow
\texttt{LikelihoodStep}(\mathbf x^{(k)},\rho_k,f(\,\cdot\,;\mathbf y))$}
\PALalt{$\mathbf z^{(k)}\leftarrow
\texttt{LikelihoodStep}(\mathbf x^{(k)},\rho_k,\ell_{\mathrm{UAL}}(\,\cdot\,;\mathbf y))$ (\ref{eq:pal-energy})}
\STATE $\mathbf x^{(k+1)}\leftarrow
\texttt{PriorStep}(\mathbf z^{(k)},\rho_k,D_\theta(\,\cdot\,;\,\cdot\,))$
\ENDFOR
\RETURN $\mathbf x^{(k+1)}$
\end{algorithmic}
\end{algorithm}

\begin{algorithm}[!t]
\small
\color{black}
\caption{PnP-DM likelihood step: published Algorithm 2~\citep{wu2024pnpdm}, with UAL replacement lines.}
\label{alg:pnp-likelihood}

\PALlegend
\begin{algorithmic}[1]
\item[] \textbf{Input:} state $\mathbf x$, coupling strength $\rho>0$, likelihood potential $f(\,\cdot\,;\mathbf y)$ with measurements $\mathbf y$
\item[] \textbf{Hyperparameter:} step size $\gamma>0$, number of iterations $J>0$
\STATE $\mathbf u_0\leftarrow\mathbf x$
\FOR{$j=0,\ldots,J-1$}
\STATE \PALoriginal{$\begin{aligned}
\mathbf u_{j+1}\leftarrow{}&
 \mathbf u_j-\gamma\nabla f(\mathbf u_j;\mathbf y)
 -\frac{\gamma}{\rho^2}(\mathbf u_j-\mathbf x)
 +\sqrt{2\gamma}\,\epsilon_j,\quad
 \epsilon_j\sim\mathcal N(\mathbf0,I)
\end{aligned}$}
\PALalt{$\begin{aligned}
\mathbf u_{j+1}\leftarrow{}&
 \mathbf u_j-\gamma\nabla\ell_{\mathrm{UAL}}(\mathbf u_j;\mathbf y)
 -\frac{\gamma}{\rho^2}(\mathbf u_j-\mathbf x)
 +\sqrt{2\gamma}\,\epsilon_j,\quad
 \epsilon_j\sim\mathcal N(\mathbf0,I)
\end{aligned}$ (\ref{eq:pal-inner-target})}
\ENDFOR
\RETURN $\mathbf u_J$
\end{algorithmic}
\end{algorithm}

\subsubsection{DiffPIR}
\label{app:diffpir}

DiffPIR combines a diffusion prediction with a data-fidelity
subproblem~\citep{zhu2023diffpir}. Let $v_t$ be its prediction in the likelihood
coordinates and let $\rho_t^{\mathrm{pub}}>0$ be the released quadratic
coupling coefficient. Since the UAL precision already includes the noise
variance, we set $\widetilde\rho_t=\rho_t^{\mathrm{pub}}/\sigma_+^2$.
The data step uses image and measurement values in $[0,1]$. Define $d(z)=y-A_\mu z$.
At each diffusion step, the UAL update evaluates the covariance at the prediction, $S_t=S(v_t)$, and solves
\begin{equation}
 z_t=\arg\min_z\left\{
 \frac12(y-A_\mu z)^\T S_t^{-1}(y-A_\mu z)
 +\frac{\widetilde\rho_t}{2}\|z-v_t\|^2\right\}.
 \label{eq:diffpir-problem}
\end{equation}
The normal equation is
\[
 (A_\mu^\T S_t^{-1}A_\mu+\widetilde\rho_t I)z_t
 =A_\mu^\T S_t^{-1}y+\widetilde\rho_t v_t.
\]
Multiplying the objective by $2\sigma_+^2$ explains the coupling
$\widetilde\rho_t=\rho_t^{\rm pub}/\sigma_+^2$ and recovers the
published squared-residual units at zero nuisance covariance.
The inverse identity gives the solution
\begin{equation}
 z_t=v_t+\widetilde\rho_t^{-1}A_\mu^\T
 (S_t+\widetilde\rho_t^{-1}A_\mu A_\mu^\T)^{-1}(y-A_\mu v_t).
 \label{eq:diffpir-solution}
\end{equation}
We form $U_t=U(v_t)$ at the diffusion prediction. 
Set
\[
 D_t=\sigma_+^2I+\widetilde\rho_t^{-1}A_\mu A_\mu^\T,\qquad H_t=I+U_t^\T D_t^{-1}U_t.
\]
We evaluate the inverse in (\ref{eq:diffpir-solution}) with the Woodbury identity:
\[
 (D_t+U_tU_t^\T)^{-1}=D_t^{-1}-D_t^{-1}U_tH_t^{-1}U_t^\T D_t^{-1}.
\]
Circular convolution diagonalizes $D_t$ in Fourier coordinates, with diagonal $\sigma_+^2+|\widehat k_\mu|^2/\widetilde\rho_t$, where $\widehat k_\mu$ is the Fourier transform of the mean kernel. The remaining solve has the dimension of the retained nuisance representation. 

Algorithm~\ref{alg:pal-diffpir} retains the published reverse-diffusion
loop and shows the original data-proximal line and its UAL replacement. In Algorithm~\ref{alg:pal-diffpir}, $\mathcal H=A_\mu$, $\sigma_n=\sigma_+$, and $\rho_t=\rho_t^{\rm pub}$.
We follow the published DiffPIR update, using regularization weight $\lambda=7$, noise mixing $\zeta=0.3$, and DDIM noise parameter $\eta=0$, with 100 denoiser evaluations on the quadratic subsequence of a 1000-step grid. Here $\bar\alpha_t$ is the cumulative signal coefficient and $\bar\sigma_t=\sqrt{(1-\bar\alpha_t)/\bar\alpha_t}$ is the corresponding noise level; $t-1$ denotes the next selected time.
\begin{algorithm}[htbp]
\small
\color{black}
\caption{DiffPIR: published Algorithm 1~\citep{zhu2023diffpir}, with UAL replacement lines.}
\label{alg:pal-diffpir}

\PALlegend
\begin{algorithmic}[1]
\REQUIRE Score model $\mathbf s_\theta$, $T,\mathbf y,\sigma_n,\{\bar\sigma_t\}_{t=1}^T,\zeta,\lambda$
\STATE Initialize $\mathbf x_T\sim\mathcal N(\mathbf0,I)$, pre-calculate
 $\rho_t\triangleq\lambda\sigma_n^2/\bar\sigma_t^2$.
\FOR{$t=T$ to $1$}
\STATE $\mathbf x_0^{(t)}
 =\dfrac1{\sqrt{\bar\alpha_t}}
 \bigl(\mathbf x_t+(1-\bar\alpha_t)\mathbf s_\theta(\mathbf x_t,t)\bigr)$
\STATE \PALoriginal{$\widehat{\mathbf x}_0^{(t)}
 =\arg\min_{\mathbf x}\|\mathbf y-\mathcal H(\mathbf x)\|^2
 +\rho_t\|\mathbf x-\mathbf x_0^{(t)}\|^2$}
\PALalt{$\begin{aligned}
 \widehat{\mathbf x}_0^{(t)}
 ={}&2\left[\arg\min_{\mathbf z}
 \left\{\sigma_n^2d(\mathbf z)^\T S_t^{-1}d(\mathbf z)
       +\rho_t\|\mathbf z-v_t\|^2\right\}\right]-\mathbf1,\\
 &v_t=(\mathbf x_0^{(t)}+\mathbf1)/2,\qquad S_t=S(v_t).
 \end{aligned}$ (\ref{eq:diffpir-problem})}
\STATE $\widehat\epsilon
 =\dfrac1{\sqrt{1-\bar\alpha_t}}
 \bigl(\mathbf x_t-\sqrt{\bar\alpha_t}\widehat{\mathbf x}_0^{(t)}\bigr)$
\STATE $\epsilon_t\sim\mathcal N(\mathbf0,I)$
\STATE $\mathbf x_{t-1}
 =\sqrt{\bar\alpha_{t-1}}\widehat{\mathbf x}_0^{(t)}
 +\sqrt{1-\bar\alpha_{t-1}}
 \bigl(\sqrt{1-\zeta}\,\widehat\epsilon+\sqrt\zeta\,\epsilon_t\bigr)$
\ENDFOR
\RETURN $\mathbf x_0$
\end{algorithmic}
\end{algorithm}

\subsubsection{BlindDPS}
\label{app:blinddps}
BlindDPS evolves image and kernel diffusion states and couples them
through measurement guidance~\citep{chung2023blinddps}.
At each guidance query, the image branch differentiates the UAL energy in (\ref{eq:pal-energy}).
Published BlindDPS measures data fit by the residual norm
$\|y-\widehat k*\widehat x_0\|$. With $\Sigma_e=\sigma_y^2I$, this norm is
$\sigma_y$ times the square root of the point-likelihood energy
$\|y-\widehat k*\widehat x_0\|^2/\sigma_y^2$. We keep this form and replace
the point-likelihood energy by $Q$ from (\ref{eq:pal-relative-energy}):
\begin{equation}
 D_{\rm UAL}=\sigma_y\sqrt Q,\qquad
 \nabla D_{\rm UAL}=\frac{\sigma_y}{\sqrt Q}\,\nabla\ell_{\rm UAL}.
 \label{eq:blind-fidelity}
\end{equation}
The gradient is a positive multiple of $\nabla\ell_{\rm UAL}$, so the image
step follows the UAL score. The gradient requires $Q>0$, which holds unless
$d=0$ and $U=0$. Because $D_{\rm UAL}$ has the units of the residual norm, the kernel weight $\lambda$ is used unchanged.

With $\Sigma_\psi=0$, $D_{\rm UAL}=\|d\|$, the residual norm at the UAL mean.
The denoiser Jacobian maps the full image derivative to the diffusion state as in (\ref{eq:pal-guidance}).
For Gaussian and motion blur, we scale the UAL image guidance to have the same magnitude as the published image guidance before applying the denoiser Jacobian. Here $\widehat k_i$ is the projected kernel estimate at iteration $i$, denoted $\widehat{\mathbf k}_0$ in Algorithm~\ref{alg:pal-blinddps}. At the current denoised image, let
\[
 \begin{aligned}
 r_{{\rm pub},i}&=\nabla_z\|y-A_{\widehat k_i}z\|,\\
 r_{{\rm UAL},i}&=\nabla_zD_{\rm UAL}(z;y),\qquad
 c_i=\frac{\|r_{{\rm pub},i}\|}{\|r_{{\rm UAL},i}\|}.
 \end{aligned}
\]
Then $\|c_i r_{{\rm UAL},i}\|=\|r_{{\rm pub},i}\|$. The image update applies $-\alpha c_i[D_{x_i}\widehat x_0]^\T r_{{\rm UAL},i}$, where $\alpha$ is the guidance step size. We retain the published guidance step size $\alpha$. For motion blur, $r_{{\rm UAL},i}$ is the gradient of the motion fidelity defined below.

\paragraph{Gaussian blur.}
The fitted kernel gives the current reference widths $\widehat s_i$ through (\ref{eq:width-readout}). The population moments remain $\mu=(\bar s,\bar s)$ and $\Sigma_\psi=\tau^2I_2$. Define
\[
 J_i(z)=\left.D_s(A_{k(s)}z)\right|_{s=\widehat s_i}.
\]
In the model linearized at these reference widths,
\[
 \begin{aligned}
 y\mid z,a&\sim\mathcal N\!\left(
 \mathcal A(z,\widehat s_i)+J_i(z)(\mu-\widehat s_i)+\tau J_i(z)a,
 \sigma_y^2I\right),\\
 a&\sim\mathcal N(0,I_2).
 \end{aligned}
\]
Integrating over $a$ gives the measurement mean
\begin{equation}
 \bar y(z)=\mathcal A(z,\widehat s_i)+J_i(z)(\mu-\widehat s_i),
 \qquad \psi_0=\widehat s_i,
 \label{eq:blind-gaussian-mean}
\end{equation}
and covariance $S_i(z)=\sigma_y^2I+\tau^2J_i(z)J_i(z)^\T$. The correction maps $\mu-\widehat s_i$ into measurement space. The image derivative includes the dependence of this mean correction and covariance on $z$, and the guidance uses the square-root fidelity in (\ref{eq:blind-fidelity}).
In Gaussian blur, the fidelity depends on the fitted kernel through the reference widths $\widehat s_i$. The kernel diffusion and projection steps are retained.

\paragraph{Motion blur.}
For motion blur, let $\widehat k_i$ be the current fitted kernel and set
$J_i(z)=[A_{b_1}z,\ldots,A_{b_P}z]$.
The retained coordinates are $\widehat\psi_i=B_P^\T(\widehat k_i-\bar k)$, as in (\ref{eq:motion-coordinates}).
The residual and covariance are
\begin{equation}
 \begin{aligned}
 d_i(z)&=y-A_{\widehat k_i}z+J_i(z)\widehat\psi_i,\\
 S_i(z)&=\Sigma_e+J_i(z)\Lambda_PJ_i(z)^\T.
 \end{aligned}
 \label{eq:blind-motion-local}
\end{equation}
Both remain differentiable functions of the candidate image.
The mean correction gives the effective kernel
$k_{i,\rm eff}=\bar k+(I-B_PB_P^\T)(\widehat k_i-\bar k)$, so
$D_zd_i=-A_{k_{i,\rm eff}}$.
It corrects the retained kernel deviation toward the population mean and preserves the deviation outside the retained subspace.
The kernel update retains the released guidance.
Algorithm~\ref{alg:pal-blinddps} retains the published sampler
and identifies the two guidance substitutions. In this listing, $\widehat{\mathbf s}^{\,i}$ and $\widehat{\mathbf s}^{\,k}$ denote negative image and kernel noise predictions, respectively. The kernel projection $P_C$ enforces nonnegativity and unit sum. We retain the kernel regularizer $R_{\mathbf k}$ and weight $\lambda$ used by the published BlindDPS method.

\begin{algorithm}[htbp]
\small
\color{black}
\caption{BlindDPS for blind deblurring: published Algorithm~1~\citep{chung2023blinddps}, with UAL replacement lines.}
\label{alg:pal-blinddps}

\PALlegend
\begin{algorithmic}[1]
\REQUIRE $N,\mathbf y,\alpha,\{\widetilde\sigma_i\}_{i=1}^N,\lambda,R_{\mathbf k}(\,\cdot\,)$
\STATE $\mathbf x_N,\mathbf k_N\sim\mathcal N(\mathbf0,I)$
\FOR{$i=N-1$ to $0$}
\STATE $\widehat{\mathbf s}^{\,i}\leftarrow\mathbf s_{\theta^*}^{\,i}(\mathbf x_i,i)$
\STATE $\widehat{\mathbf s}^{\,k}\leftarrow\mathbf s_{\theta^*}^{\,k}(\mathbf k_i,i)$
\STATE $\widehat{\mathbf x}_0\leftarrow
 \dfrac1{\sqrt{\bar\alpha_i}}
 (\mathbf x_i+\sqrt{1-\bar\alpha_i}\widehat{\mathbf s}^{\,i})$
\STATE $\widehat{\mathbf k}_0\leftarrow
 \dfrac1{\sqrt{\bar\alpha_i}}
 (\mathbf k_i+\sqrt{1-\bar\alpha_i}\widehat{\mathbf s}^{\,k})$
\STATE $\widehat{\mathbf k}_0\leftarrow P_C(\widehat{\mathbf k}_0)$
\STATE $\mathbf z_i,\mathbf z_k\sim\mathcal N(\mathbf0,I)$
\STATE $\mathbf x'_{i-1}\leftarrow
 \dfrac{\sqrt{\alpha_i}(1-\bar\alpha_{i-1})}{1-\bar\alpha_i}\mathbf x_i
 +\dfrac{\sqrt{\bar\alpha_{i-1}}\beta_i}{1-\bar\alpha_i}\widehat{\mathbf x}_0
 +\widetilde\sigma_i\mathbf z_i$
\STATE $\mathbf k'_{i-1}\leftarrow
 \dfrac{\sqrt{\alpha_i}(1-\bar\alpha_{i-1})}{1-\bar\alpha_i}\mathbf k_i
 +\dfrac{\sqrt{\bar\alpha_{i-1}}\beta_i}{1-\bar\alpha_i}\widehat{\mathbf k}_0
 +\widetilde\sigma_i\mathbf z_k$
\STATE \PALoriginal{$\mathbf x_{i-1}\leftarrow\mathbf x'_{i-1}
 -\alpha\nabla_{\mathbf x_i}
 \|\mathbf y-\widehat{\mathbf k}_0*\widehat{\mathbf x}_0\|_2$}
\PALalt{$\mathbf x_{i-1}\leftarrow\mathbf x'_{i-1}
 -\alpha c_i\nabla_{\mathbf x_i}
 D_{\rm UAL}(\widehat{\mathbf x}_0;\mathbf y)$ (UAL gradient; \ref{eq:blind-fidelity})}
\STATE \PALoriginal{$L_{\mathbf k}\leftarrow
 \|\mathbf y-\widehat{\mathbf k}_0*\widehat{\mathbf x}_0\|_2
 +\lambda R_{\mathbf k}(\widehat{\mathbf k}_0)$}
\PALalt{$L_{\mathbf k}\leftarrow
 D_{\rm UAL}(\widehat{\mathbf x}_0;\mathbf y)
 +\lambda R_{\mathbf k}(\widehat{\mathbf k}_0)$ (Gaussian blur; for motion blur, use the original line.) (\ref{eq:blind-fidelity})}
\STATE $\mathbf k_{i-1}\leftarrow\mathbf k'_{i-1}
 -\alpha\nabla_{\mathbf k_i}L_{\mathbf k}$
\ENDFOR
\RETURN $\mathbf x_0,\mathbf k_0$
\end{algorithmic}
\end{algorithm}

\subsubsection{Fast Diffusion EM}
\label{app:fdem}
Fast Diffusion EM alternates diffusion posterior sampling of the image with a maximization step for the kernel~\citep{laroche2024fastem}; the released configuration we use guides the image with $\Pi$GDM~\citep{song2023pseudoinverse}. Let $r_t$ be the clean-image uncertainty scale of that guidance. The released schedule sets $r_t^2=v_t/(1+v_t)$, with $v_t=\frac{1-\bar\alpha_{t-1}}{1-\bar\alpha_t}(1-\frac{\bar\alpha_t}{\bar\alpha_{t-1}})$, where $\bar\alpha_t$ is the cumulative diffusion coefficient and $t-1$ denotes the next scheduled reverse step. At a denoised candidate $z$, UAL uses the mean response $A_{\bar k}z$ and combines clean-image and kernel-coefficient uncertainty in the measurement covariance:
\begin{equation}
 \begin{aligned}
 \bar y_t(z)&=A_{\bar k}z,\\
 S_t(z)&=\Sigma_e+r_t^2A_{\bar k}A_{\bar k}^\T+J_\psi(z)\Lambda_PJ_\psi(z)^\T\\
 &\quad+r_t^2\sum_{j=1}^P\lambda_jA_{b_j}A_{b_j}^\T.
 \end{aligned}
 \label{eq:fdem-moments}
\end{equation}
The image update uses the marginal mean $A_{\bar k}z$, obtained by averaging over the kernel-coefficient model in (\ref{eq:motion-coordinates}), in place of the fitted-kernel response; the kernel-maximization step is unchanged. The final covariance term accounts for the interaction between image and kernel uncertainty. These quantities use Fast Diffusion EM\textquoteright s guidance coordinates, with the $[0,1]$ noise-scale conversion described for DiffPIR. The negative gradient of the Gaussian energy with these moments, including the log determinant and the covariance derivative, replaces the published guidance vector at the denoised candidate; the mapping to the diffusion state and the step size are unchanged. Algorithm~\ref{alg:pal-fdem} gives the published loop in its source notation, with $H$ the kernel operator, and marks the replaced guidance line.

\begin{algorithm}[htbp]
\small
\color{black}
\caption[Fast Diffusion EM with UAL]{Fast Diffusion EM with $\Pi$GDM guidance: published Algorithm~2~\citep{laroche2024fastem}, with the UAL replacement line.}
\label{alg:pal-fdem}

\PALlegend
\begin{algorithmic}[1]
\REQUIRE $y,\sigma,H_T,T$
\STATE $\mathbf x_T\sim\mathcal N(\mathbf 0,I)$
\FOR{$t=T$ to $1$}
\STATE $\widehat{\boldsymbol\epsilon}\leftarrow\epsilon(\mathbf x_t,t)$
\STATE $\widehat{\mathbf x}_0\leftarrow\dfrac1{\sqrt{\bar\alpha_t}}(\mathbf x_t-\sqrt{1-\bar\alpha_t}\,\widehat{\boldsymbol\epsilon})$
\STATE $H_{t-1}\leftarrow\text{M-step}(y,\widehat{\mathbf x}_0,\sigma)$
\STATE \PALoriginal{$\mathbf g\leftarrow\nabla_{\mathbf x_t}\log p(y\mid\mathbf x_t,H_{t-1})$ ($\Pi$GDM)}
\PALalt{$\mathbf g\leftarrow\nabla_{\mathbf x_t}\log\mathcal N\!\left(y;A_{\bar k}\widehat{\mathbf x}_0,S_t(\widehat{\mathbf x}_0)\right)$ (\ref{eq:fdem-moments})}
\STATE $\mathbf s\leftarrow\zeta_t\mathbf g-\dfrac1{\sqrt{1-\bar\alpha_t}}\widehat{\boldsymbol\epsilon}$
\STATE $\mathbf z\sim\mathcal N(\mathbf 0,I)$
\STATE $\mathbf x_{t-1}\leftarrow\dfrac1{\sqrt{\alpha_t}}(\mathbf x_t+\beta_t\mathbf s)+\widetilde\sigma_t\mathbf z$
\ENDFOR
\RETURN $\mathbf x_0,H_0$
\end{algorithmic}
\end{algorithm}

\subsection{CT with flat and dark reference uncertainty}
\label{app:ct}
\paragraph{Data and calibration.}
We use the DM4CT synchrotron rock data~\citep{shi2026dm4ct}, distributed
through Zenodo.\footnote{\url{https://zenodo.org/records/15420527}} It contains $1200$ projections,
$20$ flat frames, and $20$ dark frames.

Our application uses 60 views and the released
DM4CT synchrotron diffusion prior with DAPS~\citep{zhang2025daps}.
We follow the published DM4CT preprocessing and reference reconstructions. Published uses one flat and one dark frame; UAL uses the mean reference fields. Appendix~\ref{app:additional-results} also compares both versions using those mean fields.
Each CT case is a two-dimensional reconstruction from the same detector row across the 60 projection views.

Here $\psi$ collects the flat and dark reference fields.
For raw projection $p$, flat field $f$, and dark field
$d_{\rm dark}$, the calibration map acts elementwise as
\begin{equation}
 z(p;f,d_{\rm dark})=-a_{\rm CT}
 \log\max\!\left\{\frac{p-d_{\rm dark}}{f-d_{\rm dark}},t_{\min}\right\}.
 \label{eq:ct-calibration}
\end{equation}
Here $a_{\rm CT}$ is the log-sinogram scale and $t_{\min}$ is the
transmission floor.
The nuisance vector is $\psi=(f^\T,d_{\rm dark}^\T)^\T$. From each set of $N=20$ reference frames, we compute
\[
 \bar f=\frac1N\sum_{j=1}^{N}f_j,\qquad
 \bar d_{\rm dark}=\frac1N\sum_{j=1}^{N}d_{{\rm dark},j},
\]
\[
 \begin{aligned}
 F_f&=\frac{[f_1-\bar f,\ldots,f_N-\bar f]}{\sqrt{N-1}},\\
 F_d&=\frac{[d_{{\rm dark},1}-\bar d_{\rm dark},\ldots,d_{{\rm dark},N}-\bar d_{\rm dark}]}{\sqrt{N-1}}.
 \end{aligned}
\]
The Gaussian reference-field model has
\[
 \mu=\begin{bmatrix}\bar f\\\bar d_{\rm dark}\end{bmatrix},\qquad
 \Sigma_\psi=\begin{bmatrix}F_fF_f^\T&0\\0&F_dF_d^\T\end{bmatrix}.
\]
These statistics are estimated once and shared across the reconstructions. We use frame-to-frame variability to model the uncertain flat and dark fields, with independent Gaussian fluctuations.

\paragraph{Calibration statistics and propagated covariance.}
The calibrated measurement is $y=z(p;\bar f,\bar d_{\rm dark})$, and $A_{\rm CT}$ is the selected parallel-beam projection operator. Conditioning on the raw projection $p$, the calibration-error response is
\[
 \mathcal A_p(x,\psi)=A_{\rm CT}x+z(p;\mu)-z(p;\psi).
\]
At unclipped entries, the calibration sensitivities are
\begin{equation}
 \frac{\partial z}{\partial f}=\frac{a_{\rm CT}}{f-d_{\rm dark}},\qquad
 \frac{\partial z}{\partial d_{\rm dark}}=a_{\rm CT}\left(\frac1{p-d_{\rm dark}}-\frac1{f-d_{\rm dark}}\right).
 \label{eq:ct-sensitivity}
\end{equation}
Clipped entries have zero derivative. Let $R$ broadcast a detector field over the selected angles, and let $D_f$ and $D_d$ be diagonal matrices of these sensitivities evaluated at $(\bar f,\bar d_{\rm dark})$. Linearization gives
\[
 \bar y(x)=A_{\rm CT}x,\qquad
 J_\psi=-[D_fR,D_dR],\qquad
 U_{\rm CT}=-[D_fRF_f,D_dRF_d].
\]
A reference-field fluctuation affects all selected angles through the same field, so the propagated covariance retains the resulting measurement correlations. In CT, the observation-noise scale is $\sigma_{\rm CT}=1.5$, and the DAPS data-fitting scale is $\tau_{{\rm D},{\rm CT}}=1.5$. The covariance is
\begin{equation}
 S_{\rm CT}=\sigma_{\rm CT}^2I+U_{\rm CT}U_{\rm CT}^\T.
 \label{eq:ct-covariance}
\end{equation} 

\paragraph{Reconstruction update.} 
 
The propagated calibration model gives
\[
 p_{\rm UAL}(y\mid x)=\mathcal N(y;A_{\rm CT}x,S_{\rm CT}),\qquad
 \nabla_x\log p_{\rm UAL}=A_{\rm CT}^\T S_{\rm CT}^{-1}(y-A_{\rm CT}x).
\]
With $d=y-A_{\rm CT}x$ and $w=S_{\rm CT}^{-1}d$, the DAPS data drift is
\begin{equation}
 g_{\rm CT}(x)=\frac{2}{\tau_{{\rm D},{\rm CT}}^2}A_{\rm CT}^\T w\,\frac{\|d\|}{\|w\|}.
 \label{eq:ct-rescaling}
\end{equation}
When $U_{\rm CT}=0$, $w=d/\sigma_{\rm CT}^2$ and $w\|d\|/\|w\|=d$, recovering the published point-likelihood drift $2A_{\rm CT}^\T d/\tau_{{\rm D},{\rm CT}}^2$. The drift is a positive multiple of the UAL score. The norm ratio preserves the unweighted residual norm before backprojection, so the published step sizes and data-fitting scale are used unchanged while covariance weighting changes the residual direction.

CT uses Algorithm~\ref{alg:pal-daps} with the calibrated measurement
specified above. Its Langevin likelihood-score term is replaced by
$g_{\rm CT}(\widehat{\mathbf x}_0^{(j)})$
from (\ref{eq:ct-rescaling}).

\paragraph{Dataset terms.}
The DM4CT dataset is distributed under CC BY 4.0 through Zenodo
record 15420527; we cite the accompanying DM4CT paper.
The pretrained synchrotron diffusion checkpoint is distributed separately
under the MIT license.\footnote{\url{https://huggingface.co/jiayangshi/synchrotron_pixel_diffusion}}

\subsection{MRI with coil-map uncertainty}
\label{app:mri}

We estimate coil-map variation by perturbing the calibration measurements and recalibrating the sensitivity maps. The resulting covariance determines the Gaussian coefficient model used in the Ambient DPS likelihood.

\paragraph{Data and forward model.}
We use multi-coil brain measurements from the NYU fastMRI
Initiative~\citep{knoll2020fastmri,zbontar2019fastmriopendatasetbenchmarks}.
We take the middle slice of each volume, apply coil-noise prewhitening and intensity normalization, and use the resulting k-space measurements, sampling mask, and sensitivity maps.
Both versions use the released Ambient Diffusion checkpoint trained at acceleration 6, the released acceleration-8 masks, and the same coil-map calibration width and noise convention.

Reference images are reconstructed from full k-space data using the
released preparation procedure.
We evaluate 50 T2-weighted brain volumes selected at random without replacement from the fastMRI validation set, excluding six development volumes.

Here $x$ denotes the complex image. For sensitivity maps $s=(s_c)_c$, the response of coil $c$ is
\[
 \mathcal A_c(x,s)=M\mathcal F(s_c\odot x).
\]
Here $\mathcal F$ is the released orthonormal Fourier transform and $M$ is the sampling mask~\citep{aali2025adps}.

\paragraph{Nuisance statistics.}
We perturb the acquired calibration data at the measured noise level. Let $Y_{\rm ACS}$ contain the eight central calibration lines, and let $\mathcal E$ denote ESPIRiT calibration~\citep{uecker2014espirit}. The reference sensitivity maps are $s_0=\mathcal E(Y_{\rm ACS})$. For each scan, $\sigma_{\mathbb C}^2$ is the empirical complex variance of the pooled $30\times30$ upper-left corner patches of the prewhitened, normalized coil images. We generate $B=32$ calibration replicates,
\[
 \begin{aligned}
 Y_{\rm ACS}^{(j)}&=Y_{\rm ACS}+\eta_j,\\
 \eta_j&=\frac{\sigma_{\mathbb C}}{\sqrt2}(\xi_j+\mathrm i\zeta_j),\qquad
 \xi_j,\zeta_j\overset{\rm iid}{\sim}\mathcal N(0,I),
 \end{aligned}
\]
with noise added within the calibration lines. We recalibrate each replicate, $\widetilde s_j=\mathcal E(Y_{\rm ACS}^{(j)})$, and align its common coil phase at each pixel to $s_0$:
\[
 \rho_j(r)=\sum_c\overline{\widetilde s_{j,c}(r)}s_{c,0}(r),\qquad
 s_{j,c}(r)=\widetilde s_{j,c}(r)\frac{\rho_j(r)}{|\rho_j(r)|}.
\]
The phase factor is set to one where the overlap is negligible. Let $\mathcal R$ stack real and imaginary parts over all coils and pixels, and define
\[
 \delta_j=\mathcal R(s_j-s_0),\qquad
 \bar\delta=\frac1B\sum_{j=1}^{B}\delta_j,\qquad
 D=\frac{[\delta_1-\bar\delta,\ldots,\delta_B-\bar\delta]}{\sqrt{B-1}}.
\]
An SVD $D=V\operatorname{diag}(\varsigma_\ell)W^\T$ gives map modes $\mathcal R(b_\ell)=V_{:\ell}$ and variances $\lambda_\ell=\varsigma_\ell^2$ that factor the sample covariance $DD^\T$ of the aligned map perturbations. We retain all 32 returned columns, setting $P=32$; centering the replicates leaves at most 31 nonzero covariance modes. The likelihood mean uses the reference maps $s_0$; the centered replicates estimate coil-map variability under the calibration-noise model above.

The coefficients are modeled as $\psi\sim\mathcal N(0,\operatorname{diag}(\lambda_1,\ldots,\lambda_P))$, with complex map modes and real coefficients:
\begin{equation}
 s_c(\psi)=s_{c,0}+\sum_{\ell=1}^{P}b_{\ell,c}\psi_\ell,\qquad
 \mathcal A_c(x,\psi)=M\mathcal F(s_c(\psi)\odot x).
 \label{eq:mri-forward}
\end{equation}
We also apply $\mathcal R$ to the measurements and write $y$ for the resulting real vector. The predicted mean and covariance factor are
\[
 \begin{aligned}
 \bar y(x)&=\mathcal R\!\left(\{M\mathcal F(s_{c,0}\odot x)\}_c\right),\\
 U_{:\ell}(x)&=\sqrt{\lambda_\ell}\,\mathcal R\!\left(\{M\mathcal F(b_{\ell,c}\odot x)\}_c\right).
 \end{aligned}
\]
For proper complex observation noise, the variance per real component is $\sigma_{\rm R}^2=\sigma_{\mathbb C}^2/2$, giving $\Sigma_e=\sigma_{\rm R}^2I$. Integrating the Gaussian coefficient model gives
\[
 p_{\rm UAL}(y\mid x)=\mathcal N(y;\bar y(x),S(x)),\qquad
 S(x)=\sigma_{\rm R}^2I+U(x)U(x)^\T.
\]
At fixed $x$, the response is affine in $\psi$, so this Gaussian marginal requires no nuisance linearization.

\paragraph{Ambient DPS update.}
Ambient DPS evaluates its data term at a denoised image estimate~\citep{aali2025adps}. Let $v_b$ be the candidate used by the released loop for batch member $b$, and let $d_b=y_b-\bar y(v_b)$. With $Q$ from (\ref{eq:pal-relative-energy}), UAL substitutes
\begin{equation}
 E_b=E_{\rm MRI}(v_b;y_b)=\sigma_{\rm R}^2Q(v_b;y_b)
 \label{eq:mri-energy}
\end{equation}
for the squared-residual energy. The factor $\sigma_{\rm R}^2$ puts the energy in the units used by the published update; when coil-map covariance is zero, $E_b=\|d_b\|^2$. Both the energy derivative and the released guidance normalization use this substituted energy:
\[
 g=\nabla_{x_{\rm cur}}\sum_bE_b,\qquad
 x_{{\rm next},b}=\widehat x_b+h\,d_{{\rm cur},b}-\frac{l_{\rm ss}}{\sqrt{E_b}}g_b.
\]
Here $l_{\rm ss}$ is the guidance scale and $d_{\rm cur}$ the diffusion drift. The derivative includes the dependence of $\bar y(v_b)$ and $S(v_b)$ on the candidate image. Algorithm~\ref{alg:mri-adps} defines the diffusion quantities and evaluation points. Since $g_b/\sqrt{E_b}=2\nabla\sqrt{E_b}$, the guidance descends the square-root fidelity $\sqrt{E_b}=\sigma_{\rm R}\sqrt Q$, as in (\ref{eq:blind-fidelity}).

The pair uses 500 diffusion steps and zero churn.

\paragraph{Sampling loop.}
Algorithm~\ref{alg:mri-adps} gives the Ambient DPS sampling loop with UAL\@. The diffusion scaling and noise schedules are $s(t)$ and $\sigma(t)$.
 The data term is evaluated at $v_b=E_{\rm xstart,b}$, using the DPS or ALD definition in the algorithm. The training and inference operators are $\mathcal A_{\mathrm{train}}$ and $\mathcal A_{\mathrm{inf}}$. The returned reconstruction is $x_{\mathrm{cur}}$.

\begin{algorithm}[!t]
\small
\color{black}
\caption{Ambient DPS MRI sampling loop from the official release~\citep{aali2025adps}, with UAL replacement lines.}
\label{alg:mri-adps}

\PALlegend
\begin{algorithmic}[1]
\REQUIRE Released time grid $\{t_i\}_{i=0}^{N}$, $s,\sigma$, network $D_\theta$,
 $\mathcal A_{\mathrm{train}},\mathcal A_{\mathrm{inf}}$, corruption mask,
 measurement $y$, Gaussian latents, and the released churn and guidance settings
\STATE $x_{\mathrm{next}}\leftarrow
 \texttt{latents}\,\sigma(t_0)s(t_0)$
\FOR{$i=0,\ldots,N-1$}
\STATE $t_{\mathrm{cur}}\leftarrow t_i$,
 $t_{\mathrm{next}}\leftarrow t_{i+1}$,
 $x_{\mathrm{cur}}\leftarrow x_{\mathrm{next}}$
\STATE $\gamma\leftarrow
 \min(S_{\mathrm{churn}}/N,\sqrt2-1)$ if
 $S_{\min}\le\sigma(t_{\mathrm{cur}})\le S_{\max}$, else $0$
\STATE $\widehat t\leftarrow\sigma^{-1}
 \bigl(\texttt{round\_sigma}((1+\gamma)\sigma(t_{\mathrm{cur}}))\bigr)$
\STATE Draw $\epsilon\sim\mathcal N(0,I)$
\STATE $\widehat x\leftarrow
 \dfrac{s(\widehat t)}{s(t_{\mathrm{cur}})}x_{\mathrm{cur}}
 +s(\widehat t)S_{\mathrm{noise}}
 \sqrt{[\sigma(\widehat t)^2-\sigma(t_{\mathrm{cur}})^2]_+}\,\epsilon$
\STATE $h\leftarrow t_{\mathrm{next}}-\widehat t$
\IF{the network was trained on fully sampled data}
\STATE $\texttt{denoised}\leftarrow
 D_\theta(\widehat x/s(\widehat t),\sigma(\widehat t))$
\ELSE
\STATE $\texttt{masked}\leftarrow
 \mathcal A_{\mathrm{train}}^*\mathcal A_{\mathrm{train}}(\widehat x)$
\STATE Stack its real and imaginary channels and concatenate the corruption mask to form $\texttt{net\_input}$
\STATE $\texttt{denoised}\leftarrow
 D_\theta(\texttt{net\_input}/s(\widehat t),\sigma(\widehat t))$,
 retaining the released image-output channels
\ENDIF
\STATE $d_{\mathrm{cur}}\leftarrow
 \left(\dfrac{\dot\sigma(\widehat t)}{\sigma(\widehat t)}
       +\dfrac{\dot s(\widehat t)}{s(\widehat t)}\right)\widehat x
 -\dfrac{\dot\sigma(\widehat t)s(\widehat t)}{\sigma(\widehat t)}
  \texttt{denoised}$
\IF{$\texttt{l\_type}=\texttt{DPS}$}
\STATE $E_{\mathrm{xstart}}\leftarrow
 \dfrac{x_{\mathrm{cur}}+s(t_{\mathrm{cur}})^2
       (\texttt{denoised}-x_{\mathrm{cur}})}{s(t_{\mathrm{cur}})}$
\ELSE
\STATE $E_{\mathrm{xstart}}\leftarrow\texttt{denoised}$
\ENDIF
\STATE \PALoriginal{$E_b\leftarrow\|y_b-\mathcal A_{\mathrm{inf}}
 (E_{\mathrm{xstart},b})\|^2$ for each batch member $b$}
\PALalt{$E_b\leftarrow E_{\mathrm{MRI}}(E_{\mathrm{xstart},b};y_b)
 =\sigma_{\mathrm R}^2Q(E_{\mathrm{xstart},b};y_b)$
 for each batch member $b$ (\ref{eq:mri-energy})}
\STATE $g\leftarrow\nabla_{x_{\mathrm{cur}}}\sum_b E_b$
\STATE $x_{\mathrm{next},b}\leftarrow
 \widehat x_b+h\,d_{\mathrm{cur},b}
 -(l_{\mathrm{ss}}/\sqrt{E_b})g_b$
\STATE Detach $x_{\mathrm{next}}$ before the next iteration
\ENDFOR
\RETURN $x_{\mathrm{cur}}$
\end{algorithmic}
\end{algorithm}

\paragraph{Dataset acknowledgment.}

Data used in the preparation of this article were obtained from the NYU fastMRI Initiative database
(fastmri.med.nyu.edu)~\citep{knoll2020fastmri,zbontar2019fastmriopendatasetbenchmarks}. As such, NYU fastMRI investigators
provided data but did not participate in analysis or writing of this report. A listing of NYU fastMRI
investigators, subject to updates, can be found at fastmri.med.nyu.edu. The primary goal of fastMRI is
to test whether machine learning can aid in the reconstruction of medical images. 

\subsection{Ultrasound with sound-speed uncertainty} 

\label{app:ultrasound}

Differences between submitted and optimized sound speeds in the CUBDL acquisitions set the uncertainty scale. We propagate nonlinear sound-speed variation at a fixed pilot reconstruction to obtain the per-mode variances used by DRUS.
 
\paragraph{Data and forward model.}
We use the acquired PICMUS contrast and resolution phantoms provided with the DRUS release and its linked supplement, with 12 paired trajectories per phantom~\citep{liebgott2016picmus,10.1007/978-3-031-53767-7_19}.
Channel measurements $y_{\rm raw}$ and beamformed measurements $y_d$ are related by
\begin{equation}
 y_{\mathrm{raw}}=H({c_s})x+n,\qquad
 y_d=B_0y_{\mathrm{raw}}=B_0H({c_s})x+B_0n,\qquad B_0=B({c_{s,0}}).
 \label{eq:ultrasound-forward}
\end{equation}
UAL uses the beamformed measurement $y=y_d$ in (\ref{eq:model}).
The signal is reflectivity, the nuisance is $\psi={c_s}$, and
$\mathcal A(x,{c_s})=B_0H({c_s})x$.
The reference sound speed is ${c_{s,0}}=1540\,\mathrm{m/s}$.
When varying ${c_s}$, we keep the beamformer $B_0$, channel-data
layout, and measurement window fixed at their reference definitions.
Only $H({c_s})$ changes.

The reference beamformed operator is $H_d=B_0H({c_{s,0}})$, with singular value decomposition $H_d=U_dS_dV_d^\T$.
The published DRUS spectral step uses a scalar noise model; the UAL replacement uses per-mode variances.
The sound-speed uncertainty scale comes from the nine CUBDL 040GSE reference acquisitions~\citep{hyun2021cubdl}, MYO001--MYO006 and UFL003--UFL005 in the released workbook.\footnote{\url{https://gitlab.com/dongwoon.hyun/cubdl/-/blob/master/datasets/CUBDL_Phantom_Details.xlsx}}
For their submitted-minus-optimized speed offsets $\delta_j=1540-c_{s,j}^{\rm opt}$, we compute
\[
 \bar\delta=\frac19\sum_{j=1}^{9}\delta_j,\qquad
 \sigma_c^2=\frac18\sum_{j=1}^{9}(\delta_j-\bar\delta)^2.
\]
This gives $\sigma_c=26.9\,\mathrm{m/s}$. The offsets determine the spread; the Gaussian sound-speed model remains centered at the released value $c_{s,0}=1540\,\mathrm{m/s}$.

\paragraph{Nuisance propagation.}
For each acquired case, let $(t_k,w_k)$ be the nodes and weights of a 32-node Gauss--Hermite rule for a standard Gaussian. We retain the eight nodes with $|t_k|\le2$ and use
\[
 c_{s,k}=c_{s,0}+\sigma_ct_k,\qquad
 \pi_k=\frac{w_k}{\sum_{j:\,|t_j|\le2}w_j}.
\]
Thus $\sigma_c$ is the Gaussian scale before truncation. Sound speed changes the propagation delay $(d_{{\rm tx},p}+d_{{\rm rx},pe})/c_s$, where $d_{{\rm tx},p}$ and $d_{{\rm rx},pe}$ are the transmit and receive path lengths for pixel $p$ and receive element $e$. The pulse model converts these delays into $H(c_s)$; the beamformer, channel layout, and measurement window remain fixed. At each retained node, we evaluate the nonlinear perturbation
\begin{equation}
 \Delta_k=B_0[H(c_{s,k})-H(c_{s,0})]x_{\rm ref},\qquad
 \bar\Delta=\sum_k\pi_k\Delta_k.
 \label{eq:ultrasound-perturbation}
\end{equation}Here $x_{\rm ref}$ is a pilot reconstruction from the same beamformed measurement, produced by the released DRUS sampler and averaged over its three channels. The per-mode variances are computed once before sampling and use no ground truth.
The weighted covariance is
\[
 C_\Delta=\sum_k\pi_k(\Delta_k-\bar\Delta)(\Delta_k-\bar\Delta)^\T.
\]
These are quadrature moments, so the covariance uses the normalized weights directly. For column $u_i$ of $U_d$, the per-mode variance is
\begin{equation}
 \gamma_i^2=\gamma^2+u_i^\T C_\Delta u_i
 =\gamma^2+\sum_k\pi_k[u_i^\T(\Delta_k-\bar\Delta)]^2.
 \label{eq:ultrasound-variance}
\end{equation}
Here $\gamma$ is the released scalar noise standard deviation, including its $\sqrt3$ factor for the three-channel representation. The same spectral variances are used in each channel. The perturbations retain the nonlinear dependence on sound speed.

\paragraph{DRUS update.} 
 
For each image channel, write $\widetilde y=U_d^\T y_d$ and $z=V_d^\T x$, and let $s_i$ be the singular values of $H_d$. The spectral Gaussian data term is
\[
 \ell_{\rm US}(x;y_d)=\frac12\sum_i\left[
 \frac{(\widetilde y_i-s_i z_i)^2}{\gamma_i^2}+\log\gamma_i^2\right]+\mathrm{const}.
\]
DRUS retains the nominal mean response $H_dx$ and replaces its scalar noise level $\gamma$ with $\gamma_i$ in initialization, mode-selection thresholds, residual weights, and sampling variances (Algorithm~\ref{alg:drus-loop}). The diffusion prior and remaining sampling operations are unchanged.

\paragraph{Spectral sampling loop.}
Algorithm~\ref{alg:drus-loop} gives the DRUS spectral updates~\citep{10.1007/978-3-031-53767-7_19}. It uses $H_d=U_dS_dV_d^\T$, with singular values $s_l$. Its scalar noise level $\sigma_0$ corresponds to $\gamma$. Here $\eta_C$ controls noise mixing in the default spectral update, $\eta_A$ controls the residual/noise mixture on $I_A$, and $\eta_B$ controls the measurement blend and sampling variance on $I_B$. Unmeasured coordinates retain the published update, and independent Gaussian draws retain their published ordering.

\begin{algorithm}[!t]
\small
\color{black}
\caption{DRUS spectral sampling loop from the official release~\citep{10.1007/978-3-031-53767-7_19}, with UAL replacement lines.}
\label{alg:drus-loop}

\PALlegend
\begin{algorithmic}[1]
\REQUIRE Gaussian input $\xi$, released time sequence \texttt{seq},
 noise predictor, $\bar\alpha_t$, $U_d,S_d,V_d$, measurement $y_d$,
 $\sigma_0,\eta_A,\eta_B,\eta_C$
\STATE $\widetilde y\leftarrow U_d^\T y_d$,
 $\sigma_T\leftarrow\sqrt{(1-\bar\alpha_{\texttt{seq[-1]}})/
 \bar\alpha_{\texttt{seq[-1]}}}$
\STATE \PALoriginal{Set $a_l\leftarrow\widetilde y_l/s_l$ and $b_l\leftarrow\sigma_0/s_l$
 where $s_l\sigma_T>\sigma_0$; set both to zero elsewhere}
\PALalt{Set $a_l\leftarrow\widetilde y_l/s_l$ and $b_l\leftarrow\gamma_l/s_l$
 where $s_l\sigma_T>\gamma_l$; set both to zero elsewhere (\ref{eq:ultrasound-variance})}
\STATE $x\leftarrow V_d\bigl(a+\sqrt{[\sigma_T^2-b^2]_+}\odot\xi\bigr)/\sigma_T$
\STATE $\texttt{xs}\leftarrow[x]$; $\texttt{x0\_preds}\leftarrow[\,]$;
 $\texttt{seq\_next}\leftarrow[-1]+\texttt{seq[:-1]}$
\FOR{$(i,j)$ in \texttt{zip(reversed(seq),reversed(seq\_next))}}
\STATE $\epsilon\leftarrow$ released noise predictor at $(\texttt{xs[-1]},i)$,
 retaining its image-noise channels
\STATE $\widehat x_0\leftarrow
 (\texttt{xs[-1]}-\sqrt{1-\bar\alpha_i}\,\epsilon)/\sqrt{\bar\alpha_i}$
\STATE $\sigma_{\mathrm{next}}\leftarrow
 \sqrt{(1-\bar\alpha_j)/\bar\alpha_j}$, with $\bar\alpha_{-1}=1$
\STATE $v\leftarrow V_d^\T\widehat x_0$
\STATE \PALoriginal{$I_A\leftarrow\{l:s_l\sigma_{\mathrm{next}}<\sigma_0\}$,
 $I_B\leftarrow\{l:s_l\sigma_{\mathrm{next}}>\sigma_0\}$}
\PALalt{$I_A\leftarrow\{l:s_l\sigma_{\mathrm{next}}<\gamma_l\}$,
 $I_B\leftarrow\{l:s_l\sigma_{\mathrm{next}}>\gamma_l\}$ (\ref{eq:ultrasound-variance})}
\STATE Draw a full signal-coordinate Gaussian vector $\xi_C$
\STATE $\overline x_{\mathrm{next}}\leftarrow v+
 \sigma_{\mathrm{next}}\sqrt{1-\eta_C^2}\,V_d^\T\epsilon+
 \sigma_{\mathrm{next}}\eta_C\xi_C$
\STATE Draw $\xi_A$ on $I_A$ and overwrite, for $l\in I_A$,
\STATE \PALoriginal{$\displaystyle\quad
 \overline x_{\mathrm{next},l}\leftarrow v_l+
 \sigma_{\mathrm{next}}\sqrt{1-\eta_A^2}\,
       \frac{\widetilde y_l-s_lv_l}{\sigma_0}
 +\sigma_{\mathrm{next}}\eta_A\xi_{A,l}$}
\PALalt{$\displaystyle\quad
 \overline x_{\mathrm{next},l}\leftarrow v_l+
 \sigma_{\mathrm{next}}\sqrt{1-\eta_A^2}\,
       \frac{\widetilde y_l-s_lv_l}{\gamma_l}
 +\sigma_{\mathrm{next}}\eta_A\xi_{A,l}$ (\ref{eq:ultrasound-variance})}
\STATE Draw a full measurement-coordinate Gaussian vector $\xi_B$
 and overwrite, for $l\in I_B$,
\STATE \PALoriginal{$\displaystyle\quad
 \overline x_{\mathrm{next},l}\leftarrow
 \eta_B\frac{\widetilde y_l}{s_l}+(1-\eta_B)v_l+
 \sqrt{\sigma_{\mathrm{next}}^2-\eta_B^2\sigma_0^2/s_l^2}\,\xi_{B,l}$}
\PALalt{$\displaystyle\quad
 \overline x_{\mathrm{next},l}\leftarrow
 \eta_B\frac{\widetilde y_l}{s_l}+(1-\eta_B)v_l+
 \sqrt{\sigma_{\mathrm{next}}^2-\eta_B^2\gamma_l^2/s_l^2}\,\xi_{B,l}$ (\ref{eq:ultrasound-variance})}
\STATE $x_{\mathrm{next}}\leftarrow\sqrt{\bar\alpha_j}\,V_d\overline x_{\mathrm{next}}$
\STATE Append $\widehat x_0$ to \texttt{x0\_preds} and $x_{\mathrm{next}}$ to \texttt{xs}
\ENDFOR
\RETURN \texttt{xs}, \texttt{x0\_preds}; the released reconstruction is \texttt{xs[-1]}
\end{algorithmic}
\end{algorithm}

\paragraph{Dataset terms.}
The PICMUS organizers permit use of the data and code and request
acknowledgment of the challenge and citation of its proceedings
paper~\citep{liebgott2016picmus}.\footnote{\url{https://www.creatis.insa-lyon.fr/Challenge/IEEE_IUS_2016/node/48}}

\subsection{Fluorescence microscopy with pupil-phase uncertainty}
\label{app:microscopy}

We reconstruct a thin two-dimensional fluorescent object from a blurred
measurement. Five uncertain pupil-phase coefficients determine the
point-spread function. Their prescribed Gaussian distribution defines the
synthetic aberrations and the uncertainty propagated by UAL\@.

\paragraph{Reference images and measurements.}
We use four stacks from the CARE Planaria test data~\citep{weigert2018care}. From the 192 available axial sections, we
select 13, 13, 12, and 12 uniformly spaced sections from the four stacks
in filename order. We take a central $256\times256$ crop from each section.
Selection uses no reconstruction scores.
The crop is mapped to $[-1,1]$ using the minimum and maximum of its full
source section, then used as the reference signal $x$ in
\begin{equation}
 y=A_{k(\psi^\star)}x+e,\qquad
 e\sim\mathcal N(0,\sigma_e^2I),\qquad \sigma_e=0.02,
 \label{eq:microscopy-forward}
\end{equation}
where $A_k$ denotes circular convolution with kernel $k$, and $\psi^\star$ contains the realized pupil-phase coefficients.
The reference is an acquired CARE image section; the aberrated measurement
is generated by this forward model.

\paragraph{Optical model.}
The scalar pupil model uses wavelength $\lambda_{\rm em}=0.6\,\mu{\rm m}$,
numerical aperture $\mathrm{NA}=1.2$, refractive index $n_{\rm med}=1.33$,
and a fixed focal offset $z_f=1\,\mu{\rm m}$.
For normalized pupil coordinates $\xi=(u,v)$ in the unit disk, the phase is
\begin{equation}
 \phi(\xi;\psi)=\sum_{j=1}^{5}\psi_j Z_{\iota_j}(\xi)
 -2\pi z_f\left[
 \sqrt{\left(\frac{n_{\rm med}}{\lambda_{\rm em}}\right)^2
       -\left(\frac{{\rm NA}}{\lambda_{\rm em}}\right)^2\|\xi\|^2}
 -\frac{n_{\rm med}}{\lambda_{\rm em}}\right].
 \label{eq:microscopy-phase}
\end{equation}
Here $\iota=(5,6,7,8,11)$ selects unit-RMS Zernike modes in Noll
ordering~\citep{noll1976zernike}, and $\psi_j$ is measured in phase radians.
Writing $r^2=u^2+v^2$, the implemented modes are
$\sqrt6(2uv)$, $\sqrt6(u^2-v^2)$, $\sqrt8\,v(3r^2-2)$,
$\sqrt8\,u(3r^2-2)$, and $\sqrt5(6r^4-6r^2+1)$.
The focal offset is fixed; it is not one of the nuisance coordinates.

Let $\Omega$ be the points of a $64\times64$ midpoint grid inside the
unit disk. A positive-exponent matrix Fourier transform gives the field
at detector positions $\mathbf r$:
\begin{equation}
 h_\psi(\mathbf r)=
 \sum_{\xi\in\Omega}
 \exp\!\bigl(i\phi(\xi;\psi)\bigr)
 \exp\!\left(2\pi i\,\frac{{\rm NA}}{\lambda_{\rm em}}
                  \mathbf r^\T\xi\right),
 \qquad
 k(\psi)[\mathbf r]=
 \frac{|h_\psi(\mathbf r)|^2}{\sum_{\mathbf r'}|h_\psi(\mathbf r')|^2}.
 \label{eq:microscopy-psf}
\end{equation}
The detector grid has $65\times65$ samples with $0.1\,\mu{\rm m}$ spacing.
The discrete kernel is normalized on this grid before circular convolution.

\paragraph{\texorpdfstring{Aberration distribution and UAL.}{Aberration distribution and UAL.}}
The five pupil-phase coefficients follow the prescribed Gaussian distribution
\begin{equation}
 q=\mathcal N(\mu,\Sigma_\psi),\qquad
 \mu=(0.10,-0.10,0.05,0,0.10)^\T,\qquad
 \Sigma_\psi=0.15^2I_5.
 \label{eq:microscopy-nuisance}
\end{equation}
This distribution generates the synthetic aberrations and provides the mean and covariance used by UAL\@.
 We linearize at $\psi_0=\mu$, so the first-order mean correction vanishes. Independent observation noise is added to each measurement.
One draw $\psi^\star\sim q$ and one noisy measurement are fixed per section and shared by both sampler versions and all 12 trajectories.

To propagate phase uncertainty, we differentiate the optical intensity and its normalization. Define $I_\psi(\mathbf r)=|h_\psi(\mathbf r)|^2$ and $T_\psi=\sum_{\mathbf r}I_\psi(\mathbf r)$. The pupil-field derivative at $\mu$ is
\[
 h_j(\mathbf r)=\left.\partial_{\psi_j}h_\psi(\mathbf r)\right|_\mu
 =\mathrm i\sum_{\xi\in\Omega}Z_{\iota_j}(\xi)e^{\mathrm i\phi(\xi;\mu)}
 e^{2\pi\mathrm i({\rm NA}/\lambda_{\rm em})\mathbf r^\T\xi}.
\]
The intensity and normalized-kernel derivatives are
\[
 \begin{aligned}
 I_j(\mathbf r)&=2\operatorname{Re}\!\left[\overline{h_\mu(\mathbf r)}h_j(\mathbf r)\right],\\
 \kappa_j(\mathbf r)&=\frac{I_j(\mathbf r)-k(\mu)[\mathbf r]\sum_{\mathbf r'}I_j(\mathbf r')}{T_\mu},
 \qquad b_j=0.15\kappa_j.
 \end{aligned}
\]
Each $\kappa_j$ sums to zero, so it changes the PSF shape while preserving unit sum. For $a\sim\mathcal N(0,I_5)$, linearization gives $k(\mu+0.15a)\approx k(\mu)+\sum_j a_jb_j$. Under this linearization, the conditional measurement model is
\[
 y\mid x,a\sim\mathcal N\!\left(A_{k(\mu)}x+\sum_{j=1}^{5}a_jA_{b_j}x,\sigma_e^2I\right).
\]
Integrating over $a$ gives
\begin{equation}
 \bar y(x)=A_{k(\mu)}x,\qquad
 U(x)=[A_{b_1}x,\ldots,A_{b_5}x],\qquad
 S(x)=\sigma_e^2I+U(x)U(x)^\T.
 \label{eq:microscopy-pal}
\end{equation}
We prepare the reference kernel and its five scaled derivative kernels $b_j$ once.
The reduced system is $5\times5$ and is evaluated by Cholesky
factorization in double precision.

\paragraph{Released sampler and UAL update.}
Both versions use PnP-DM~\citep{wu2024pnpdm} with the ED-Diff CARE
checkpoint~\citep{luo2026eddiff} and the same mean kernel $k(\mu)$.
For candidate image $u$, with $d(u)=y-A_{k(\mu)}u$, the UAL likelihood step uses
\[
 \ell_{\rm UAL}(u;y)=\tfrac12d(u)^\T S(u)^{-1}d(u)
 +\tfrac12\log\det S(u)+\tfrac m2\log(2\pi),
\]
with $S(u)$ from (\ref{eq:microscopy-pal}). At outer iterate $x^{(k)}$, the conditional target is proportional to $\exp[-\ell_{\rm UAL}(u;y)-\|u-x^{(k)}\|^2/(2\rho_k^2)]$. Its Langevin gradient includes the covariance derivatives, as in Algorithm~\ref{alg:pnp-likelihood}.
The UAL likelihood step takes 100 steps of size $10^{-4}$ per likelihood update.

Algorithm~\ref{alg:pal-pnpdm} gives the shared outer sampler; the diffusion-prior update is unchanged from the released method.
Both versions use 100 outer iterations in total, including 40 burn-in iterations, and the variance-preserving SDE mode with 100 prior steps.
The coupling scale $\rho_k$ starts at 10, decays by a factor of $0.9$ per iteration,
and has minimum $0.3$.
For each seed, we retain the last sample returned by PnP-DM\@.
Twelve paired seeds, 0 through 11, give 600 reconstructions per version.
The realized aberration coefficients and true PSFs are used only in
measurement generation and evaluation.

\paragraph{Diffusion prior, scoring, and data access.}
The released ED-Diff model has one input channel, 64 base channels,
three residual blocks per resolution level, and a 1,000-step linear
diffusion schedule.
For each prior update, we map the requested noise level to the nearest level on this schedule and convert the pretrained noise prediction to a clean-signal estimate, clipped to $[-1,1]$.
No retraining is performed.
PSNR and SSIM use data range 2 and are computed per trajectory before
averaging within sections and across the 50 sections.
The reference sections come from CARE's original test stacks. The released ED-Diff prior is pretrained on high-SNR images~\citep{luo2026eddiff}.

CARE data are distributed through the Edmond repository,
\href{https://doi.org/10.17617/3.FDFZOF}{doi:10.17617/3.FDFZOF},
whose metadata specify CC0 1.0.
The ED-Diff code and checkpoint are obtained through the
\href{https://github.com/cbmi-group/ED-Diff}{authors' release}.
The examined release does not specify a separate redistribution license
for the checkpoint.

\subsection{Evaluation metrics and reporting}
\label{app:metrics}

Reported 95\% intervals use 20,000 paired bootstrap resamples of images for deblurring, detector rows within the same acquisition for CT, and scans for MRI, after computing each case's reconstruction and ensemble scores.
Ablation error bars are standard errors across the ablation set. The microscopy whiskers in Figure~\ref{fig:teaser} resample the 50 sections in pairs and describe variation within these sections. Per-stack gains provide a separate descriptive comparison. Ultrasound whiskers in Figure~\ref{fig:teaser} show one standard error of the paired differences across the 12 contrast-phantom trajectories. Following the DRUS evaluation protocol, we report contrast-to-noise ratio (CNR), generalized contrast-to-noise ratio (gCNR), and speckle SNR on the contrast phantom, and axial and lateral full width at half maximum (FWHM) on the resolution phantom.

For one case, let $\widehat x^{(1)},\ldots,\widehat x^{(R)}\in\mathbb R^n$
be the vectorized reconstructions in scoring coordinates with $R\ge2$, and let $x^\star$
be the reference. We distinguish the average trajectory score from the
score of the ensemble-mean reconstruction:
\begin{equation}
 \overline{\operatorname{metric}}
 =\frac1R\sum_{r=1}^R
 \operatorname{metric}(\widehat x^{(r)},x^\star),
 \qquad
 \operatorname{metric}_{\mathrm{ens}}
 =\operatorname{metric}(\widehat x_{\mathrm{ens}},x^\star).
 \label{eq:metric-aggregation}
\end{equation}
This distinction applies to both PSNR and SSIM\@. For the real-valued tasks,
$\widehat x_{\mathrm{ens}}=R^{-1}\sum_r\widehat x^{(r)}$.
For MRI, individual magnitude reconstructions are used for per-trajectory PSNR and SSIM and for the three fair ensemble scores. For ensemble-mean PSNR and SSIM, we first average the complex reconstructions and then take the magnitude.
Case-level scores are averaged over the reported cases.

For a reconstruction $z$, we use
\begin{equation}
 \operatorname{MSE}(z,x^\star)=\frac1n\|z-x^\star\|_2^2,
 \qquad
 \operatorname{PSNR}(z,x^\star)
 =10\log_{10}\frac{D^2}{\operatorname{MSE}(z,x^\star)},
 \label{eq:metric-psnr}
\end{equation}where $D$ is the task-specific PSNR peak.
If $\overline\Delta_{\mathrm{PSNR}}$ is the mean paired PSNR gain,
the geometric mean of the UAL-to-original MSE ratios is
$10^{-\overline\Delta_{\mathrm{PSNR}}/10}$.
We use this relation for the relative MSE reductions in the Abstract and Introduction.
For deblurring and microscopy, PSNR and SSIM use the data range 2 for intensities in $[-1,1]$.
CT uses the reference-image range for both; MRI uses the reference-image maximum for PSNR and its range for SSIM\@. SSIM uses the scikit-image default
window settings; color-image scores are averaged over channels.

The fair finite-ensemble estimators of the continuous ranked probability score (CRPS) and the energy score~\citep{gneiting2007strictly,ferro2014fair}
use off-diagonal ensemble pairs:
\begin{align}
 \operatorname{CRPS}_{\mathrm f}
 &=\frac1n\sum_{i=1}^n
 \left[
 \frac1R\sum_{r=1}^R|\widehat x_i^{(r)}-x_i^\star|
 -\frac1{R(R-1)}\sum_{r<s}
 |\widehat x_i^{(r)}-\widehat x_i^{(s)}|
 \right],
 \label{eq:metric-crps}\\
 \operatorname{ES}_{\mathrm f}
 &=\frac1R\sum_{r=1}^R\|\widehat x^{(r)}-x^\star\|_2
 -\frac1{R(R-1)}\sum_{r<s}
 \|\widehat x^{(r)}-\widehat x^{(s)}\|_2.
 \label{eq:metric-energy}
\end{align}
The energy score uses the Euclidean norm of the full image vector,
without an additional dimension normalization.

For the local variogram score, define the pair features
$v_{ij}^{(r)}=|\widehat x_i^{(r)}-\widehat x_j^{(r)}|^{1/2}$ and
$v_{ij}^\star=|x_i^\star-x_j^\star|^{1/2}$
using the variogram construction of \citet{scheuerer2015variogram}.
Let $\bar v_{ij}=R^{-1}\sum_r v_{ij}^{(r)}$. 
For each offset $g$, let $\mathcal P_g$ contain the valid within-channel
pixel pairs at that offset. We use lags 1, 2, 4, 8, and 16 in the
horizontal, vertical, and two diagonal directions, giving 20 offset groups.
We use the fair estimator
\begin{equation}
 \operatorname{LVS}_{\mathrm f}
 =\frac1{20}\sum_g\frac1{|\mathcal P_g|}
 \sum_{(i,j)\in\mathcal P_g}
 \left[
  (\bar v_{ij}-v_{ij}^\star)^2
  -\frac1{R(R-1)}
   \sum_{r=1}^R(v_{ij}^{(r)}-\bar v_{ij})^2
 \right].
 \label{eq:metric-variogram}
\end{equation}
Thus valid pairs and channels are averaged within each offset group,
and the groups receive equal weight. Lower values of all three
ensemble scores are better. Metrics are computed in double precision, without additional clipping or rescaling.

\begin{table}[htbp]
\centering
\small
\caption[Primary paired-gain intervals]{Pointwise 95\% paired-bootstrap confidence intervals
for the primary differences in Table~\ref{tab:exp-deblurring}.
PSNR gain is UAL minus Published for mean per-trajectory PSNR (dB);
CRPS reduction is Published minus UAL, scaled by $10^3$.}
\label{tab:primary-gain-intervals}
\begin{tabular}{@{}lll@{}}
\toprule
Sampler / task & PSNR gain 95\% CI & CRPS reduction 95\% CI $\times10^3$ \\
\midrule
\multicolumn{3}{l}{Gaussian blur: mean width 3 pixels, standard deviation 0.5 pixels} \\
DAPS & $[0.20,0.61]$ & $[-0.10,1.65]$ \\
PnP-DM & $[.26,.84]$ & $[1.79,3.49]$ \\
DiffPIR & $[.59,1.46]$ & $[2.35,5.31]$ \\
BlindDPS & $[.80,.99]$ & $[1.91,2.51]$ \\
\midrule
\multicolumn{3}{l}{Motion blur} \\
DAPS & $[10.48,11.85]$ & $[97.27,137.38]$ \\
PnP-DM & $[8.88,10.13]$ & $[95.12,133.09]$ \\
DiffPIR & $[2.50,3.83]$ & $[85.82,108.75]$ \\
BlindDPS & $[-0.27,-0.07]$ & $[0.80,2.70]$ \\
Fast Diffusion EM & $[5.71,6.94]$ & $[45.31,56.27]$ \\
\midrule
\multicolumn{3}{l}{Imaging applications} \\
CT & $[.173,.236]$ & $[.516,.671]$ \\
MRI & $[0.0097,0.0160]$ & $[0.0239,0.0435]$ \\
Microscopy & $[0.238,0.649]$ & $[0.671,2.426]$ \\
\bottomrule
\end{tabular}
\end{table}

\section{Additional experimental results}
\label{app:additional-results}
This appendix gives the supporting comparisons and full metric summaries for Section~\ref{sec:experiments}.

\subsection{Numerical likelihood reference}
\label{app:numerical-reference}
For Figure~\ref{fig:likelihood-approximation}, we integrate the nonlinear Gaussian-blur likelihood over two independent positive-truncated-normal width distributions, with mean width 3 pixels and standard deviations $\tau\in\{0.25,0.5,1,1.5\}$ pixels. At fixed candidate $x$ and measurement $y$, we center adaptive tensor-product Gauss--Hermite quadrature at a mode of $p(\psi\mid x,y)\propto q(\psi)p(y\mid x,\psi)$. We find the mode by L-BFGS-B with positive width bounds, starting from $(3,3)$ and $(3+\tau,3-\tau)$ pixels and retaining the solution with the lower negative log posterior. The Gaussian quadrature proposal $h$ has covariance given by the inverse of the symmetrized Hessian of $-\log[q(\psi)p(y\mid x,\psi)]$, computed by centered finite differences of the nuisance gradient at the mode. The transformed quadrature weights include $q(\psi)/h(\psi)$, with zero weight outside the positive-width support. The likelihood calculations use the same kernel grid, boundary convention, crop, and noise standard deviation $0.02$ as the paired reconstructions.

Writing $g=\nabla_x\log p(y\mid x)$ and $\ell=-\log p(y\mid x)$, the plotted errors are $100\|g-g_{\rm ref}\|_2/\|g_{\rm ref}\|_2$ percent and $|\ell-\ell_{\rm ref}|/m$ nats per measurement, with $m=3\times256\times256$. The reference score is the nuisance-posterior-weighted average of the conditional image scores. At each uncertainty level, all three likelihoods use the same candidate and observation.
 We average errors over trajectories within each image before computing the mean and SE across images.

We checked numerical convergence for all 768 final candidate--observation pairs in Figure~\ref{fig:likelihood-approximation}a,b by increasing the quadrature order from 5 to 9 per width coordinate.
 The largest observed changes were $5.82\times10^{-11}$ nats in energy and $5.77\times10^{-12}$ in the relative $\ell_2$ score change. These changes with quadrature order are far below the plotted approximation errors.
To compare likelihood accuracy during sampling, we reran the published PnP-DM sampler on the 16 ablation images with seed~0 at each $\tau$. We saved its prior-step outputs at zero-based outer iterations 0, 25, 50, 74, and 99 of 100 and evaluated all three likelihoods at these same candidates against the quadrature reference.
Relative to the original final reconstructions, the reruns change the measured score error by at most 0.0121 percentage points and the energy error by at most $6.82\times10^{-5}$ nats per measurement.
 Table~\ref{tab:figure2-trajectory} reports the relative score error; UAL also has the lowest mean energy error at every sampling state and uncertainty level. At the first saved state, UAL\textquoteright s mean relative image-score error ranges from 17.5\% to 19.9\% across uncertainty levels.

\begin{table}[htbp]
\centering
\small
\caption{Relative image-score error (percent) at five saved PnP-DM sampling states in Gaussian deblurring.
Entries are mean $\pm$ one SE across 16 images, with one trajectory per image at each $\tau$.
Candidates are prior-step outputs from Published; iteration indices are zero-based.
Figure~\ref{fig:likelihood-approximation}c labels the sampling states nominally as 1, 25, 50, 75, and 100\@.
Numerical marginalization in Appendix~\ref{app:numerical-reference} provides the reference.}
\label{tab:figure2-trajectory}
\begin{tabular}{@{}llrrrrr@{}}
\toprule
$\tau$ & Likelihood & iter.\ 0 & iter.\ 25 & iter.\ 50 & iter.\ 74 & iter.\ 99 \\
\midrule
0.25 & Published & $36\pm5$ & $8.1\pm0.9$ & $6.1\pm0.7$ & $6.0\pm0.6$ & $6.2\pm0.6$ \\
 & Isotropic & $33\pm5$ & $7.8\pm0.7$ & $6.9\pm0.5$ & $6.8\pm0.5$ & $6.9\pm0.5$ \\
 & UAL & $17\pm4$ & $0.35\pm0.06$ & $0.14\pm0.03$ & $0.11\pm0.01$ & $0.12\pm0.01$ \\
\addlinespace[2pt]
0.5 & Published & $37\pm5$ & $11\pm1$ & $8.2\pm0.9$ & $7.9\pm0.8$ & $8.3\pm0.8$ \\
 & Isotropic & $34\pm4$ & $16\pm1$ & $17\pm1$ & $17\pm1$ & $17\pm1$ \\
 & UAL & $18\pm4$ & $0.58\pm0.08$ & $0.24\pm0.04$ & $0.21\pm0.03$ & $0.22\pm0.03$ \\
\addlinespace[2pt]
1 & Published & $38\pm5$ & $15\pm2$ & $12\pm1$ & $11\pm1$ & $11\pm1$ \\
 & Isotropic & $58\pm2$ & $40\pm2$ & $42\pm3$ & $42\pm3$ & $42\pm3$ \\
 & UAL & $19\pm4$ & $1.2\pm0.2$ & $0.51\pm0.07$ & $0.46\pm0.07$ & $0.43\pm0.07$ \\
\addlinespace[2pt]
1.5 & Published & $39\pm5$ & $19\pm2$ & $15\pm2$ & $15\pm2$ & $15\pm2$ \\
 & Isotropic & $80\pm3$ & $58\pm3$ & $61\pm4$ & $62\pm4$ & $62\pm4$ \\
 & UAL & $20\pm4$ & $2.0\pm0.4$ & $0.97\pm0.16$ & $0.99\pm0.23$ & $0.91\pm0.16$ \\
\bottomrule
\end{tabular}
\end{table}
\subsection{\texorpdfstring{Deblurring comparisons}{Deblurring comparisons}}
At $(\bar s,\tau)=(3,0.5)$, UAL increases per-trajectory PSNR for all four
Gaussian-deblurring samplers (Table~\ref{tab:exp-deblurring}).
The gains range from $0.39$ dB for DAPS, with a 95\% interval of
$[0.20,0.61]$, to $0.99$ dB for DiffPIR, with interval $[0.59,1.46]$.
The fraction of images with higher mean per-trajectory PSNR ranges from 66\% for DAPS to 96\% for DiffPIR and BlindDPS\@.
Mean CRPS decreases for all four samplers; Table~\ref{tab:primary-gain-intervals} reports the paired confidence intervals.

In motion deblurring, UAL raises mean per-trajectory PSNR from 17.68 to 28.84 dB for DAPS, from 17.04 to 26.54 dB for PnP-DM, from 11.44 to 14.55 dB for DiffPIR, and from 22.76 to 29.10 dB for Fast Diffusion EM\@. With the true measurement kernel, the published PnP-DM sampler gives 27.65 dB (Table~\ref{tab:pnpdm-motion-oracle}).
 UAL closes 89.5\% of the per-trajectory PSNR gap from Published to this oracle reference, with 1.11 dB remaining. For ensemble-mean PSNR, UAL closes 90.3\% of the gap. For DAPS, the oracle gives 30.65 dB mean per-trajectory PSNR; UAL closes 86.0\% of the Published-to-Oracle PSNR gap, with 1.81 dB remaining. With the true kernel, the published DiffPIR sampler gives 30.90 dB mean per-trajectory PSNR; UAL closes 16.0\% of its Published-to-Oracle PSNR gap (Table~\ref{tab:pnpdm-motion-oracle}).

For BlindDPS in motion deblurring, UAL improves all three ensemble scores and raises ensemble-mean PSNR from 23.21 to 23.33 dB\@.
Mean per-trajectory PSNR is 21.95 dB versus 22.12 dB for Published, with corresponding SSIM values of 0.5449 and 0.5559\@.

For DAPS, PnP-DM, and DiffPIR in motion deblurring, UAL raises mean per-trajectory PSNR on every image.
 Figures~\ref{fig:appendix-gaussian} and~\ref{fig:appendix-motion} show selected individual reconstructions for the same three images. 

\begin{table}[htbp]
\centering
\small
\setlength{\tabcolsep}{3pt}
\caption[Motion deblurring with true-kernel references]{DAPS, PnP-DM, and DiffPIR motion deblurring on 50 images with 12 paired trajectories per image. Published uses the population-mean kernel; Oracle uses the true measurement kernel with the published sampler. UAL uses $P=256$ covariance modes. T/M denotes mean per-trajectory score / score of the ensemble mean. PSNR is in dB; LVS is scaled by $10^3$.}
\label{tab:pnpdm-motion-oracle}
\begin{tabular*}{\linewidth}{@{\extracolsep{\fill}}llrrrrr@{}}
\toprule
Sampler & Treatment & PSNR T/M $\uparrow$ & SSIM T/M $\uparrow$ & CRPS $\downarrow$ & Energy $\downarrow$ & LVS $\downarrow$ \\
\midrule
DAPS & Published & 17.68 / 18.10 & .3303 / .3779 & .15043 & 96.013 & 57.076 \\
 & UAL & 28.84 / 29.58 & .7103 / .7726 & .03502 & 23.388 & 12.774 \\
 & Oracle & 30.65 / 32.50 & .7698 / .8537 & .02265 & 15.048 & 8.783 \\
\addlinespace[2pt]
PnP-DM & Published & 17.04 / 17.90 & .2766 / .3897 & .14485 & 91.587 & 61.032 \\
 & UAL & 26.54 / 29.43 & .5552 / .7628 & .03236 & 20.943 & 14.959 \\
 & Oracle & 27.65 / 30.68 & .5752 / .7816 & .02878 & 17.908 & 13.728 \\
\addlinespace[2pt]
DiffPIR & Published & 11.44 / 12.17 & .0701 / .0877 & .34406 & 172.270 & 179.800 \\
 & UAL & 14.55 / 15.19 & .1440 / .1666 & .24688 & 130.857 & 100.688 \\
 & Oracle & 30.90 / 33.27 & .7936 / .8738 & .02041 & 13.380 & 7.447 \\
\bottomrule
\end{tabular*}
\end{table}
\paragraph{Comparison with blind samplers.} In motion deblurring, we compare DAPS, PnP-DM, and Fast Diffusion EM with UAL against the published blind samplers BlindDPS and Fast Diffusion EM on the same 50 observations, including the same noise realizations. The blind samplers sample or estimate a kernel for each image; the UAL versions use the population-mean kernel and the propagated kernel covariance. Table~\ref{tab:ual-vs-blind} reports paired image-level bootstrap intervals with 20,000 resamples. All intervals favor UAL except per-trajectory SSIM for PnP-DM with UAL against BlindDPS, whose interval contains zero.
\begin{table}[!ht]
\centering
\scriptsize
\setlength{\tabcolsep}{2pt}
\caption{Paired improvements over published blind samplers in motion deblurring, 50 images with 12 trajectories each: mean [95\% paired bootstrap interval] and number of images improved. Values are UAL minus the blind baseline for PSNR and SSIM, and the blind baseline minus UAL for CRPS, energy, and LVS, so positive values denote improvement; CRPS and LVS are scaled by $10^3$. The first method in each pair uses UAL.}
\label{tab:ual-vs-blind}
\resizebox{\linewidth}{!}{%
\begin{tabular}{@{}llllllll@{}}
\toprule
Pair & PSNR M (dB) & PSNR T (dB) & SSIM M & SSIM T & CRPS & Energy & LVS \\
\midrule
DAPS with UAL vs BlindDPS & +6.37 [5.81, 6.95] 50 & +6.72 [6.25, 7.19] 50 & +0.1463 [0.1330, 0.1606] 50 & +0.1544 [0.1385, 0.1713] 50 & +28.90 [25.12, 32.81] 50 & +23.24 [20.35, 26.31] 50 & +5.52 [4.51, 6.58] 48 \\
DAPS with UAL vs Fast Diffusion EM & +6.31 [5.69, 6.87] 48 & +6.08 [5.53, 6.60] 49 & +0.2201 [0.1960, 0.2435] 48 & +0.1868 [0.1621, 0.2114] 48 & +42.87 [37.43, 48.27] 48 & +28.48 [24.98, 31.91] 48 & +11.50 [10.15, 12.83] 49 \\
PnP-DM with UAL vs BlindDPS & +6.22 [5.69, 6.75] 50 & +4.42 [3.98, 4.85] 49 & +0.1364 [0.1237, 0.1500] 50 & $-$0.0007 [$-$0.0199, 0.0190] 23 & +31.55 [28.30, 34.94] 50 & +25.69 [23.11, 28.33] 50 & +3.33 [2.17, 4.45] 42 \\
PnP-DM with UAL vs Fast Diffusion EM & +6.16 [5.58, 6.70] 50 & +3.78 [3.25, 4.29] 47 & +0.2103 [0.1859, 0.2338] 49 & +0.0316 [0.0052, 0.0582] 31 & +45.53 [40.55, 50.38] 50 & +30.92 [27.58, 34.11] 50 & +9.31 [7.73, 10.88] 47 \\
Fast Diffusion EM with UAL vs BlindDPS & +7.52 [6.97, 8.05] 50 & +6.98 [6.46, 7.51] 50 & +0.1845 [0.1717, 0.1981] 50 & +0.2076 [0.1941, 0.2215] 50 & +36.84 [33.50, 40.45] 50 & +28.30 [25.60, 31.16] 50 & +8.66 [7.68, 9.74] 50 \\
Fast Diffusion EM with UAL vs Fast Diffusion EM & +7.45 [6.83, 8.02] 50 & +6.34 [5.71, 6.93] 48 & +0.2583 [0.2329, 0.2832] 50 & +0.2399 [0.2150, 0.2640] 49 & +50.81 [45.32, 56.30] 50 & +33.53 [30.09, 36.86] 50 & +14.64 [13.21, 16.05] 50 \\
\bottomrule
\end{tabular}}
\end{table}

\begin{figure}[p]
\centering
\includegraphics[width=0.95\linewidth]{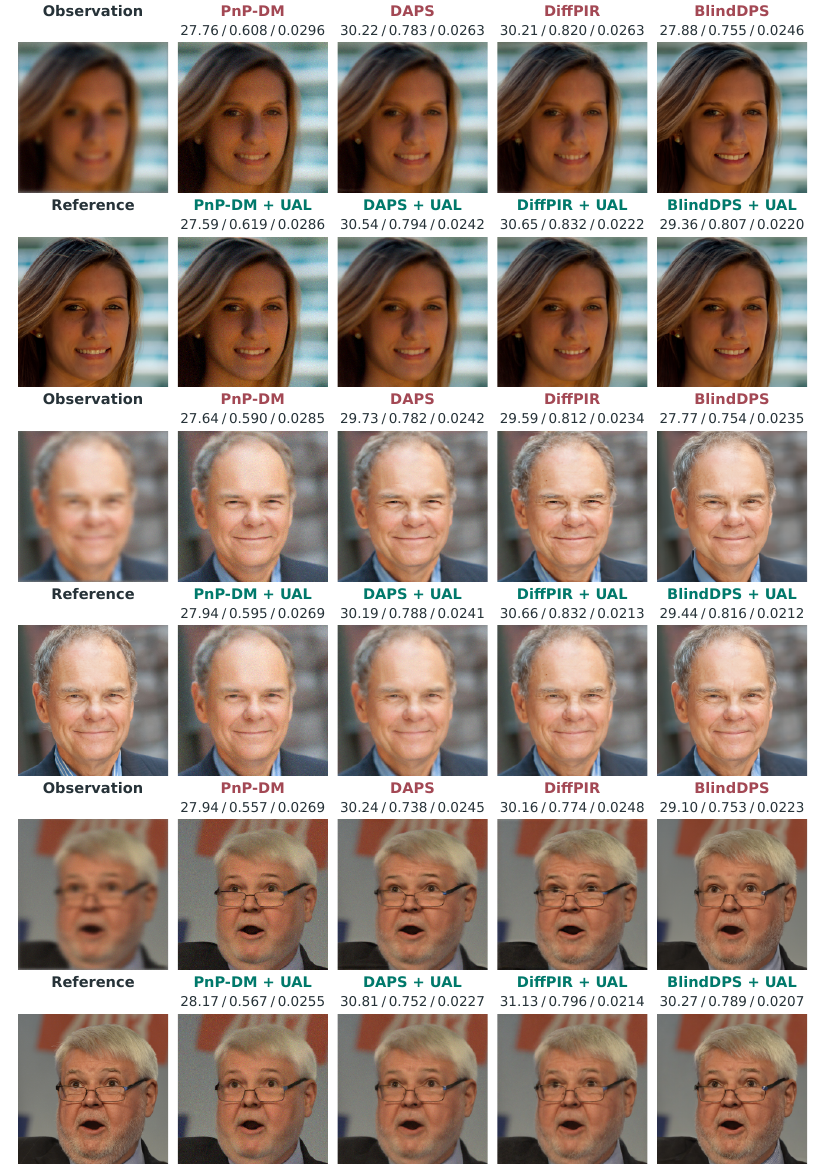}
\caption[Selected Gaussian-deblurring reconstructions]{Selected Gaussian-deblurring reconstructions at $(\bar s,\tau)=(3,0.5)$ pixels. Each row pair shows the observation and reference at left, followed by Published (upper) and UAL (lower) reconstructions from four samplers. Published and UAL use matched random seeds within each sampler. Numbers give PSNR (dB), SSIM, and CRPS, in that order. PSNR and SSIM describe the displayed trajectory; CRPS uses all 12 trajectories.}
\label{fig:appendix-gaussian}
\end{figure}

\begin{figure}[p]
\centering
\includegraphics[width=0.95\linewidth]{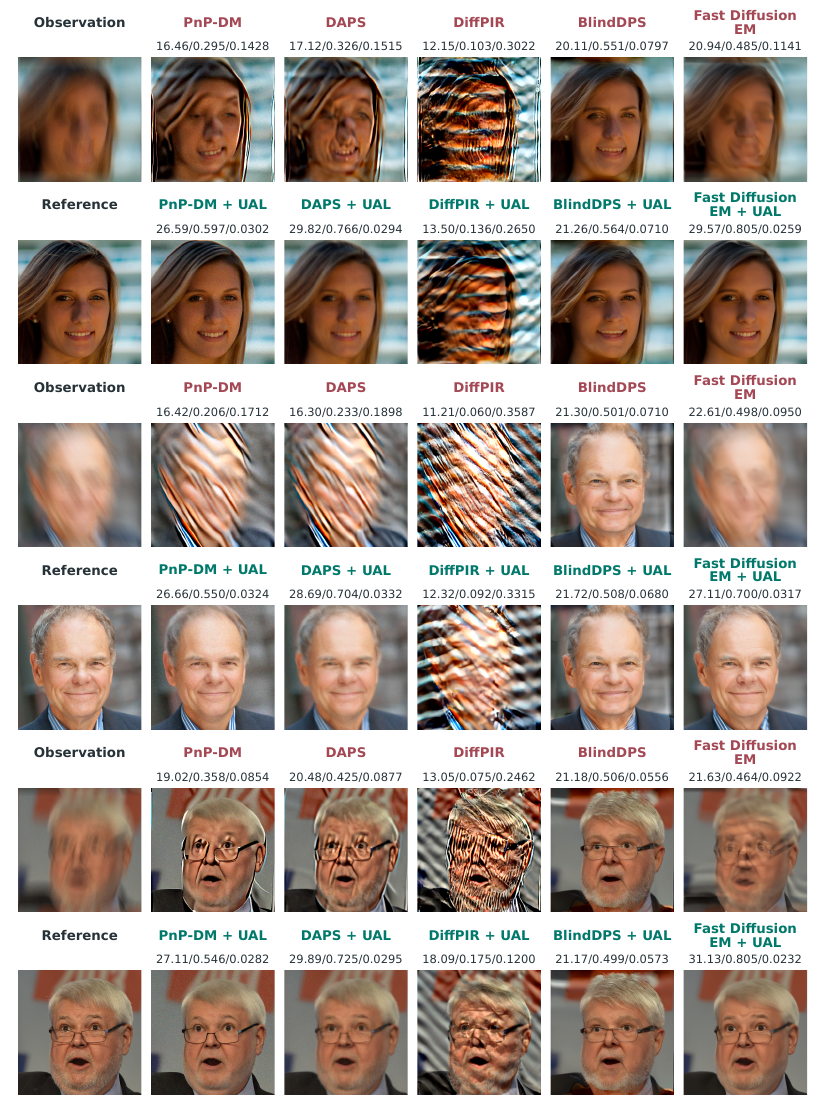}
\caption[Selected motion-deblurring reconstructions]{Selected motion-deblurring reconstructions for the same three images as Figure~\ref{fig:appendix-gaussian}, with $P=256$ covariance modes for UAL\@. Each row pair shows the observation and reference at left, followed by Published (upper) and UAL (lower) reconstructions. For each sampler, the displayed reconstructions are paired by random seed. Numbers give PSNR (dB), SSIM, and CRPS, in that order. PSNR and SSIM describe the displayed trajectory; CRPS uses all 12 trajectories.}
\label{fig:appendix-motion}
\end{figure}

\subsection{Application details}

In the main CT comparison, UAL gains $0.204$ dB in per-trajectory PSNR, with a 95\% interval of
$[0.173,0.236]$ dB, and improves SSIM and all three ensemble scores. The gain persists when both versions use the same mean flat and dark reference frames: UAL raises per-trajectory PSNR from 25.942 to 26.145 dB and lowers CRPS from 0.021955 to 0.021352\@. The paired PSNR gain is 0.2034 dB with a 95\% interval of $[0.1735,0.2356]$, and all 50 detector rows improve. Using those mean reference frames without UAL changes per-trajectory SSIM from 0.42478 to 0.42434, with a 95\% paired interval of $[-0.00070,-0.00021]$ for the change. Figure~\ref{fig:appendix-ct} shows paired reconstructions and differences from the reference.

\begin{figure}[!htbp]
\centering
\setlength{\abovecaptionskip}{4pt}
\includegraphics[width=\linewidth]{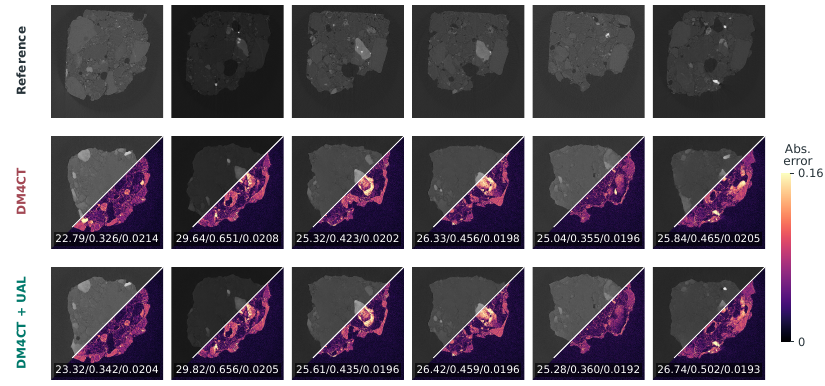}
\caption[Selected CT reconstructions]{Selected CT cases, with reconstruction draws paired by seed. Columns show cases.  Each method panel combines the reconstruction in the upper-left half with its absolute difference from the reference in the lower-right half. Difference maps share one color scale. Numbers give PSNR (dB), SSIM, and CRPS, in that order. PSNR and SSIM describe the displayed draw; CRPS uses all 12 trajectories. DM4CT denotes Published.}
\label{fig:appendix-ct}
\end{figure}

For MRI, mean per-trajectory PSNR improves on 47 of 50 scans, with a mean paired gain of $0.0127$ dB and a 95\% paired-bootstrap confidence interval of $[0.0097,0.0160]$ dB (paired $t=7.83$). The corresponding mean SSIM gain is $0.00031$. All seven reported metrics favor UAL, with their pointwise 95\% paired-bootstrap intervals excluding zero (Table~\ref{tab:mri-gain-intervals}). Figure~\ref{fig:appendix-mri} shows paired magnitude reconstructions.

\begin{table}[htbp]
\centering
\small
\caption{MRI gains and pointwise 95\% paired-bootstrap confidence intervals across 50 scans, each with 12 paired trajectories. Gains are UAL minus Published for PSNR and SSIM, and Published minus UAL for the ensemble scores. T/M follows Table~\ref{tab:exp-deblurring}.}
\label{tab:mri-gain-intervals}
\begin{tabular}{@{}lrr@{}}
\toprule
Metric & Mean gain & 95\% interval \\
\midrule
PSNR T (dB) & 0.0127 & $[0.0097,0.0160]$ \\
PSNR M (dB) & 0.0141 & $[0.0104,0.0183]$ \\
SSIM T & 0.000313 & $[0.000213,0.000420]$ \\
SSIM M & 0.000119 & $[0.000026,0.000231]$ \\
CRPS & 0.0000331 & $[0.0000239,0.0000435]$ \\
Energy & 0.0220 & $[0.0160,0.0290]$ \\
LVS ($\times10^3$) & 0.0116 & $[0.0086,0.0149]$ \\
\bottomrule
\end{tabular}
\end{table}

\begin{figure}[p]
\centering
\setlength{\abovecaptionskip}{4pt}
\includegraphics[width=\linewidth]{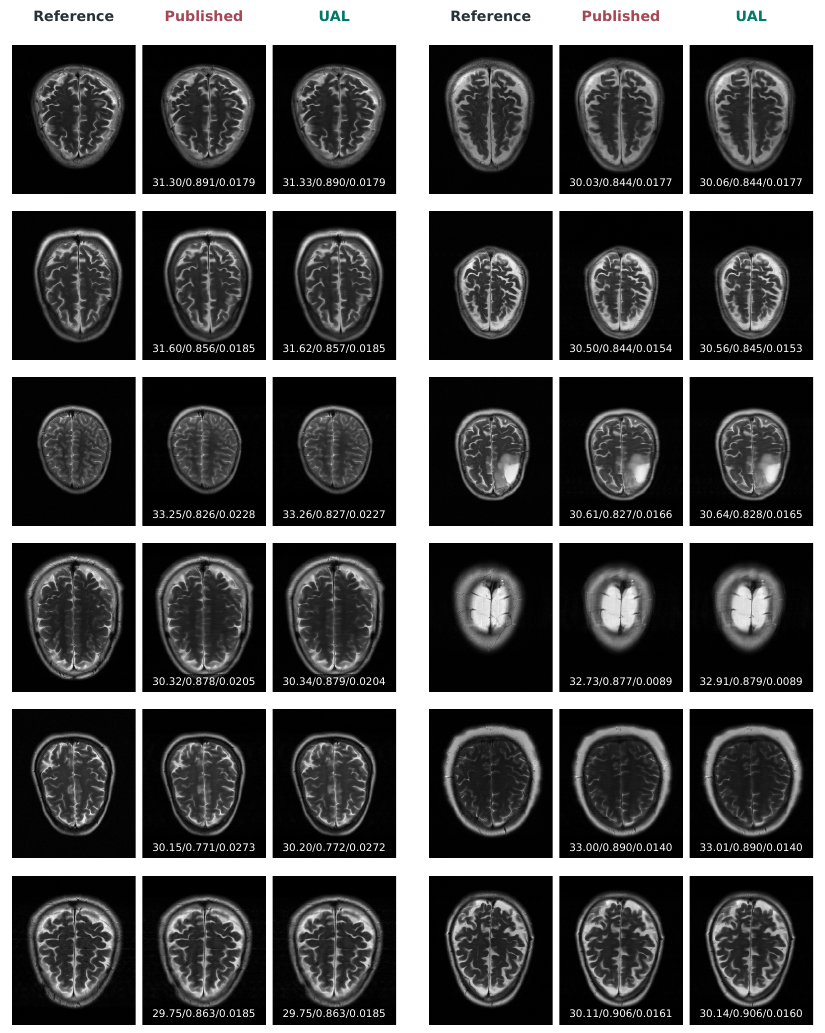}
\caption[Selected MRI reconstructions]{Magnitude reconstructions for selected MRI scans. Each triplet shows the reference, Published Ambient DPS, and Ambient DPS with UAL\@. Both methods use the same random seed for each scan. Numbers give PSNR (dB), SSIM, and CRPS, in that order. PSNR and SSIM describe the displayed draw; CRPS uses all 12 trajectories.}
\label{fig:appendix-mri}
\end{figure}

\begin{samepage}
In microscopy with five uncertain pupil-phase coefficients, UAL increases per-trajectory PSNR by $0.425$ dB and SSIM by $0.0153$.
The mean PSNR gain is positive in each of the four source stacks, ranging
from $0.251$ to $0.724$ dB\@.
Using the ensemble mean, PSNR rises from 33.67 to 34.22 dB\@.
The CRPS, energy, and local variogram scores decrease by
0.00148, 0.901, and 0.000881, respectively.
The uncertainty model and synthetic measurement construction are specified
in Appendix~\ref{app:microscopy}. Figures~\ref{fig:appendix-microscopy} and~\ref{fig:appendix-microscopy-uncertainty} show paired reconstructions and ensemble spread. For the section in Figure~\ref{fig:appendix-microscopy-uncertainty}, UAL reduces the root-mean-square error of the ensemble mean from 0.085 to 0.052, while the root-mean-square sample standard deviation changes from 0.063 to 0.059\@.

\par
\end{samepage}

\begin{figure}[H]
\centering
\setlength{\abovecaptionskip}{4pt}
\includegraphics[width=\linewidth]{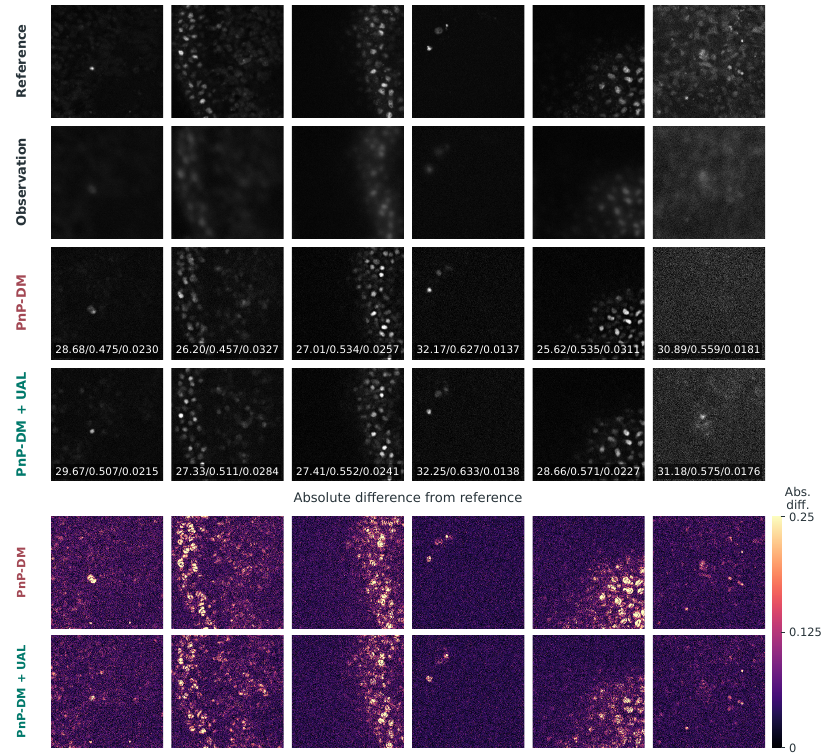}
\caption[Selected fluorescence-microscopy reconstructions]{Selected fluorescence-microscopy sections. Reconstructions are paired by seed; difference maps share one scale. Numbers give PSNR (dB), SSIM, and CRPS\@. PSNR and SSIM use the displayed draw; CRPS uses all 12 trajectories.}
\label{fig:appendix-microscopy}
\end{figure}

\begin{table}[H]
\centering
\small
\caption{Native ultrasound contrast and resolution metrics, averaged over 12 trajectories per phantom. CNR, gCNR, and speckle SNR use the contrast phantom; axial and lateral FWHM use the resolution phantom. FWHM is in mm and lower values are better. Speckle SNR is compared with the reference $1.91$. Bold marks better values.} 
\label{tab:ultrasound-results}
\begin{tabular}{@{}lrr@{}}
\toprule
 Metric & Published & UAL \\
\midrule
CNR (dB), higher & 9.9292 & \textbf{12.3625} \\
gCNR & 0.9141 & \textbf{0.9743} \\
Speckle SNR, target $1.91$ & 2.0616 & \textbf{2.0184} \\
Axial FWHM (mm) & 0.2506 & \textbf{0.2497} \\
Lateral FWHM (mm) & 0.5729 & \textbf{0.5722} \\
\bottomrule
\end{tabular}
\end{table}

On the acquired PICMUS contrast phantom, UAL increases CNR and gCNR\@. On the resolution phantom, both mean FWHM changes are below 0.01 mm (Table~\ref{tab:ultrasound-results}). Figure~\ref{fig:appendix-ultrasound} shows all 12 paired contrast-phantom reconstructions.

\begin{figure}[p]
\centering
\setlength{\abovecaptionskip}{4pt}
\includegraphics[width=\linewidth]{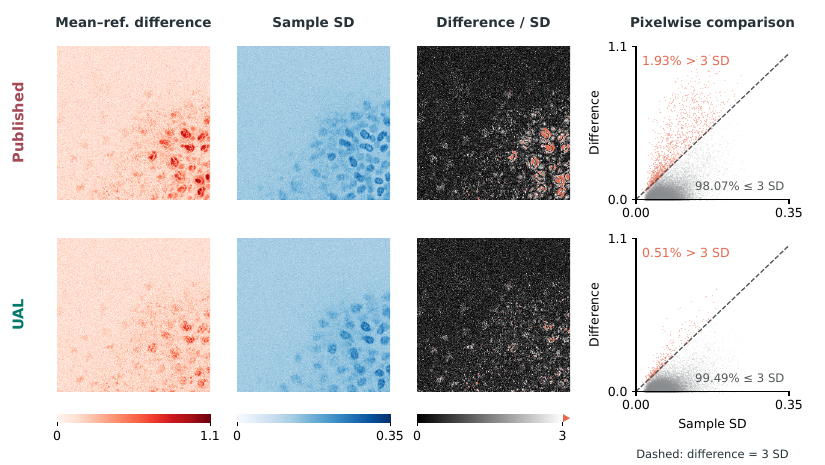}
\caption[Microscopy ensemble spread and reference differences]{
Pixelwise statistics for the microscopy section in Figure~\ref{fig:appendix-microscopy},
using all 12 paired random seeds.
The left columns show the absolute difference between the ensemble mean and reference
and the sample standard deviation (SD).
The right columns show their ratio and a scatter plot of difference against SD\@.
Red marks pixels above the 3-SD threshold, indicated by the dashed line; percentages
give the pixel fractions on either side.
UAL reduces the fraction above this threshold from 1.93\% to 0.51\%, so large
reference differences relative to the ensemble spread are less frequent in this section.
}
\label{fig:appendix-microscopy-uncertainty}
\par\vspace{8pt}
\includegraphics[width=\linewidth]{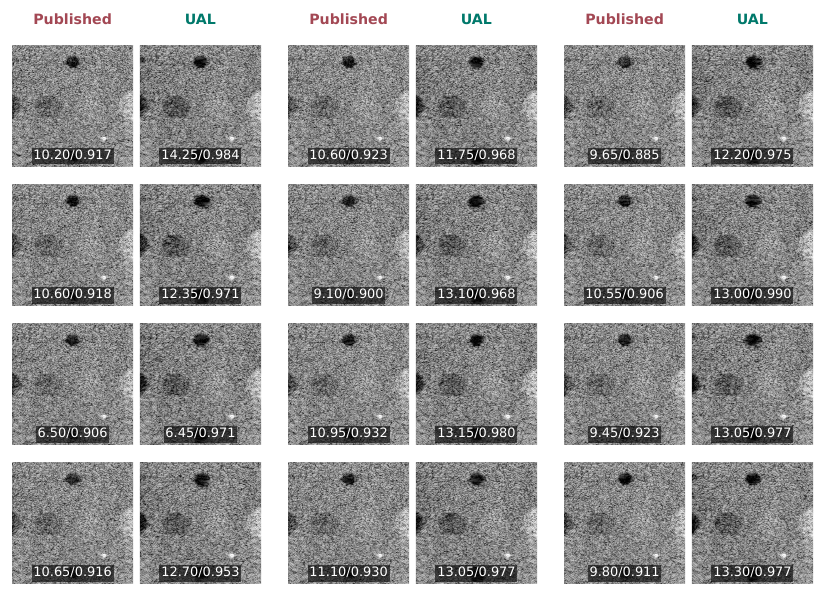}
\caption[All paired ultrasound contrast-phantom reconstructions]{Paired DRUS reconstructions of the acquired PICMUS contrast phantom. Each pair shows Published on the left and UAL on the right, ordered by seed from left to right and top to bottom. Numbers give CNR (dB) and gCNR for the displayed reconstruction. All panels use the same $[-60,0]$ dB B-mode display range.}
\label{fig:appendix-ultrasound}
\end{figure}

\clearpage
\subsection{Covariance-structure control and secondary metrics}\label{app:covariance-control}
For Gaussian deblurring with PnP-DM at $(\bar s,\tau)=(3,0.5)$, structured UAL improves reconstruction over an isotropic likelihood with the same whitened covariance trace.
The isotropic covariance is
\begin{equation}
 S_{\rm iso}(x)=[1+c(x)]\Sigma_e,\qquad
 c(x)=\frac{1}{m}\operatorname{tr}
 \bigl(\Sigma_e^{-1}J_\psi(x)\Sigma_\psi J_\psi(x)^\T\bigr),
 \label{eq:exp-isotropic}
\end{equation}
where $m$ is the measurement dimension. Here $c(x)$ is the average added variance after noise whitening, so $1+c(x)=v_\star(x)$.
Both likelihoods include Gaussian normalization and differentiate their
signal-dependent covariance.
Structured UAL gains $0.455\pm0.197$ dB over the isotropic control in
per-trajectory PSNR and reduces CRPS by $0.00260\pm0.00056$
(mean $\pm$ standard error).
The isotropic control produces little change from the published sampler
(Table~\ref{tab:exp-ablations-full}).
Structured UAL improves per-trajectory PSNR over isotropic inflation on 13 of 16 Gaussian-deblurring images.

In motion deblurring, isotropic inflation improves mean
per-trajectory PSNR by $6.212\pm0.398$ dB over the published sampler.
Structured UAL adds $2.557\pm0.317$ dB in per-trajectory PSNR and reduces CRPS by $0.02706\pm0.00336$ relative to the isotropic control. The corresponding ensemble-mean PSNR margin is $4.431\pm0.325$ dB\@.
Structured UAL improves per-trajectory PSNR over isotropic inflation on 15 of 16 motion-deblurring images.
These results support using the nuisance-induced covariance directions to weight the residual.

In Gaussian deblurring with PnP-DM, recomputing covariance preserves the reconstruction gain when its derivatives are stopped.
The stopped-derivative control removes both covariance terms in (\ref{eq:pal-score}), so it tests their combined effect. Mean per-trajectory PSNR is 26.944 dB for the published sampler, 27.506 dB with covariance derivatives stopped, and 27.475 dB with UAL\@. Fixing covariance at the published sampler's ensemble mean gives 26.964 dB, with UAL higher by $0.512\pm0.113$ dB; uncertainties are standard errors across the 16 images.
 With fixed covariance, ensemble-mean PSNR is 29.088 dB and CRPS is 0.03484, compared with 29.388 dB and 0.03253 for the published sampler. Refreshing covariance with its derivatives stopped gives 30.304 dB and 0.02976, close to 30.259 dB and 0.02981 for UAL\@. 

We also select a fixed noise-variance multiplier by mean per-trajectory PSNR\@.
For the scalar noise-variance multipliers $v=1,1.5,2,3,5$, mean per-trajectory PSNR is 26.944, 26.991, 27.007, 27.029, and 26.995 dB, respectively.
 
 UAL exceeds the selected scalar control by $0.446\pm0.123$ dB, with higher PSNR on 15 of 16 images; the uncertainty is one standard error of the paired image differences. In motion deblurring, the best tested multiplier, $v=48$, gives 23.369 dB mean per-trajectory PSNR and CRPS 0.05773, compared with 26.641 dB and 0.03149 for UAL (Table~\ref{tab:scalar-motion}). UAL gives higher mean per-trajectory PSNR on 15 of 16 images. For DAPS, multiplying the data-term variance $\tau_{\rm D}^2$ by $v=48$ gives the best tested value, 24.413 dB, compared with 28.631 dB for UAL; UAL is higher by $4.22\pm0.27$ dB (mean $\pm$ SE across the 16 images), with higher PSNR on every image.

\begin{table}[htbp]
\centering
\small
\setlength{\tabcolsep}{4pt}
\caption{Fixed data-term variance multipliers in motion deblurring: mean per-trajectory PSNR (dB) on the ablation set. For PnP-DM, $v$ scales the observation-noise variance; for DAPS, it scales the data-fitting variance $\tau_{\rm D}^2$. $v=1$ is the published sampler; UAL gives 26.641 dB for PnP-DM and 28.631 dB for DAPS.}
\label{tab:scalar-motion}
\begin{tabular}{@{}lrrrrrrrrr@{}}
\toprule
$v$ & 1 & 3 & 6 & 12 & 24 & 48 & 96 & 192 & 384 \\
\midrule
PnP-DM & 17.872 & 20.429 & 21.652 & 22.569 & 23.243 & 23.369 & 23.210 & 22.503 & 21.419 \\
DAPS & 18.441 & 21.098 & 22.431 & 23.448 & 24.113 & 24.413 & 24.360 & 23.965 & 23.245 \\
\bottomrule
\end{tabular}
\end{table}

\begin{table}[htbp]
\centering
\small
\caption{Gaussian-deblurring covariance-structure control. T/M follows Table~\ref{tab:exp-deblurring}; PSNR is in dB, and CRPS and LVS are scaled by $10^3$. Lower ensemble scores are better.}
\label{tab:exp-ablations-full}
\begin{tabular}{@{}lrrrrr@{}}
\toprule
Likelihood & PSNR T & PSNR M & CRPS & Energy & LVS \\
\midrule
Published & 26.944 & 29.388 & 32.53 & 21.327 & 15.22 \\
Isotropic & 27.020 & 29.472 & 32.42 & 21.078 & 15.12 \\
UAL & \textbf{27.475} & \textbf{30.259} & \textbf{29.81} & \textbf{19.052} & \textbf{14.18} \\
\bottomrule
\end{tabular}
\end{table}

For PnP-DM in motion deblurring, including the log determinant improves all seven metric means over the quadratic-only control (Table~\ref{tab:normalization-motion}).
 This control uses $P=256$ and the ablation set, as the covariance-structure comparison. We retain $\tfrac12d(x)^\T S(x)^{-1}d(x)$, recomputing and differentiating its covariance at every candidate. This is the penalized profile energy described in Appendix~\ref{app:schur}. UAL has higher mean per-trajectory PSNR on 15 of 16 images, with a paired advantage of $0.0257\pm0.0066$ dB (mean $\pm$ SE across images).

\begin{table}[!ht]
\centering
\small
\caption[Likelihood normalization in motion deblurring]{Likelihood normalization in motion deblurring with PnP-DM at $P=256$. Values are means $\pm$ one SE across the ablation set. Quadratic only omits the log determinant while retaining covariance recomputation and differentiation. T/M follows Table~\ref{tab:exp-deblurring}.}
\label{tab:normalization-motion}
\begin{tabular}{@{}lrrr@{}}
\toprule
Metric & Published & Quadratic only & UAL \\
\midrule
PSNR T (dB) $\uparrow$ & $17.8723\pm0.7269$ & $26.6157\pm0.3040$ & $26.6414\pm0.3013$ \\
PSNR M (dB) $\uparrow$ & $18.7901\pm0.7827$ & $29.5099\pm0.3551$ & $29.5335\pm0.3564$ \\
SSIM T $\uparrow$ & $0.30950\pm0.02466$ & $0.55399\pm0.00805$ & $0.55412\pm0.00806$ \\
SSIM M $\uparrow$ & $0.43340\pm0.03894$ & $0.76189\pm0.01441$ & $0.76191\pm0.01444$ \\
CRPS ($\times10^3$) $\downarrow$ & $131.533\pm19.396$ & $31.534\pm1.491$ & $31.489\pm1.496$ \\
Energy $\downarrow$ & $83.874\pm9.441$ & $20.406\pm0.882$ & $20.351\pm0.884$ \\
LVS ($\times10^3$) $\downarrow$ & $54.395\pm7.343$ & $14.790\pm0.572$ & $14.766\pm0.564$ \\
\bottomrule
\end{tabular}
\end{table}

For Fast Diffusion EM in motion deblurring, UAL improves every reported metric on all 50 images relative to covariance weighting alone (Table~\ref{tab:fdem-control}).

\begin{table}[!ht]
\centering
\small
\caption[Fast Diffusion EM covariance control]{Motion deblurring with Fast Diffusion EM on 50 images with 12 paired trajectories per image and $P=256$. All three adapted arms use the population-mean kernel and propagated covariance; covariance weighting holds the covariance fixed during differentiation and omits the log determinant, and the quadratic-only arm recomputes and differentiates the covariance but omits the log determinant. T/M follows Table~\ref{tab:exp-deblurring}.}
\label{tab:fdem-control}
\begin{tabular}{@{}lrrrr@{}}
\toprule
Metric & Published & Covariance weighting & Quadratic only & UAL \\
\midrule
PSNR T (dB) $\uparrow$ & 22.76 & 23.80 & 28.49 & 29.10 \\
PSNR M (dB) $\uparrow$ & 23.27 & 25.30 & 30.64 & 30.72 \\
SSIM T $\uparrow$ & .5236 & .5663 & .7537 & .7635 \\
SSIM M $\uparrow$ & .5525 & .6243 & .8131 & .8108 \\
CRPS $\downarrow$ & .07789 & .05472 & .02731 & .02708 \\
Energy $\downarrow$ & 51.865 & 37.077 & 18.360 & 18.332 \\
LVS ($\times10^3$) $\downarrow$ & 24.271 & 17.248 & 9.449 & 9.633 \\
\bottomrule
\end{tabular}
\end{table}

\subsection{\texorpdfstring{Covariance spectra at final reconstructions}{Covariance spectra at final reconstructions}}
\label{app:covariance-diagnostics}

We evaluate UAL's covariance at the final Published, Isotropic, and UAL reconstructions, using the same parameter mean and covariance for all three. 
With $W=\Sigma_e^{-1/2}U$ and $G=I_r+W^\T W$, we report the average added variance after noise whitening, $c(x)=(\tr G-r)/m$, and the two leading squared singular values $s_1^2$ and $s_2^2$ of $W$.
The trace-matched variance multiplier is $1+c(x)$.

Each quantity is evaluated separately for each trajectory and averaged within
an image; the table reports means and standard errors across images.
Singular values are ordered within each trajectory before aggregation.

\begin{table}[htbp]
\centering
\small
\caption{Final-state covariance diagnostics, reported as mean $\pm$ standard error across images. The leading squared singular values are scaled by $10^{-4}$.}
\label{tab:covariance-diagnostics}
\begin{tabular}{@{}llrrr@{}}
\toprule
Task & Candidate & $c(x)$ & $s_1^2/10^4$ & $s_2^2/10^4$ \\
\midrule
Gaussian blur & Published & $0.189\pm0.015$ & $2.617\pm0.266$ & $1.098\pm0.048$ \\
 & Isotropic & $0.185\pm0.014$ & $2.553\pm0.248$ & $1.088\pm0.047$ \\
 & UAL & $0.197\pm0.010$ & $2.516\pm0.134$ & $1.357\pm0.065$ \\
\midrule
Motion blur & Published & $20.085\pm3.532$ & $176.389\pm44.334$ & $57.841\pm8.578$ \\
 & Isotropic & $3.244\pm0.328$ & $34.666\pm4.084$ & $12.922\pm1.438$ \\
 & UAL & $4.626\pm0.370$ & $38.296\pm3.533$ & $19.090\pm1.942$ \\
\bottomrule
\end{tabular}
\end{table}

The Gaussian covariance concentrates its added variance in two directions, so a small average variance increase can accompany large directional variances. In motion deblurring, the added covariance trace differs across the final candidate reconstructions. These diagnostics use measurement-noise whitening; the reconstruction-risk Gram matrix $\Lambda$ in Appendix~\ref{app:linear-risk} uses predictive-covariance whitening.

\subsection{Covariance scale and matched uncertainty}
\begin{figure}[!ht]
\centering
\includegraphics[width=\linewidth]{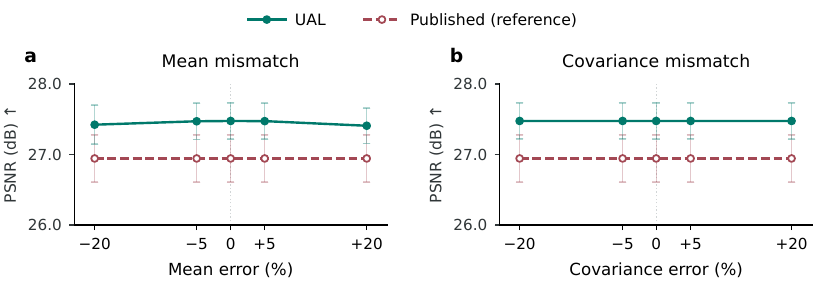}
\caption{PnP-DM sensitivity to nuisance moments in Gaussian deblurring. (a) Mean-width error with correct covariance. (b) Covariance-scale error with correct mean. Observations stay fixed; zero denotes the correct moment. Curves show mean per-trajectory PSNR $\pm$ one SE across the ablation set. The dashed Published reference uses the correct mean in both panels.  Mean PSNR changes little over the tested ranges.}
\label{fig:covariance-scale}
\end{figure}
\paragraph{\texorpdfstring{Mean and covariance mismatch.}{Mean and covariance mismatch.}}
With observations fixed and covariance correct, shifting the mean width by approximately $\pm5\%$ and $\pm20\%$ gives mean per-trajectory PSNR from 27.408 to 27.475 dB (Figure~\ref{fig:covariance-scale}a).
CRPS is 0.03099 at mean width 2.4 pixels and 0.02981 at the correct mean of 3 pixels.
We next perturb the covariance used by UAL while keeping the generating covariance and observations fixed. Gaussian deblurring uses the reference $\Sigma_{\psi,\rm ref}=0.25I_2$ in squared-pixel units. Each perturbed covariance is a scalar multiple of its task's reference. Table~\ref{tab:covariance-scale} reports paired changes from UAL with zero covariance error. We also test approximately doubled nuisance covariance.
Across the nonzero Gaussian covariance settings, mean per-trajectory PSNR remains 27.475 dB to three decimals. For motion deblurring, removing the nuisance covariance gives the published sampler's PSNR of 17.872 dB; doubling it gives 26.644 dB, compared with 26.641 dB at the reference covariance.

Over the approximately $\pm20\%$ covariance-error range, mean per-trajectory PSNR changes by less than $0.001$ dB in Gaussian deblurring and at most $0.0641$ dB in motion deblurring. The largest absolute relative CRPS changes are $0.0011\%$ and $0.0572\%$, respectively. One trajectory accounts for most of the motion PSNR change; all trajectories remain in the paired summaries.

\begin{table}[htbp]
\centering
\small
\setlength{\tabcolsep}{5pt}
\caption{Covariance-scale mismatch with fixed observations. Each task uses the ablation set. Signed changes (perturbed minus UAL at zero covariance error) are reported as mean $\pm$ SE across images. PSNR is in dB\@. Relative CRPS changes divide the paired change and its SE by the reference mean CRPS\@.}
\label{tab:covariance-scale}
\begin{tabular}{@{}llrrr@{}}
\toprule
Task & Covariance error & $\Delta$PSNR T & $\Delta$PSNR M & $\Delta$CRPS (\%) \\
\midrule
Gaussian & $-20.1\%$ & $+0.0002\pm0.0002$ & $+0.0001\pm0.0002$ & $-0.0003\pm0.0008$ \\
Gaussian & $-4.9\%$ & $+0.0002\pm0.0002$ & $+0.0002\pm0.0001$ & $-0.0006\pm0.0007$ \\
Gaussian & $+5.1\%$ & $+0.0002\pm0.0001$ & $+0.0002\pm0.0001$ & $-0.0010\pm0.0006$ \\
Gaussian & $+19.9\%$ & $+0.0001\pm0.0001$ & $+0.0002\pm0.0002$ & $-0.0011\pm0.0008$ \\
Gaussian & $+99.9\%$ & $+0.0001\pm0.0001$ & $+0.0002\pm0.0002$ & $-0.0009\pm0.0010$ \\
\midrule
Motion & $-20.0\%$ & $+0.0609\pm0.0633$ & $+0.0257\pm0.0291$ & $+0.0572\pm0.0392$ \\
Motion & $-5.0\%$ & $-0.0005\pm0.0002$ & $-0.0006\pm0.0002$ & $+0.0040\pm0.0024$ \\
Motion & $+5.0\%$ & $+0.0004\pm0.0002$ & $+0.0005\pm0.0003$ & $-0.0043\pm0.0026$ \\
Motion & $+20.0\%$ & $+0.0641\pm0.0634$ & $+0.0308\pm0.0292$ & $+0.0252\pm0.0366$ \\
Motion & $+100.0\%$ & $+0.0030\pm0.0034$ & $+0.0108\pm0.0078$ & $-0.0495\pm0.0404$ \\
\bottomrule
\end{tabular}
\end{table}

\paragraph{Matched uncertainty levels.}
\label{app:matched-uncertainty}
We next vary the generating width uncertainty and the uncertainty supplied
to UAL together, using $\mu=(3,3)$ and $\Sigma_\psi=\tau^2I_2$.
Image identities, width percentiles, and noise realizations are coupled across levels; the observations change with the blur widths.
Relative to Published, UAL's mean per-trajectory PSNR gain is $0.043\pm0.067$ dB at $\tau=0.25$.
It grows to $1.746\pm0.397$ dB at $\tau=1.5$, while the reduction in
per-trajectory MSE grows from $0.000099$ to $0.007808$
(Table~\ref{tab:matched-uncertainty}).
At the largest tested uncertainty, UAL reduces average
per-trajectory MSE by approximately 38\%.
Table~\ref{tab:matched-uncertainty} also lists the trace-matched isotropic control of (\ref{eq:exp-isotropic}). UAL gives lower mean MSE, higher mean per-trajectory PSNR, and lower CRPS than isotropic inflation at every tested level.

The comparison shows how the empirical benefit changes when generating uncertainty and the uncertainty used by UAL vary together; the conditional crossover in Appendix~\ref{app:linear-risk} instead holds the nuisance model \(q\) fixed and varies the nuisance realization.

\begin{table}[!ht]
\centering
\small
\setlength{\tabcolsep}{2.5pt}
\caption{Matched generating and UAL width uncertainty, on the ablation set. MSE and PSNR (dB) are averaged per trajectory; CRPS is scaled by $10^3$. Isotropic is the trace-matched control of (\ref{eq:exp-isotropic}), run at each level.}
\label{tab:matched-uncertainty}
\begin{tabular*}{\linewidth}{@{\extracolsep{\fill}}rrrrrrrrrr@{}}
\toprule
 & \multicolumn{3}{c}{MSE $\downarrow$} & \multicolumn{3}{c}{PSNR $\uparrow$} & \multicolumn{3}{c}{CRPS $\downarrow$} \\
\cmidrule(lr){2-4}\cmidrule(lr){5-7}\cmidrule(lr){8-10}
$\tau$ (pixels) & Published & Isotropic & UAL & Published & Isotropic & UAL & Published & Isotropic & UAL \\
\midrule
.25 & .007301 & .007312 & .007202 & 27.508 & 27.502 & 27.551 & 29.93 & 29.97 & 29.23 \\
.50 & .008459 & .008263 & .007348 & 26.944 & 27.020 & 27.475 & 32.53 & 32.42 & 29.81 \\
1.00 & .014066 & .010826 & .008800 & 25.368 & 26.018 & 26.818 & 40.95 & 38.24 & 33.26 \\
1.50 & .020463 & .012810 & .012655 & 24.043 & 25.318 & 25.789 & 49.58 & 43.24 & 39.90 \\
\bottomrule
\end{tabular*}
\end{table}

\subsection{Full Gaussian-width results}
\label{app:width-results}
Across the three Gaussian-width settings, UAL gives higher mean per-trajectory PSNR and lower CRPS than Published for all four samplers (Table~\ref{tab:gaussian-widths}). These comparisons use all 50 images and 12 paired trajectories per image; Table~\ref{tab:exp-deblurring} gives all seven scores for the main setting; Table~\ref{tab:width-full-second} gives them for the other two.

\begin{table}[!ht]
\centering
\small
\setlength{\tabcolsep}{3.5pt}
\caption{Gaussian-width effects on reconstruction and ensemble prediction on 50 images, with 12 paired trajectories per image. Entries are per-trajectory PSNR (dB) and CRPS ($\times10^3$) for the published sampler and UAL; $(\bar s,\tau)$ is in pixels. Bold marks better UAL values at the displayed precision.}
\label{tab:gaussian-widths}
\begin{tabular}{@{}llrrrrrrrr@{}}
\toprule
 & & \multicolumn{2}{c}{DAPS} & \multicolumn{2}{c}{PnP-DM} & \multicolumn{2}{c}{DiffPIR} & \multicolumn{2}{c}{BlindDPS} \\
\cmidrule(lr){3-4}\cmidrule(lr){5-6}\cmidrule(lr){7-8}\cmidrule(lr){9-10}
$(\bar s,\tau)$ & Version & PSNR $\uparrow$ & CRPS $\downarrow$ & PSNR $\uparrow$ & CRPS $\downarrow$ & PSNR $\uparrow$ & CRPS $\downarrow$ & PSNR $\uparrow$ & CRPS $\downarrow$ \\
\midrule
 & Published & 29.49 & 28.71 & 26.97 & 32.09 & 29.13 & 29.48 & 28.31 & 26.98 \\
\rowcolor{black!6}
\cellcolor{white}\raisebox{0.48\normalbaselineskip}[0pt][0pt]{$(3,0.5)$} & UAL & \textbf{29.88} & \textbf{27.96} & \textbf{27.50} & \textbf{29.56} & \textbf{30.12} & \textbf{25.83} & \textbf{29.21} & \textbf{24.76} \\
\addlinespace[2pt]
 & Published & 28.16 & 32.89 & 26.35 & 35.58 & 27.52 & 32.67 & 26.96 & 32.38 \\
\rowcolor{black!6}
\cellcolor{white}\raisebox{0.48\normalbaselineskip}[0pt][0pt]{$(5,0.5)$} & UAL & \textbf{28.65} & \textbf{32.22} & \textbf{26.84} & \textbf{33.93} & \textbf{28.60} & \textbf{28.64} & \textbf{27.37} & \textbf{29.79} \\
\addlinespace[2pt]
 & Published & 28.05 & 35.42 & 25.80 & 38.56 & 26.70 & 43.55 & 28.43 & 27.01 \\
\rowcolor{black!6}
\cellcolor{white}\raisebox{0.48\normalbaselineskip}[0pt][0pt]{$(3,1)$} & UAL & \textbf{29.56} & \textbf{30.13} & \textbf{27.16} & \textbf{31.03} & \textbf{29.63} & \textbf{29.07} & \textbf{29.00} & \textbf{25.73} \\
\bottomrule
\end{tabular}
\end{table}

\begin{table}[!ht]
\centering
\small
\setlength{\tabcolsep}{4pt}
\caption{Full Gaussian-width results for $(5,0.5)$ and $(3,1)$ pixels on 50 images, with 12 paired trajectories per image. T/M follows Table~\ref{tab:exp-deblurring}; PSNR is in dB and LVS is scaled by $10^3$. Bold marks better UAL values at the displayed precision.}
\label{tab:width-full-second}
\begin{tabular}{@{}llrrrrr@{}}
\toprule
Sampler & Version & PSNR T/M $\uparrow$ & SSIM T/M $\uparrow$ & CRPS $\downarrow$ & Energy $\downarrow$ & LVS $\downarrow$ \\
\midrule
\multicolumn{7}{l}{$(5,0.5)$ pixels} \\
DAPS & Published & 28.16 / 29.32 & .6888 / .7685 & .03289 & 22.999 & 12.581 \\
\rowcolor{black!6}
 & UAL & \textbf{28.65} / \textbf{29.67} & \textbf{.6949} / \textbf{.7702} & \textbf{.03222} & \textbf{22.512} & 12.631 \\
\addlinespace[1pt]
PnP-DM & Published & 26.35 / 28.55 & .5325 / .7290 & .03558 & 23.459 & 16.258 \\
\rowcolor{black!6}
 & UAL & \textbf{26.84} / \textbf{29.12} & \textbf{.5366} / \textbf{.7338} & \textbf{.03393} & \textbf{22.008} & \textbf{15.759} \\
\addlinespace[1pt]
DiffPIR & Published & 27.52 / 29.11 & .7028 / .7712 & .03267 & 22.846 & 11.400 \\
\rowcolor{black!6}
 & UAL & \textbf{28.60} / \textbf{30.07} & \textbf{.7278} / \textbf{.7891} & \textbf{.02864} & \textbf{20.396} & \textbf{10.456} \\
\addlinespace[1pt]
BlindDPS & Published & 26.96 / 28.78 & .6732 / .7436 & .03238 & 23.167 & 11.609 \\
\rowcolor{black!6}
 & UAL & \textbf{27.37} / \textbf{29.45} & \textbf{.6955} / \textbf{.7714} & \textbf{.02979} & \textbf{21.120} & \textbf{10.365} \\
\addlinespace[1pt]
\midrule
\multicolumn{7}{l}{$(3,1)$ pixels} \\
DAPS & Published & 28.05 / 29.02 & .7113 / .7839 & .03542 & 25.410 & 13.712 \\
\rowcolor{black!6}
 & UAL & \textbf{29.56} / \textbf{30.56} & \textbf{.7347} / \textbf{.8041} & \textbf{.03013} & \textbf{20.679} & \textbf{11.525} \\
\addlinespace[1pt]
PnP-DM & Published & 25.80 / 28.13 & .5388 / .7370 & .03856 & 25.832 & 18.788 \\
\rowcolor{black!6}
 & UAL & \textbf{27.16} / \textbf{29.96} & \textbf{.5639} / \textbf{.7683} & \textbf{.03103} & \textbf{19.965} & \textbf{14.864} \\
\addlinespace[1pt]
DiffPIR & Published & 26.70 / 27.78 & .7114 / .7667 & .04355 & 31.323 & 18.119 \\
\rowcolor{black!6}
 & UAL & \textbf{29.63} / \textbf{30.67} & \textbf{.7684} / \textbf{.8130} & \textbf{.02907} & \textbf{20.319} & \textbf{10.210} \\
\addlinespace[1pt]
BlindDPS & Published & 28.43 / 30.37 & .7381 / .8015 & .02701 & 19.286 & 9.725 \\
\rowcolor{black!6}
 & UAL & \textbf{29.00} / \textbf{30.88} & \textbf{.7622} / \textbf{.8202} & \textbf{.02573} & \textbf{18.312} & \textbf{9.155} \\
\addlinespace[1pt]
\bottomrule
\end{tabular}
\end{table}

\subsection{\texorpdfstring{Covariance rank and computational cost}{Covariance rank and computational cost}}

\label{app:runtime}

Table~\ref{tab:runtime} reports mean wall time per trajectory on NVIDIA L40S GPUs, with one reconstruction job per GPU\@. DAPS deblurring times cover the sampling call, excluding setup. DiffPIR includes process initialization and output writing; DRUS excludes model loading, singular value decomposition, and covariance preparation.
 For Figure~\ref{fig:rank-tradeoff}, reconstruction scores use the ablation set.
Table~\ref{tab:motion-rank-comparison} compares Published and UAL at $P=256$ and $P=2048$ on the ablation set to assess sensitivity to rank. At both ranks, UAL improves mean per-trajectory PSNR for the three non-blind samplers and lowers the mean CRPS, energy, and local variogram scores for all four samplers.

\begin{table}[!ht]
\centering
\small
\setlength{\tabcolsep}{3pt}
\caption[Motion-deblurring covariance ranks]{Motion deblurring with the published samplers and UAL\@. All treatments use the ablation set. T/M follows Table~\ref{tab:exp-deblurring}; PSNR is in dB\@. CRPS, energy, and LVS are fair ensemble scores, with LVS scaled by $10^3$.}
\label{tab:motion-rank-comparison}
\begin{tabular*}{\linewidth}{@{\extracolsep{\fill}}llrrrrr@{}}
\toprule
Sampler & Treatment & PSNR T/M $\uparrow$ & SSIM T/M $\uparrow$ & CRPS $\downarrow$ & Energy $\downarrow$ & LVS $\downarrow$ \\
\midrule
DAPS & Published & 18.44 / 18.89 & .3677 / .4190 & 0.13913 & 88.578 & 52.203 \\
 & UAL, $P=256$ & 28.63 / 29.34 & .7030 / .7660 & 0.03572 & 24.046 & 13.079 \\
 & UAL, $P=2048$ & 28.56 / 29.29 & .7002 / .7659 & 0.03568 & 23.962 & 12.990 \\
\addlinespace[2pt]
PnP-DM & Published & 17.87 / 18.79 & .3095 / .4334 & 0.13153 & 83.874 & 54.395 \\
 & UAL, $P=256$ & 26.64 / 29.53 & .5541 / .7619 & 0.03149 & 20.351 & 14.766 \\
 & UAL, $P=2048$ & 26.25 / 29.31 & .5420 / .7517 & 0.03209 & 20.788 & 14.997 \\
\addlinespace[2pt]
DiffPIR & Published & 11.72 / 12.50 & .0853 / .1057 & 0.32936 & 165.774 & 170.151 \\
 & UAL, $P=256$ & 15.19 / 15.87 & .1681 / .1930 & 0.22664 & 120.672 & 90.039 \\
 & UAL, $P=2048$ & 16.78 / 17.52 & .2184 / .2507 & 0.18415 & 101.031 & 68.612 \\
\addlinespace[2pt]
BlindDPS & Published & 22.50 / 23.61 & .5747 / .6418 & 0.06109 & 44.357 & 17.561 \\
 & UAL, $P=256$ & 22.20 / 23.62 & .5593 / .6391 & 0.06048 & 42.963 & 16.822 \\
 & UAL, $P=2048$ & 22.38 / 23.78 & .5616 / .6392 & 0.05955 & 42.368 & 16.759 \\
\bottomrule
\end{tabular*}
\end{table}

\begin{table}[htbp]
\centering
\small
\caption{Mean wall time per trajectory on NVIDIA L40S GPUs with one reconstruction job per GPU.} 
\label{tab:runtime}
\begin{tabular}{@{}lrr@{}}
\toprule
Sampler / task & Published (s) & UAL (s) \\
\midrule
\multicolumn{3}{l}{Gaussian deblurring} \\
PnP-DM & 77.3 & 76.9 \\
DAPS & 39.4 & 116.6 \\
DiffPIR & 6.0 & 6.1 \\
BlindDPS & 129.6 & 371.8 \\
\midrule
\multicolumn{3}{l}{Motion deblurring} \\
PnP-DM & 31.8 & 89.5 \\
DAPS & 21.8 & 155.4 \\
DiffPIR & 6.1 & 6.4 \\
BlindDPS & 126.4 & 70.7 \\
\midrule
\multicolumn{3}{l}{Imaging applications} \\
DAPS (CT) & 429.8 & 436.2 \\
Ambient DPS (MRI) & 52.9 & 103.2 \\
DRUS (ultrasound) & 74.3 & 73.9 \\
PnP-DM (microscopy) & 30.5 & 83.5 \\
\bottomrule
\end{tabular}
\end{table}

\end{document}